\newtheorem{proposition}{\protect\propositionname}
\providecommand{\propositionname}{Proposition}
\providecommand{\definitionname}{Definition}
\newtheorem{lemma}{\protect\lemmaname}
\providecommand{\lemmaname}{Lemma}
\newtheorem{theorem}{\protect\theoremname}
\providecommand{\theoremname}{Theorem}
\providecommand{\corollaryname}{Corollary}
\newtheorem{remark}{\protect\remarkname}
\providecommand{\remarkname}{Remark}
\providecommand{\assumptionname}{Assumption}
\definecolor{lightblack}{RGB}{48,122,182}
\definecolor{mygray}{gray}{.8}
\def\ie{\emph{i.e.}}
\begin{document}
	
\title{Temporal Visual Semantics-Induced Human Motion Understanding with Large Language Models}

	\author{Zheng~Xing and
	Weibing~Zhao

	\thanks{\indent Weibing~Zhao is with Guangdong Laboratory of Machine Perception and Intelligent Computing, Shenzhen MSU-BIT University, China. (E-mail: weibingzhao@smbu.edu.cn)}

\thanks{\indent Zheng Xing is with the College of Computer Science and Software Engineering, Shenzhen University, Shenzhen 518060, China.}

} %

	\markboth{Class Files,~Vol.~X, No.~X, August~X}%
	{Shell \MakeLowercase{\textit{et al.}}: Bare Demo of IEEEtran.cls for IEEE Journals}
	
	\maketitle
	\begin{abstract}
Unsupervised human motion segmentation (HMS) can be effectively achieved using subspace clustering techniques. However, traditional methods overlook the role of temporal semantic exploration in HMS. This paper explores the use of temporal vision semantics (TVS) derived from human motion sequences, leveraging the image-to-text capabilities of a large language model (LLM) to enhance subspace clustering performance. The core idea is to extract textual motion information from consecutive frames via LLM and incorporate this learned information into the subspace clustering framework. The primary challenge lies in learning TVS from human motion sequences using LLM and integrating this information into subspace clustering. To address this, we determine whether consecutive frames depict the same motion by querying the LLM and subsequently learn temporal neighboring information based on its response. We then develop a TVS-integrated subspace clustering approach, incorporating subspace embedding with a temporal regularizer that induces each frame to share similar subspace embeddings with its temporal neighbors. Additionally, segmentation is performed based on subspace embedding with a temporal constraint that induces the grouping of each frame with its temporal neighbors. We also introduce a feedback-enabled framework that continuously optimizes subspace embedding based on the segmentation output. Experimental results demonstrate that the proposed method outperforms existing state-of-the-art approaches on four benchmark human motion datasets.

	\end{abstract}
	\begin{IEEEkeywords}
		 Human motion segmentation, temporal vision semantics, subspace embedding, temporal neighbors.
	\end{IEEEkeywords}

	\section{Introduction}
	\label{Sec:Intro}
		\begin{figure}[t]
		%	\vspace{-0.15in}
		\setlength{\abovecaptionskip}{0pt}  %调整标题与表距离
		\setlength{\belowcaptionskip}{0pt}
		\centering{}
		\includegraphics[width=1\columnwidth]{./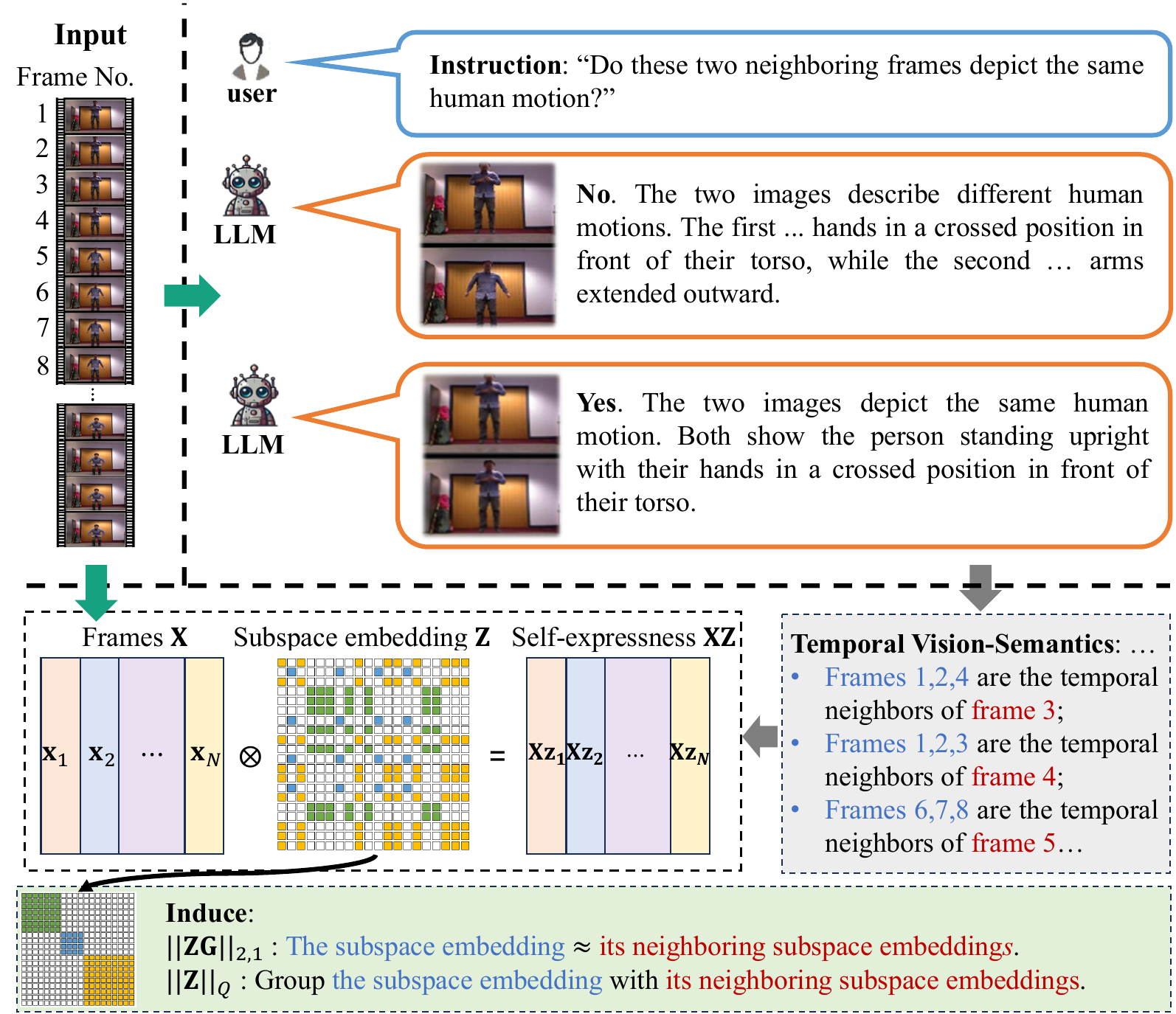}
			\vspace{0.001in}
		\caption{Framework of the proposed method.
		}
		\label{fig:background}
		%	\vspace{-0.1in}
	\end{figure}

\IEEEPARstart{H}{uman} motion segmentation (HMS) has attracted significant attention in both industry and academic research due to its wide-ranging applications in video retrieval, virtual reality, and intelligent surveillance, particularly in human motion analysis \cite{zhou2012hierarchical, keuper2018motion, poppe2007vision}. The primary goal of unsupervised HMS is to partition frame sequences depicting human actions into non-overlapping, internally consistent groups without the need for training, serving as a preprocessing step for motion-related analysis tasks \cite{lin2016movement}. However, unsupervised HMS faces the challenge of motion primitive ambiguity due to temporal variability across different actions \cite{tierney2014subspace,li2015temporal,wang2018low,zhou2022consistency,bai2022human}.

Subspace clustering is a well-established strategy for HMS, aiming to partition a human motion sequence into distinct groups based on the assumption that the frames originate from multiple subspaces, with frames depicting the same motion belonging to the same subspace \cite{cao2015constrained, ng2002spectral, rahmani2017innovation,xing2025hmm,xing2023clustering,XingChen:J23ar}. In recent years, a popular and effective approach is to first embed the human motion frames into multiple subspaces to learn the subspace structure of the data, and then apply traditional clustering algorithms to the subspace embeddings \cite{wang2023self,wang2023bi,chen2023fast,tang2023affine,chen2024double,Trajmat:J25,Xing:C22,xing2023blockdiagonal,XinChe:C22}.

Human motion sequences inherently contain temporal information, which is crucial for HMS. As illustrated in Figure \ref{fig:background}, a human motion sequence typically consists of multiple motion segments. For example, a person may first perform a motion with both hands clasped for a period, then extend their hands for another period, followed by squatting. However, due to the ambiguity of different motions and the complexity of temporal correlations, extracting temporal information from human motion sequences remains a significant challenge \cite{chen2024double}.

Various subspace clustering algorithms have been proposed to achieve HMS by exploring the temporal information in the data. For instance, Wang et al. \cite{wang2022support} eliminate redundant connections between adjacent motions in the subspace embedding to extract informative instance data and capture the compact structure of human motion videos. Bai et al. \cite{bai2022human} extract informative features during subspace embedding to capture local temporal consistency in human motions. Zhou et al. \cite{zhou2022consistency} employ a multi-mutual consistency learning strategy to factorize source and target data into distinct multi-layer feature spaces, thereby learning the temporal information from the source domain. Despite efforts to explore and utilize temporal information, its inaccuracy and ambiguity may lead to incorrect segmentation results. While neural network algorithms have shown great promise in supervised tasks, they often fall short in unsupervised tasks due to the difficulty in exploring the underlying data structures.

This paper aims to learn temporal vision semantics (TVS) from human motion sequences by leveraging the image-to-text capabilities of a pre-trained large language model (LLM) to enhance HMS performance. The key idea is to extract textual motion information from consecutive frames using the LLM and integrate this learned information into the subspace clustering framework. By incorporating TVS into the subspace clustering, we aim to ensure that the segmentation output more effectively captures the temporal dynamics inherent in the human motion sequence.

Thus, we face the following \textbf{challenges}:
\begin{itemize}
	\item \textit{How to learn TVM using LLM?} 
	Although LLMs are widely used in image-to-text tasks, no research has explored how to leverage LLMs to assist in unsupervised HMS. The challenge lies in learning the textual temporal information that can be converted into a mathematical form, which can be used to induce the HMS.
	
	\item \textit{How to integrate TVM with HMS?} 
	We aim to learn the subspace embedding and perform segmentation based on this embedding, all induced by TVM. However, integrating TVM into both the subspace embedding and segmentation presents significant challenges.
\end{itemize}

In this study, we determine whether consecutive frames represent the same motion by querying the LLM. Based on its response, we subsequently learn the temporal relationships between frames, as illustrated in Figure~\ref{fig:background}. Building upon this, we propose a subspace clustering approach integrated with TVS, which combines subspace embedding with a temporal regularizer. This regularizer ensures that each frame shares similar subspace embeddings with its temporal neighbors. Segmentation is then performed using these subspace embeddings, with a temporal constraint that encourages the grouping of each frame with its temporal neighbors. Furthermore, we introduce a feedback-enabled framework that iteratively optimizes the subspace embeddings based on the segmentation output, ensuring continuous refinement of the model.

In summary, the main \textbf{contributions} are as follows:
\begin{itemize}{\setlength{\parsep}{-0.25ex}}
	\item \textit{Exploring TVS via LLMs and Integrating TVS with HMS}: 
	This paper introduces an approach that uses LLMs to learn TVS in human motion sequences. We develop a method that applies TVS to both subspace embedding and segmentation, ensuring neighboring consistency in both processes.
	
	\item \textit{Feedback-enabled Subspace Embedding}: 
	We propose a feedback-enabled strategy that allows the segmentation output to inform the subspace embedding. This is not merely a combination of two methods; rather, it enables the use of HMS output to induce subspace embedding, better capturing the underlying subspace structure in the human motion sequence.
\end{itemize}
We conduct extensive experiments on four benchmark datasets for HMS. The experimental results consistently demonstrate that our method outperforms existing state-of-the-art techniques, highlighting its superiority in HMS.

The remainder of this paper is structured as follows. Section~\ref{sec:rw} briefly introduces the related work including HMS and subspace clustering. Section~\ref{sec:Methodology} presents the details of the proposed method. Section~\ref{sec:Experiments} provides the experimental settings, performance comparisons, and ablation study. Finally, Section~\ref{sec:conclusion} concludes the paper.

	\section{Related Works}
\label{sec:rw}

\subsection{Human Motion Segmentation}

HMS is essential for accurately capturing human motion data, forming a basis for structural analysis, understanding, and practical applications \cite{aggarwal1999human,lan2015automated, wang2003recent, schulz2010automatic, li2018human}. Significant research efforts have led to notable achievements in this area. For instance, Zhong et al. \cite{zhong2004detecting} proposed a bipartite graph co-clustering framework to segment unusual activities in videos. Jenkins et al. \cite{fod2002automated} utilized zero-velocity crossing frames of angular velocity to partition motion data streams into different sequences. Barbic et al. \cite{barbivc2004segmenting} employed probabilistic principal component analysis to decompose human motion into distinct motions. Additionally, Beaudoin et al. \cite{beaudoin2008motion} introduced a framework for distilling a motion-motif graph from motion data collections. Spatio-temporal-based Convolutional Neural Networks (CNNs) \cite{lea2016segmental} and clustering-based approaches \cite{de2007temporal} have been proposed for segmenting streams of human motion into multiple activities. 
	Despite the capability of deep learning-based motion recognition models to complete HMS tasks by training with large datasets, the unsupervised model offers significant advantages in terms of interpretability and computational efficiency. Therefore, achieving HMS tasks through an unsupervised approach is highly beneficial.\cite{ gao2022human, jiang2016human, liu2017sensor, lin2014human,ZheHMS:J24,XinChe:C24,xing2024calibration,ZhaoW:C21}

However, these approaches may not fully exploit the temporal dynamics and semantic continuity inherent in human motion sequences, potentially limiting the accuracy and effectiveness of the segmentation results.

	\subsection{Subspace Clustering}
Subspace clustering discerns and segregates distinct motion types into their respective subspaces, thereby addressing human motion segmentation tasks \cite{lin2013online, hoai2011joint, guo1994understanding, gehrig2010towards}. Its capacity to manage intricate, high-dimensional motion data, combined with robustness to noise and data variability, ensures reliability in practical applications. Furthermore, by exploiting the inherent low-dimensional structures within complex motion datasets, subspace clustering facilitates the further analysis and utilization of human motion data \cite{bradski2002motion, gong2013structured, dimiccoli2021graph, shao2012human, kahol2003gesture,lin2016human,lu2004repetitive,ZheChe:J25,ZhaoW:C23,XinChe:C25,xing2024block,Xing:C25}.
	
Subspace clustering serves to ascertain the low-dimensional embedding of a high-dimensional manifold. Specifically, assuming that vectorized frames corresponding to identical actions reside within the same subspace, the subspace embedding property inherent to the data can be harnessed to derive a representative subspace embedding~\cite{zhou2022consistency,liu2013robust,tierney2014subspace,qin2022maximum,bai2022human,wang2018learning,wang2022support}. Consider, in particular, a matrix composed of column-wise frames $\mathbf{X} = [\mathbf{x}_{1}, \mathbf{x}_{2}, \dots, \mathbf{x}_{N}] \in \mathbbm{R}^{D \times N}$. Each vectorized frame may be expressed as a linear combination of all frames, that is, $\mathbf{x}_{i} = \mathbf{X} \mathbf{z}_{i}, \quad \text{where} \quad \mathbf{z}_{i} \in \mathbb{R}^{N} \text{ is the subspace embedding of } \mathbf{x}_{i}$. Hence, we may write $\mathbf{X} = \mathbf{X} \mathbf{Z}$, where $\mathbf{Z} = [\mathbf{z}_{1}, \mathbf{z}_{2}, \dots, \mathbf{z}_{N}] \in \mathbbm{R}^{N \times N}$ constitutes the coefficient matrix. To accommodate noise, this relationship is extended to $\mathbf{X} = \mathbf{X} \mathbf{Z} + \mathbf{Y}$, where $\mathbf{Y}$ represents the noise component. The Frobenius norm of $\mathbf{Y}$ is employed as the loss function to penalize discrepancies. Consequently, the subspace clustering problem is formulated as the minimization of $\|\mathbf{X} - \mathbf{X} \mathbf{Z}\|_{\text{F}}^{2}$ with respect to $\mathbf{Z}$. Mathematically, as delineated in~\cite{xu2015reweighted}, the formulation is expressed as minimizing $\|\mathbf{X} - \mathbf{X} \mathbf{Z}\|_{\text{F}}^{2}$ subject to the constraints $\mathrm{diag}(\mathbf{Z}) = 0$ and $\mathbf{Z} \geq 0$, where $\|\cdot\|_{\text{F}}^{2}$ denotes the Frobenius norm. Herein, $\mathbf{Z}$ embodies the sought subspace embedding.

To eliminate the effect of the noise on subspace embedding, a quadratic term $\|\mathbf{Z}^{\mathrm{T}} \mathbf{Z}\|_{1}$ is employed~\cite{wang2011efficient}. This term promotes the expression of each datum as a linear combination of other data points, whilst a regularization term enforces the nullification of reconstruction coefficients between vectors originating from distinct subspaces. Specifically, the regularization term $\|\mathbf{Z}^{\mathrm{T}} \mathbf{Z}\|_{1}$ compels $\mathbf{z}_{ij} = 0$ whenever $\mathbf{x}_{i}$ and $\mathbf{x}_{j}$ reside in different subspaces. Given that $\mathbf{Z}$ is nonnegative and that $\|\mathbf{Z}^{\mathrm{T}} \mathbf{Z}\|_{1} = \mathbf{e}^{\mathrm{T}} \mathbf{Z}^{\mathrm{T}} \mathbf{Z} \mathbf{e}$, where $\mathbf{e}$ denotes the all-one vector, it follows that $\|\mathbf{Z}^{\mathrm{T}} \mathbf{Z}\|_{1} = \sum_{i,j} \mathbf{z}_{i}^{\mathrm{T}} \mathbf{z}_{j}$. The minimization of $\|\mathbf{Z}^{\mathrm{T}} \mathbf{Z}\|_{1}$ thereby encourages sparsity in both $\mathbf{z}_{i}$ and $\mathbf{z}_{j}$, leading the inner product $\mathbf{z}_{i}^{\mathrm{T}} \mathbf{z}_{j}$ to approach zero. Consequently, the subspace embedding problem is thus formulated as~\cite{xu2015reweighted}:

\begin{equation}
	\begin{aligned}
		\label{Prob:SC}
		&\underset{\mathbf{Z}}{\text{minimize}} \: \|\mathbf{X} - \mathbf{X} \mathbf{Z}\|_{\text{F}}^{2} + \|\mathbf{Z}^{\mathrm{T}} \mathbf{Z}\|_{1} \\
		&\text{subject to} \: \mathrm{diag}(\mathbf{Z}) = 0, \mathbf{Z} \geq 0
	\end{aligned}
\end{equation}
Traditional clustering methods are then applied to the subspace embedding matrix \( \mathbf{Z} \), and the resulting clusters represent the HMS, which can be matched to specific motions using the Hungarian algorithm.

Subspace clustering has recently gained significant attention due to its effectiveness in uncovering complex data structures and improving clustering performance in high-dimensional spaces \cite{xing2025k,xing2025blind,xing2024unsupervised}. For instance, the \textit{SIBMSC} method \cite{wang2023self} extends the information bottleneck principle to learn view-common representations, removing redundant information and leveraging mutual information for view-specific clustering. Similarly, the \textit{BTMSC} method \cite{wang2023bi} constructs a third-order tensor to capture high-order correlations, using the Bi-Nuclear Quasi-Norm for efficient tensor factorization. To improve robustness, the \textit{FSMSC} method \cite{chen2023fast} integrates view-shared anchor learning with a self-guided discriminative feature selection approach, addressing noisy views and cross-view diversity. The \textit{ARLRR} method \cite{tang2023affine} introduces affine and nonnegative constraints in low-rank self-representation learning to manage affine subspaces and errors. The \textit{DCTMSC} method \cite{chen2024double} employs a two-step discrete cosine transform approach to simplify tensor nuclear norm calculations and enhance local structural representation. The \textit{DCMVC} method \cite{cui2024dual} incorporates dynamic cluster diffusion and reliable neighbor-guided positive alignment to improve inter-cluster separation and within-cluster compactness. Subspace clustering methods incorporating temporal priors have proven effective in HMS tasks. For instance, the \textit{OSC} method \cite{tierney2014subspace} applies a one-neighbor consistency constraint for closer representations of temporal data, while the \textit{TSC} method \cite{li2015temporal} uses non-negative dictionary learning and temporal Laplacian regularization. The \textit{LTS} method \cite{wang2018low} captures temporal correlations in both source and target data with a graph regularizer and introduces a weighted low-rank constraint to reveal clustering structures. The \textit{CDMS} approach \cite{zhou2022consistency} leverages transfer subspace learning to capture multi-level information in videos. These methods, often formulated as unsupervised learning frameworks, typically adopt a self-representation strategy for motion segmentation. The \textit{DSAE} method \cite{bai2020dual} enhances representation learning by considering temporal correlations, while the \textit{VSDA} method \cite{bai2022human} employs a multi-neighbor auto-encoder to extract temporal features and a long-short distance enembedding/deembedding strategy to maintain representation consistency, further enhanced by a velocity-sensitive guidance mechanism.

However, existing subspace clustering approaches typically divide the process into two independent stages and often overlook the potential of incorporating temporal semantics to simultaneously enhance both subspace embedding and clustering. This oversight limits the alignment of the HMS output with the true sequential dynamics of human motion.

{\color{black}
\subsection{Temporal Vision Semantics from Large Language Models}
\label{sec:related_llm}

LLMs have progressed from text-only reasoning engines to unified multimodal systems capable of jointly understanding visual and linguistic information. 
Recent architectures such as GPT-4o, Gemini, Claude, DeepSeek, and Qwen3 integrate visual encoders with transformer-based text reasoning via large-scale contrastive pretraining, enabling them to perform complex cross-modal reasoning and semantic alignment between images and language. 
Unlike conventional convolutional or transformer-based visual encoders that rely on geometric or pixel-level similarity, multimodal LLMs exhibit \emph{semantic reasoning capability}. They can compare two visual scenes and judge whether they convey the same conceptual meaning based on high-level world knowledge and contextual understanding.

Extensive research in vision–language modeling has demonstrated the strong semantic reasoning capabilities of LLMs when integrated with visual inputs. Early studies validated these capabilities in tasks such as zero-shot visual question answering and high-fidelity caption generation, where LLMs interpret visual entities, relationships, and contextual meanings directly from raw imagery~\cite{guo2023images,ge2024visual}. Building upon these foundations, subsequent works extended vision semantics to three-dimensional and dynamic scenes, leveraging language-guided scene understanding and position-aware video representations for 3D perception~\cite{zhi2025lscenellm,zheng2025video}. Beyond visual applications, LLMs have also exhibited robust zero-shot reasoning and representational alignment abilities that enable general semantic understanding across modalities~\cite{kojima2022large,NEURIPS2024_0b77d3a8}.

Building upon this progress, a growing body of research has explored the integration of LLM-based visual semantics into temporal vision understanding. Recent work has examined whether video-oriented LLMs truly capture temporal reasoning or merely rely on knowledge and spatial perception~\cite{feng2025breaking}, while subsequent studies have demonstrated that LLMs can effectively learn temporal dependencies and causal relations across video frames~\cite{liu2024st}. Further developments employ language-guided attention mechanisms to align visual dynamics with textual motion cues, thereby enhancing spatial–temporal object understanding and fine-grained temporal reasoning~\cite{yuan2025videorefer,nie2024slowfocus}. Beyond short-term video grounding, recent efforts extend this semantic alignment to long-term sequence modeling, revealing that LLMs can encode cross-frame dependencies with human-level temporal abstraction~\cite{ding2025language,deng2025seq2time}.

In the domain of human motion analysis, LLMs have been increasingly adopted to reason about human activities and their semantic transitions~\cite{chen2024motionllm,li2025human,li2024sensorllm,wang2024scaling}. These studies demonstrate that LLMs can interpret and describe motion in natural language, distinguish subtle phase changes, and correlate sensor or visual signals with linguistic motion descriptions. Recent studies have advanced from graph-based relational modeling to LLM-driven semantic reasoning in motion understanding. For example, some works employ LLMs to anticipate long-term actions by treating video frames as language-like tokens and enhancing vision–language interaction through cross-modal reasoning~\cite{wang2025multimodal}, while others utilize graph attention mechanisms to capture individual–group interaction dynamics in collective activities~\cite{lu2019gaim}. 
Such findings inspire the present work, where we employ LLM-based reasoning to construct temporal semantics, serving as a high-level inductive prior for unsupervised human motion segmentation.

}

		\section{Methodology}
	\label{sec:Methodology}
In this section, we first introduce an LLM-based inference to identify the TVS. We then propose a feedback-enabled subspace embedding approach that incorporates TVS to efficiently determine the HMS with limited iterations.

{\color{black}
\subsection{LLM-driven Temporal Semantics Inference}
\label{sec:tvs_llm}

Human behavior unfolds as a continuous visual process, where adjacent frames in a motion sequence often exhibit high semantic correlation. To identify segments representing the same human action, we introduce the concept of TVS, which delineates the temporal neighborhood of each frame according to semantic consistency. Specifically, for a given frame $\mathbf{x}_i$, we aim to discover its left and right temporal neighbor bounds $(l_i, r_i)$ that enclose all frames depicting the same motion as $\mathbf{x}_i$.

Unsupervised discovery of such temporal neighborhoods is challenging using traditional machine learning methods, as the semantic boundary between motions is difficult to define purely through pixel comparisons. To address this challenge, we harness the visual reasoning capability of a LLM as a zero-shot semantic comparator. Instead of training an additional network, the LLM is instructed with a natural-language prompt to assess whether two consecutive frames represent the same human motion:
\begin{quote}
	\textit{“Do these two neighboring frames depict the same human motion?  Answer Yes or No.”}
\end{quote}
Given two adjacent frames $(\mathbf{x}_t, \mathbf{x}_{t+1})$, the LLM produces a binary response $\texttt{Yes}/\texttt{No}$, which is recorded as a Boolean variable $\text{eq}_{t} \in \{0,1\}$ indicating whether the two frames belong to the same motion segment. The sequence $\{\text{eq}_{1}, \dots, \text{eq}_{N-1}\}$ forms the adjacency pattern of temporal consistency across the sequence.
In our implementation, this is achieved through an API call to a multimodal LLM (e.g., GPT-4o, Gemini-2.0, or Claude-4.5), where both frames are provided in base64-encoded format, allowing the model to reason directly over image content.

\begin{algorithm}[t] 
	\color{black} 
	\caption{Learning TVS via LLM}
	\label{alg:tvs_construction}
	\textbf{Input}: Raw RGB frames sequence $\mathbf{X}=\{\mathbf{x}_1,\dots,\mathbf{x}_N\}$.~\\
	\textbf{Output}: TVS matrix $\mathbf{G}\in\mathbb{R}^{N\times N}$ and the set $\{\mathcal{N}_i\}_{i=1}^N$\\
	\begin{algorithmic}[1] 
		\STATE Initialize $\text{eq}\leftarrow\varnothing$; 
		\FOR{$i=1$ {\bfseries to} $N-1$}
		\STATE Encode images $\mathbf{x}_i,\mathbf{x}_{i+1}$ to base64 formats separately;
		\STATE Query LLM with prompt: \textit{``Do these two neighbouring frames depict the same human motion? Answer Yes or No.''};
		\STATE Parse response to $\text{eq}_i\in\{0,1\}$ (\texttt{Yes}$\!\to\!1$, \texttt{No}$\!\to\!0$). 
		\STATE \textbf{if} response ambiguous \textbf{then} re-query with stricter instruction (\textit{``Answer strictly with a single token: YES or NO.''}) and re-parse.
		\STATE Append $\text{eq}_i$ to list $\text{eq}$.
		\ENDFOR
		\FOR{$i=1$ {\bfseries to} $N$}
		\STATE Initialize $l_i\leftarrow i$; $r_i\leftarrow i$.
		\STATE  \textbf{while} $l_i>1$ {\bfseries and} $\text{eq}_{l_i-1}=1$ \textbf{do} $l_i\leftarrow l_i-1$.
		\STATE  \textbf{while} $r_i<N$ {\bfseries and} $\text{eq}_{r_i}=1$ \textbf{do} $r_i\leftarrow r_i+1$.
		\ENDFOR
		\STATE Initialize $\mathbf{G}\leftarrow \mathbf{0}_{N\times N}$.
		\FOR{$i=1$ {\bfseries to} $N$}
		\STATE $\mathcal{N}_i\leftarrow \{j\mid j\in[l_i,r_i],~j\neq i\}$;
		\STATE $G_{ii}\leftarrow -|\mathcal{N}_i|$;  \STATE \textbf{for each} $j\in\mathcal{N}_i$ \textbf{do} 
		\STATE \quad $G_{ij}\leftarrow 1$.
		\ENDFOR
	\end{algorithmic}
\end{algorithm}

Using the response $\{\text{eq}_{1}, \dots, \text{eq}_{N-1}\}$, we define the left and right temporal neighbor bounds for each frame $\mathbf{x}_i$ as follows:
\begin{align*}
	&l_i = \min \{ j \mid j \leq i, \; \mathbf{x}_{j},\mathbf{x}_{j+1},...,\mathbf{x}_{i} \text{ describe the same motion} \},\\
	&r_i = \max \{ j \mid j \geq i, \; \mathbf{x}_{i},\mathbf{x}_{i+1},...,\mathbf{x}_{j} \text{ describe the same motion} \}.
\end{align*}
Accordingly, the temporal neighborhood set $\mathcal{N}_i$ of $\mathbf{x}_i$ is given by
\[
\mathcal{N}_i = \{ j \mid j \in \{l_i, l_i+1, \dots, r_i\}, \, j \neq i, j\in\mathbb{Z}^{+} \}.
\]
Each frame $\mathbf{x}_i$ is thus associated with a temporally and semantically coherent segment. 

This structure serves as a foundation for constructing a TVS matrix $\mathbf{G} \in \mathbb{R}^{N \times N}$, whose entries encode the neighborhood connectivity as
\[
G_{i,j} = 
\begin{cases}
	-|\mathcal{N}_i|, & \text{if } i=j, \\
	1, & \text{if } j \in \mathcal{N}_i, \\
	0, & \text{otherwise}.
\end{cases}
\]
In practice, the TVS matrix is implemented as a Laplacian-like structure, where the diagonal term penalizes the number of semantic neighbors, and the off-diagonal entries reflect temporal affinity.

Algorithm~\ref{alg:tvs_construction}\footnote{https://github.com/y66y/TVSH} summarizes the complete computational procedure. For each pair of adjacent frames $(i, i+1)$, the LLM is queried once and the response recorded. Subsequently, left and right neighbor bounds $(l_i, r_i)$ are determined through a recursive traversal of the Boolean adjacency list. Finally, the TVS matrix $\mathbf{G}$ is constructed and saved for downstream processing.

\begin{remark}
	While the TVS introduces human-like temporal reasoning into motion segmentation, it only captures pairwise relationships between temporal consecutive frames, lacking global temporal dependencies. Therefore, a subsequent grouping stage is required to achieve globally consistent motion segmentation.
\end{remark}

}

	\subsection{Subspace embedding and Clustering Incorporating Vision Temporal Semantics}

	By leveraging matrix multiplication, we observe that the product \( \mathbf{Z}\mathbf{G} \) captures the similarity error between the representation of a given sequential point and its neighbors. Specifically, the term
	\[
	\mathbf{Z}\mathbf{G} = \left[\sum_{l \in \mathcal{N}_1} (\mathbf{z}_1 - \mathbf{z}_l), \sum_{l \in \mathcal{N}_2} (\mathbf{z}_2 - \mathbf{z}_l), \dots, \sum_{l \in \mathcal{N}_N} (\mathbf{z}_N - \mathbf{z}_l)\right].
	\]
	measures the similarity of the $i$th data and its neighbors defined by $\mathcal{N}_i$.
	To encourage the subspace embedding of the subspace embedding and the embedding of its neighbors to be as similar as possible, we introduce a structural regularization term \( \|\mathbf{Z}\mathbf{G}\|_{2,1} \), where \( \|\cdot\|_{2,1} \) denotes the \( l_1 \) norm of the vector formed by the \( l_2 \) norms of each column of the matrix. This norm encourages the columns of \( \mathbf{Z}\mathbf{G} \) to exhibit consistent behavior across neighboring points, promoting smoothness in the subspace representation. Mathematically, we express it as
	\[
	\|\mathbf{Z}\mathbf{G}\|_{2,1} = \sum_{i=1}^{N} \left\|\sum_{l \in \mathcal{N}_i} (\mathbf{z}_i - \mathbf{z}_l)\right\|_2 = \sum_{i=1}^{N} \sum_{l \in \mathcal{N}_i} \|\mathbf{z}_i - \mathbf{z}_l\|_2.
	\]
	We aim to minimize the subspace embedding error \( \|\mathbf{Z}\mathbf{G}\|_{2,1} \) to enhance the temporal consistency of the subspace embedding by promoting coherence between a subspace embedding and its neighboring subspace embeddings, which is crucial for capturing the dynamic temporal structure of the data.
	
{\color{black}
\begin{theorem}[Interpretation of the TVS Regularizer]
	\label{thm:graph_tv_equiv}
	The regularizer \( \|\mathbf{Z}\mathbf{G}\|_{2,1} \) represents an \emph{isotropic graph total variation} over the temporal graph defined by $\{\mathcal{N}_i\}$. Minimizing it enforces local smoothness within temporal neighborhoods while preserving discontinuities at motion boundaries, thus producing piecewise-constant embeddings consistent with human motion transitions.
\end{theorem}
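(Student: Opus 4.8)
The plan is to prove the statement in three stages: first pin down the exact algebraic form of $\mathbf{Z}\mathbf{G}$ under the specific neighborhood structure produced by Algorithm~\ref{alg:tvs_construction}; then identify $\|\mathbf{Z}\mathbf{G}\|_{2,1}$ with an isotropic graph total variation by matching its group structure to the standard definition; and finally translate the analytic properties of that functional (isotropy, group sparsity, decoupling) into the three claimed geometric consequences.

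First I would exploit the key structural fact that the temporal graph induced by $\{\mathcal{N}_i\}$ is a \emph{disjoint union of cliques}. Indeed, each bound pair $(l_i,r_i)$ returned by the traversal in Algorithm~\ref{alg:tvs_construction} is a maximal contiguous run of frames with $\text{eq}=1$, so every frame in a segment $C=[l,r]$ has neighborhood $\mathcal{N}_i = C\setminus\{i\}$, while distinct segments share no edges. The $i$-th column of $\mathbf{Z}\mathbf{G}$ computed in the excerpt is then the combinatorial graph-Laplacian response (up to sign), and for $i$ in a segment $C$ of size $m_C$ it collapses to a deviation from the segment centroid,
\[
(\mathbf{Z}\mathbf{G})_{:,i} = \sum_{l\in\mathcal{N}_i}(\mathbf{z}_i-\mathbf{z}_l) = m_C\bigl(\mathbf{z}_i-\bar{\mathbf{z}}_C\bigr),\qquad \bar{\mathbf{z}}_C=\tfrac{1}{m_C}\sum_{l\in C}\mathbf{z}_l .
\]
Substituting into the definition of the $\ell_{2,1}$ norm yields the closed form $\|\mathbf{Z}\mathbf{G}\|_{2,1}=\sum_C m_C\sum_{i\in C}\|\mathbf{z}_i-\bar{\mathbf{z}}_C\|_2$, a weighted sum of per-segment deviations from the centroid. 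This closed form is the workhorse for everything that follows.

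Next I would verify the identification with isotropic graph total variation. By its definition the $\|\cdot\|_{2,1}$ norm applies an $\ell_2$ norm across the feature (embedding) coordinates of each column and an $\ell_1$ aggregation across columns (nodes); the inner $\ell_2$ coupling of the feature coordinates is precisely what renders a total-variation functional \emph{isotropic}, i.e.\ invariant to rotations of the embedding space, in contrast to the coordinate-wise $\ell_1$ of the anisotropic variant. Treating each column $(\mathbf{z}_i-\bar{\mathbf{z}}_C)$ as a group, the functional is a group-lasso / sum-of-norms penalty on the graph-gradient response, which is the canonical template for isotropic graph TV; I would make the correspondence explicit by matching the incidence/Laplacian operator encoded in $\mathbf{G}$ to the graph-gradient operator in that template.

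Finally I would read off the three geometric claims. \emph{Smoothness within neighborhoods and piecewise constancy}: the centroid form shows the penalty vanishes if and only if $\mathbf{z}_i=\bar{\mathbf{z}}_C$ for every $i\in C$ and every segment, i.e.\ iff $\mathbf{Z}$ is constant on each segment. \emph{Preservation of discontinuities}: since $G_{ij}=0$ whenever $i,j$ lie in different segments, the functional decouples into an independent sum over segments and never penalizes the gap between two distinct segments, so jumps at motion boundaries incur no cost. The main obstacle is the piecewise-constancy claim: unlike isotropy and boundary decoupling, which are immediate from the algebraic form, ``piecewise constant'' is a statement about minimizers, so I would argue via subgradient optimality that the sum of (non-squared) $\ell_2$ norms is non-differentiable at the origin of each group and therefore induces exact group sparsity, driving whole columns $(\mathbf{z}_i-\bar{\mathbf{z}}_C)$ to exactly zero at a minimizer rather than merely shrinking them. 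I would state the conclusion as the precise sense in which minimization ``produces'' piecewise-constant embeddings, contrasting this $\ell_1$-of-$\ell_2$ behavior with a differentiable $\|\mathbf{Z}\mathbf{G}\|_{\mathrm{F}}^{2}$ Laplacian penalty, which only smooths.
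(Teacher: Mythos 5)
Your proposal is correct, and it takes a genuinely different route from the paper's proof. The paper works with an arbitrary degree-bounded neighborhood structure: it establishes the column identity $(\mathbf{Z}\mathbf{G})_{:i}=\sum_{\ell\in\mathcal{N}_i}(\mathbf{z}_i-\mathbf{z}_\ell)$, defines the isotropic graph TV as the stack-then-$\ell_2$ aggregation $\sum_i\bigl(\sum_{\ell\in\mathcal{N}_i}\|\mathbf{z}_i-\mathbf{z}_\ell\|_2^2\bigr)^{1/2}$, and then sandwiches $\|\mathbf{Z}\mathbf{G}\|_{2,1}$ between constant multiples of that functional via the triangle inequality and Cauchy--Schwarz, so the identification holds only up to degree-dependent constants. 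You instead exploit the specific combinatorics of Algorithm~\ref{alg:tvs_construction}: since every frame in a maximal run $C=[l,r]$ receives the same bounds $(l,r)$, the temporal graph is a disjoint union of cliques, and the column of $\mathbf{Z}\mathbf{G}$ collapses \emph{exactly} to $m_C(\mathbf{z}_i-\bar{\mathbf{z}}_C)$. This buys you a closed form $\sum_C m_C\sum_{i\in C}\|\mathbf{z}_i-\bar{\mathbf{z}}_C\|_2$ with no slack constants, from which ``zero iff constant per segment'' and the cross-segment decoupling are immediate, and it sidesteps a weak spot in the paper's argument (the lower bound $c_1\,\mathrm{GTV}_{\mathrm{iso}}\le\|\mathbf{Z}\mathbf{G}\|_{2,1}$ is asserted via ``the triangle inequality in the other direction,'' which does not hold for general neighborhoods because summed differences can cancel; under your clique structure the centroid formula makes the equivalence exact and the cancellation issue moot). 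The trade-off is generality: the paper's sandwich argument would survive if the neighborhoods were not cliques, whereas your identity would not. Your closing subgradient argument for exact group sparsity of minimizers is only sketched, but it is at least as precise as the paper's own qualitative ``minimum-energy configurations are piecewise-constant'' discussion, so no gap there relative to what the theorem actually claims.
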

	\begin{proof}
	See Appendix \ref{sec:proof_graph_tv}.
\end{proof}
}

{\color{black}
	\begin{theorem}[Consistency under Noisy LLM Adjacency]
		\label{thm:consistency_llm}
		If each LLM adjacency label is independently flipped with probability $p<\tfrac{1}{2}$ and each motion segment has length at least $L_{\min}$,  
		then the expected number of erroneous TVS boundaries scales as $O(pN)$.  
		Minimizing $\|\mathbf{Z}\mathbf{G}\|_{2,1}$ yields piecewise-constant embeddings that smooth out isolated errors,  
		ensuring segment-level consistency in expectation when $p$ is small and $L_{\min}$ is sufficiently large.
	\end{theorem}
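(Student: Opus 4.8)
The plan is to decompose the statement into its two quantitative claims---first the $O(pN)$ bound on the expected number of erroneous boundaries, then the segment-level consistency of the minimizer of $\|\mathbf{Z}\mathbf{G}\|_{2,1}$---and to treat them in that order, since the second relies on the first. I would begin by fixing notation: let the ground-truth segmentation partition $\{1,\dots,N\}$ into contiguous segments, each of length at least $L_{\min}$, and let a \emph{true boundary} be an index $t$ where $\mathbf{x}_t$ and $\mathbf{x}_{t+1}$ belong to different segments. The LLM emits $\text{eq}_t$, which equals the true adjacency label $\text{eq}_t^\star$ unless flipped, an event of probability $p$, independently across $t$. An erroneous TVS boundary arises exactly when $\text{eq}_t \neq \text{eq}_t^\star$, so the total count of boundary errors is $\sum_{t=1}^{N-1}\mathbbm{1}[\text{eq}_t\neq\text{eq}_t^\star]$.

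For the first claim I would invoke linearity of expectation directly: the expected number of flipped labels is $\sum_{t} \Pr[\text{flip at }t] = p(N-1) = O(pN)$, which is the cleanest route and requires no independence beyond the per-label flip model. I would note that each flipped label corrupts the neighborhood bounds $(l_i, r_i)$ only locally---a spurious $0$ splits a segment and a spurious $1$ merges adjacent segments---and that the number of affected neighborhood sets $\mathcal{N}_i$ is controlled by the distance to the nearest true boundary, so the induced error remains $O(pN)$ in expectation. This part is essentially bookkeeping once the flip model is stated precisely.

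The substantive part is the second claim, and the plan is to argue that minimizing the isotropic graph total variation $\|\mathbf{Z}\mathbf{G}\|_{2,1}$ (identified in Theorem~\ref{thm:graph_tv_equiv} as graph TV over the temporal graph) suppresses isolated label errors. I would formalize "isolated error" as a flipped label whose surrounding true segment has length at least $L_{\min}$, so that the spurious edge it creates is a minority connection against $\Omega(L_{\min})$ correct intra-segment edges. The key lemma I would prove is that, by the sparsity-inducing nature of the $\ell_{2,1}$ penalty, the TV-optimal embedding is piecewise constant on the true segments: assigning a frame the embedding of its majority neighborhood strictly lowers the TV cost compared to honoring an isolated spurious edge, provided $L_{\min}$ is large enough that the savings on the $\Omega(L_{\min})$ correct edges dominate the penalty incurred on the $O(1)$ erroneous edges. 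Taking expectations over the flip model and combining with the $O(pN)$ bound then gives segment-level consistency in expectation whenever $p$ is small and $L_{\min}$ is large.

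The hard part will be making the piecewise-constancy argument rigorous rather than heuristic: the $\ell_{2,1}$ functional is convex but not strictly so, so its minimizer need not be unique and need not be exactly piecewise constant, only approximately. I would therefore aim for a statement about the \emph{expected misclassification rate} or a high-probability bound, using a thresholding/rounding step after TV minimization and a counting argument (in the spirit of graph-cut or fused-lasso consistency results) showing that only $O(pN)$ frames can lie in a neighborhood dominated by erroneous edges. Controlling the interaction between a flipped \emph{merge} error (which creates a long spurious segment) and the $L_{\min}$ length guarantee---ensuring that a single bad merge cannot cascade across multiple true segments---is the delicate point, and I would handle it by bounding the number of consecutive flips needed to bridge a true boundary and showing this is an exponentially rare event of probability $O(p^{\,\lceil\cdot\rceil})$ under the independence assumption.
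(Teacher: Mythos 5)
Your proposal follows essentially the same route as the paper's proof: linearity of expectation for the $O(pN)$ bound (the paper additionally notes each flip creates at most one spurious or removed boundary, giving $B_{\text{err}}\le 2F$), then an energy-comparison argument showing the constant embedding on a true segment dominates any split honoring an isolated spurious cut because the $\Omega(L_{\min})$ surviving cross-edges outweigh the $O(1)$ erroneous ones, and finally a geometric-decay bound on multi-flip cascades. If anything, you are more candid than the paper about the remaining gap---the non-strict convexity of the $\ell_{2,1}$ functional and the need for a rounding step to get an exact piecewise-constant statement---which the paper sidesteps by phrasing its conclusion only "in expectation" and conditioning on the data term not opposing the regularizer.
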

		\begin{proof}
		See Appendix \ref{sec:proof_consistency_llm}.
	\end{proof}

	Theorem \ref{thm:consistency_llm} implies that the proposed framework is robust to occasional LLM misjudgments:  
	although local adjacency errors may occur, the TVS-induced regularizer preserves overall temporal coherence by enforcing smooth embeddings within segments.  
	Thus, the method maintains consistent human motion segmentation under moderate annotation noise.
	
}

We also propose a TVS-integrated segmentation on the subspace embedding. Specifically, suppose the number of clusters is \( K \). We introduce a cluster assignment indicator vector \( \mathbf{q}_{i} \in \mathbb{R}^{K} \) for the \( i \)-th frame, where the \( k \)-th element is set to 1 if the \( i \)-th frame is assigned to the \( k \)-th cluster, and all other elements are set to zero. We then define a clustering regularizer based on the subspace embedding \( \mathbf{Z} \) and the indicator matrix \( \mathbf{Q} = [\mathbf{q}_{1}, \mathbf{q}_{2}, \dots, \mathbf{q}_{N}] \in \mathbb{R}^{N \times K} \):
		\begin{align*}
			\|\mathbf{Z}\|_{\mathbf{Q}} &= \frac{1}{2} \sum_{i,j} \frac{|Z_{i,j}| + |Z_{j,i}|}{2} \|\mathbf{q}_{i} - \mathbf{q}_{j}\|_{2}^{2}\\
			&=\frac{1}{2}\sum_{i,j}|Z_{i,j}|\cdot\|\mathbf{q}_{i}-\mathbf{q}_{j}\|_{2}^{2}
			=\sum_{i,j}|Z_{i,j}\cdot\frac{\|\mathbf{q}_{i}-\mathbf{q}_{j}\|_{2}^{2}}{2}|\\
			&=\sum_{i,j}|Z_{i,j}\cdot\Theta_{i,j}|
			=\sum_{i,j}|(\bm{\Theta}\odot\mathbf{Z})_{ij}|
			=\|\bm{\Theta}\odot\mathbf{Z}\|_{1}
		\end{align*}
where \( Z_{i,j} \) is the \( (i,j) \)-th element of \( \mathbf{Z} \), and \( \Theta_{i,j} = \frac{\|\mathbf{q}_{i} - \mathbf{q}_{j}\|_2^2}{2} \). The first equation ensures symmetry by combining both \( |Z_{i,j}| \) and \( |Z_{j,i}| \), accounting for the interactions between off-diagonal terms, as \( Z_{i,j} \neq Z_{j,i} \) does not necessarily hold, thus incorporating these contributions into the final regularizer. The term \( \Theta_{i,j} \) measures the squared Euclidean distance between the cluster indicators \( \mathbf{q}_i \) and \( \mathbf{q}_j \), normalized by a factor of \( \frac{1}{2} \), capturing the dissimilarity between frames based on their clustering assignments and enforcing smoothness within clusters. The final expression \( \|\bm{\Theta} \odot \mathbf{Z}\|_1 \) represents the \( l_1 \)-norm of the element-wise product between \( \bm{\Theta} \) and \( \mathbf{Z} \), which encourages a sparse representation of the subspace embedding while optimizing the clustering assignments, ensuring that frames assigned to the same cluster exhibit more similar representations.
The term \( \|\mathbf{Z}\|_{\mathbf{Q}} \) will be minimized to optimize the clustering assignment.

	To ensure that \( \mathbf{q}_{i} \) functions effectively as a cluster assignment indicator, we impose the condition that \( \mathbf{Q} \) is a subset of
	\begin{align*}
		\mathcal{Q} = \left\{ \mathbf{Q} \in \{0,1\}^{N \times K} : \mathbf{Q} \mathbf{1}_{K \times 1} = \mathbf{1}_{N \times 1}, \mathbf{q}_{i} = \mathbf{q}_{j} \, \forall j \in \mathcal{N}_{i} \right\}.
	\end{align*}
This constraint ensures that the clustering assignment for the \( i \)-th frame and its neighbors are identical, i.e., \( \mathbf{q}_{i} = \mathbf{q}_{j} \, \forall j \in \mathcal{N}_{i} \), which enforces temporal consistency within cluster assignments.

	Building on the traditional subspace clustering formulation in \eqref{Prob:SC}, we develop a feedback-enabled framework that integrates the proposed subspace embedding, which incorporates temporal vision semantics, with the proposed clustering method. The optimization problem is formulated as follows:
		\begin{equation}
			\begin{aligned}
				\label{Prob:TVSH}
				&\underset{\mathbf{Z},\mathbf{Q}}{\text{minimize}} \:\|\mathbf{X}-\mathbf{X}\mathbf{Z}\|_{\text{F}}^{2}+\|\mathbf{Z}^{\mathrm{T}}\mathbf{Z}\|_{1}+\|\mathbf{Z}\mathbf{G}\|_{2,1}+\|\mathbf{Z}\|_{\mathbf{Q}}
				\\
				&\text{subject to} \:\mathrm{diag}(\mathbf{Z})=0,\mathbf{Z}\geq0,\mathbf{Q}\in\mathcal{Q}.
			\end{aligned}
		\end{equation}

		\begin{proposition}
			\label{prop:BDconstraint} 
		Let \( \mathbf{X} \in \mathbb{R}^{D \times N} \) be a matrix whose columns are drawn from a union of \( K \) distinct subspaces, with the subspace assignment indicated by \( \mathbf{Q}^* \). The optimal solution to the problem in \eqref{Prob:TVSH} is given by \( \mathbf{Q}^* \) and \( \mathbf{Z}^* \), where \( \mathbf{Z}^* \) is block-diagonal after permuted according to \( \mathbf{Q}^* \).
		
		\end{proposition}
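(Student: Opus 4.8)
The plan is to prove the block-diagonal structure by a \emph{projection-never-increases-the-objective} argument applied term by term to \eqref{Prob:TVSH}, under the standing assumptions of the union-of-subspaces model. I would first fix the clustering variable at the true assignment $\mathbf{Q}=\mathbf{Q}^{*}$ and, for an arbitrary feasible $\mathbf{Z}$, define its block-diagonal restriction $\mathbf{Z}_{B}$ obtained by zeroing every entry $Z_{j,i}$ for which $\mathbf{x}_j,\mathbf{x}_i$ lie in different subspaces. Feasibility of $\mathbf{Z}_B$ is immediate, since zeroing entries preserves $\mathrm{diag}(\mathbf{Z}_B)=0$ and $\mathbf{Z}_B\ge 0$. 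The goal is to show that the objective value satisfies $J(\mathbf{Z}_B,\mathbf{Q}^{*})\le J(\mathbf{Z},\mathbf{Q}^{*})$, with strict inequality whenever $\mathbf{Z}$ carries a nonzero cross-subspace entry; this at once forces any minimizer in $\mathbf{Z}$ (for the fixed $\mathbf{Q}^{*}$) to be block-diagonal.

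I would establish this inequality for each of the four terms. For the self-representation penalty, using $\mathbf{Z}\ge0$ I would write $\|\mathbf{Z}^{\mathrm T}\mathbf{Z}\|_1=\|\mathbf{Z}\mathbf{1}\|_2^2=\sum_m\bigl(\sum_j Z_{m,j}\bigr)^2$; because zeroing nonnegative entries only shrinks each row sum, this term cannot increase and strictly decreases once a positive cross-subspace entry is removed. For the clustering regularizer, setting $\mathbf{Q}=\mathbf{Q}^{*}$ gives $\Theta_{i,j}=0$ on same-subspace pairs and $\Theta_{i,j}=1$ otherwise, so $\|\mathbf{Z}\|_{\mathbf{Q}^{*}}=\sum_{\text{cross}}Z_{j,i}$ collapses to exactly zero under $\mathbf{Z}_B$. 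For the reconstruction term I would invoke independence of the subspaces: writing $\mathbf{X}\mathbf{z}_i=\sum_k\mathbf{v}_k$ with $\mathbf{v}_k=\sum_{j\in S_k}Z_{j,i}\mathbf{x}_j\in S_k$, an (orthogonal) Pythagorean split yields $\|\mathbf{x}_i-\mathbf{X}\mathbf{z}_i\|_2^2=\|\mathbf{x}_i-\mathbf{v}_{k(i)}\|_2^2+\sum_{k\ne k(i)}\|\mathbf{v}_k\|_2^2$, so discarding the wrong-subspace components can only lower the error. Finally, for the TVS smoothness term I would use that each neighborhood $\mathcal{N}_i$ stays inside one motion segment: in the column-difference $\mathbf{d}_i=\sum_{l\in\mathcal{N}_i}(\mathbf{z}_i-\mathbf{z}_l)$, the projection to $\mathbf{Z}_B$ zeros precisely the rows indexed by other subspaces, and deleting coordinates cannot increase an $\ell_2$ norm, so each $\|\mathbf{d}_i\|_2$ shrinks and $\|\mathbf{Z}\mathbf{G}\|_{2,1}$ does not increase.

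It then remains to argue that $\mathbf{Q}^{*}$ is both admissible and optimal. Admissibility $\mathbf{Q}^{*}\in\mathcal{Q}$ follows from TVS-consistency, since every $\mathcal{N}_i$ lies within a single motion, hence a single subspace, so $\mathbf{q}^{*}_i=\mathbf{q}^{*}_j$ for all $j\in\mathcal{N}_i$. For optimality I would note that, for a block-diagonal $\mathbf{Z}$, the term $\|\mathbf{Z}\|_{\mathbf{Q}}=\tfrac12\sum_{i,j}|Z_{i,j}|\,\|\mathbf{q}_i-\mathbf{q}_j\|_2^2$ is nonnegative and vanishes exactly when every pair carrying a nonzero coefficient is co-clustered; assuming the within-subspace support graph of $\mathbf{Z}$ is connected, the only $K$-way assignment achieving this is $\mathbf{Q}^{*}$ up to a relabeling. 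Invoking the self-expressiveness property — each frame lies in the nonnegative span of the other same-subspace frames, giving a feasible block-diagonal $\mathbf{Z}^{*}$ with $\mathbf{X}\mathbf{Z}^{*}=\mathbf{X}$ and a vanishing first term — then shows that $(\mathbf{Z}^{*},\mathbf{Q}^{*})$ drives terms one and four to zero while the projection inequalities guarantee that no competitor can do better, which establishes global optimality.

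The hard part will be the reconstruction term: the Pythagorean split is exact only for orthogonal subspaces, so for merely independent subspaces I would instead work with the oblique projectors onto each $S_k$, bounding the error through the unique direct-sum decomposition and tracking the strict-inequality bookkeeping more carefully. A secondary delicacy is the coupling between $\mathbf{Z}$ and $\mathbf{Q}$ in the joint minimization: since $\mathbf{Q}$ enters only through the fourth term, I would formalize the argument as an alternating lower bound — first block-diagonalizing $\mathbf{Z}$ at fixed $\mathbf{Q}^{*}$, then showing any alternative $K$-way $\mathbf{Q}$ only inflates that term — and rely on the within-subspace connectivity assumption to exclude spurious mergings or splits that retain $K$ clusters yet disagree with $\mathbf{Q}^{*}$.
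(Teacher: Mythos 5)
Your proposal is correct and follows essentially the same route as the paper's own proof: both project an arbitrary feasible $\mathbf{Z}$ onto its block-diagonal part (the paper writes this as the decomposition $\mathbf{Z}^{*}=\mathbf{Z}^{D}+\mathbf{Z}^{C}$) and show term by term that the reconstruction error cannot increase under the orthogonal-subspace assumption, that $\|\mathbf{Z}^{\mathrm{T}}\mathbf{Z}\|_{1}=\mathbf{e}^{\mathrm{T}}\mathbf{Z}^{\mathrm{T}}\mathbf{Z}\mathbf{e}$ cannot increase by nonnegativity, that the TVS term cannot increase because each $\mathcal{N}_i$ stays within one subspace, and that $\|\mathbf{Z}\|_{\mathbf{Q}^{*}}$ vanishes on block-diagonal $\mathbf{Z}$. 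Your treatment is in fact slightly more careful than the paper's in two places it glosses over --- the optimality of $\mathbf{Q}^{*}$ itself (via within-subspace connectivity) and the orthogonal-versus-independent subspace distinction in the Pythagorean step --- but the underlying argument is the same.
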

		\begin{proof}
			See Appendix \ref{app:prop-BDconstraint}.
		\end{proof}

		Proposition~\ref{prop:BDconstraint} demonstrates that, under the assumption that frames are distributed across distinct subspaces, the optimal solution to problem (\ref{Prob:TVSH}) will align with the true segmentation. However, due to the influence of noise, the human motion data may not lie perfectly within the subspaces.
		Thus, there are inherent trade-offs in (\ref{Prob:TVSH}) due to practical dataset challenges such as image noise and subtle motions, which may cause frames not to align precisely with $K$ subspaces. Specifically, the term $\|\mathbf{X}-\mathbf{X}\mathbf{Z}\|_{\text{F}}^{2} + \|\mathbf{Z}^{\mathrm{T}}\mathbf{Z}\|_{1}$ ensures sparsity and accurate data subspace embedding in the outputs $\mathbf{z}_{1}, \mathbf{z}_{2}, ..., \mathbf{z}_{N}$, promoting distinctiveness among them. In contrast, the term $\|\mathbf{Z}\mathbf{G}\|_{2,1}$ requires these outputs to align with their neighbors' coefficients $\{\mathbf{z}_{l}\}_{l\in\mathcal{N}_{i}}$.
		This necessitates a balance between representing data across $K$ clusters and maintaining temporal, aiming to segment the data sequence into smaller segments where, for instance, in a segment $[\mathbf{x}_{i}, \mathbf{x}_{i+1}, ..., \mathbf{x}_{j}]$ with $j>i$, the subspace embedding are identical, i.e., $\mathbf{z}_{i} = \mathbf{z}_{i+1} = ... = \mathbf{z}_{j}$.
		Additionally, the term $\|\mathbf{Z}\|_{\mathbf{Q}}$ promotes effective grouping based on $\mathbf{Z}$ while adhering to the constraint $\mathbf{Q} \in \mathcal{Q}$, which stipulates that the cluster assignment of the $i$th frame must match that of its neighbors, introducing a further trade-off between dependent clustering and TVS considerations.

			{\color{black}
	\begin{theorem}[Impact of TVS on Segmentation]
		\label{thm:tvs_influence_simplified}
		Assume each motion segment generates data lying in one of $K$ linear subspaces with within-segment variance $\sigma^2$ and between-subspace separation $\Delta_{\text{sub}}^2>0$.  
		With independent LLM adjacency errors of rate $p<\tfrac{1}{2}$,  
		the expected segmentation error satisfies $\mathbb{E}[\mathrm{Err}_{\text{HMS}}] \le C_1\frac{p}{L_{\min}} + C_2\frac{\sigma^2}{\Delta_{\text{sub}}^2}$.
		When TVS boundaries align with true actions, the optimal solution \( \mathbf{Z}^* \) becomes block-diagonal, achieving exact segmentation.  
	\end{theorem}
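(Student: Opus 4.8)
The plan is to decompose the expected segmentation error into two statistically independent contributions --- a \emph{combinatorial} error arising from incorrect LLM adjacency labels, and a \emph{geometric} error arising from within-segment deviation of the data from its nominal subspace --- and to bound each separately before combining them additively by linearity of expectation. I would begin with the exact case, since it grounds the perturbation argument. When the TVS boundaries coincide with the true motion boundaries, the neighborhoods $\{\mathcal{N}_i\}$ never cross a subspace boundary, so the constraint set $\mathcal{Q}$ is consistent with the ground-truth assignment $\mathbf{Q}^*$; taking $\sigma^2=0$ so that the columns of $\mathbf{X}$ lie exactly in the $K$ subspaces, Proposition~\ref{prop:BDconstraint} applies verbatim and yields an optimal $\mathbf{Z}^*$ that is block-diagonal after permutation by $\mathbf{Q}^*$, i.e.\ exact segmentation. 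This disposes of the final claim and fixes the reference point around which the two error terms are measured.

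For the $C_1 p/L_{\min}$ term I would work entirely with the adjacency randomness, treating the data as noiseless. The key is to separate the two failure modes of the Boolean adjacency list. A \emph{false split} (a within-segment label $1$ flipped to $0$) is an isolated defect inside a run of correct labels of length at least $L_{\min}-1$; by the isotropic graph-TV interpretation of Theorem~\ref{thm:graph_tv_equiv} and the smoothing statement of Theorem~\ref{thm:consistency_llm}, minimizing $\|\mathbf{Z}\mathbf{G}\|_{2,1}$ keeps the embedding constant across such a run, so isolated false splits are corrected and contribute only the exponentially small probability (as $p<\tfrac{1}{2}$) of a majority of consecutive flips. The damaging mode is a \emph{false merge} (a true-boundary label $0$ flipped to $1$), which chains two segments together. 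Since each segment has length at least $L_{\min}$, the number of true boundaries is at most $N/L_{\min}$, and each is flipped independently with probability $p$; hence $\mathbb{E}[\#\text{surviving boundary errors}] \le p\,(N/L_{\min})$. Normalizing the segmentation error per frame then gives exactly $C_1 p/L_{\min}$, which is consistent with the $O(pN)$ count of Theorem~\ref{thm:consistency_llm}.

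The remaining term $C_2\,\sigma^2/\Delta_{\text{sub}}^2$ isolates the subspace-noise contribution under correct adjacency, and this is where the main difficulty lies. I would treat the noisy data as a perturbation $\mathbf{X}=\mathbf{X}_0+\mathbf{E}$ of the ideal subspace data $\mathbf{X}_0$ with $\mathbb{E}\|\mathbf{E}\|^2 \propto \sigma^2$, and argue via a perturbation analysis of the coupled objective in~\eqref{Prob:TVSH} that the optimal $\mathbf{Z}^*$ remains within $O(\sigma^2/\Delta_{\text{sub}}^2)$ of the block-diagonal minimizer guaranteed by Proposition~\ref{prop:BDconstraint}; a point is then misassigned only when the cross-block leakage in $\mathbf{Z}^*$ exceeds the within-block affinity, an event controlled by the signal-to-noise ratio $\sigma^2/\Delta_{\text{sub}}^2$. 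Combining the two bounds by linearity of expectation yields the stated inequality. \textbf{The hard part} will be this last step: unlike the adjacency term, which follows almost directly from Theorems~\ref{thm:graph_tv_equiv} and~\ref{thm:consistency_llm}, bounding the misclassification rate requires propagating the data perturbation through the non-smooth, constrained $(\mathbf{Z},\mathbf{Q})$ optimization and showing that the competing fidelity term $\|\mathbf{X}-\mathbf{X}\mathbf{Z}\|_{\text{F}}^{2}+\|\mathbf{Z}^{\mathrm{T}}\mathbf{Z}\|_{1}$ keeps cross-subspace coefficients small even when a neighborhood constraint mildly pulls against it --- a Davis--Kahan-type argument that must be adapted to the $\ell_{2,1}$- and $\mathbf{Q}$-regularized setting rather than a clean eigenvector gap.
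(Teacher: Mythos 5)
Your proposal follows essentially the same route as the paper's proof: both decompose $\mathbb{E}[\mathrm{Err}_{\text{HMS}}]$ into an adjacency-noise term and a subspace-noise term, bound the first by counting independent flips and invoking the graph-TV healing property of $\|\mathbf{Z}\mathbf{G}\|_{2,1}$ (Theorems~\ref{thm:graph_tv_equiv} and~\ref{thm:consistency_llm}) to arrive at $C_1 p/L_{\min}$, handle the second by a margin/perturbation argument giving $C_2\sigma^2/\Delta_{\text{sub}}^2$, and establish exact block-diagonal recovery when the TVS boundaries are correct. Your split into false merges versus healed false splits is a slightly sharper accounting than the paper's $B_{\text{err}}\le 2F$ bound, and your flagged ``hard part'' is precisely the step the paper also leaves at the level of citing standard subspace-clustering perturbation arguments.
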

			\begin{proof}
		See Appendix \ref{sec:proof_tvs_influence}.
	\end{proof}

	Theorem \ref{thm:tvs_influence_simplified} establishes that TVS improves segmentation robustness by suppressing random adjacency errors and stabilizing intra-segment embeddings.  
	The first term $C_1\frac{p}{L_{\min}}$ reflects the resilience to LLM-induced boundary noise, which diminishes as segment length increases, while the second term $C_2\frac{\sigma^2}{\Delta_{\text{sub}}^2}$ captures the dependence on subspace separability.  
	Perfectly aligned TVS boundaries yield theoretically exact segmentation, confirming the effectiveness of LLM-guided temporal reasoning in enhancing motion boundary localization.
	
	}

		\subsection{A Feedback-Enabled Optimization Algorithm}
		We employ the ADMM method \cite{boyd2011distributed} to solve the optimization problem formulated in \eqref{Prob:TVSH}. In order to separate the third term in \eqref{Prob:TVSH} from the other three terms, we introduce an additional variable $\mathbf{H}=\mathbf{Z}\mathbf{G}$. By incorporating an augmented Lagrangian multiplier to handle the introduced linear constraint, we can reformulate \eqref{Prob:TVSH} as the following problem:
		\begin{align}
			&\underset{\mathbf{Z},\mathbf{H},\mathbf{Q}}{\text{minimize}}  \quad\|\mathbf{X}-\mathbf{X}\mathbf{Z}\|_{\text{F}}^{2}+\|\mathbf{Z}^{\mathrm{T}}\mathbf{Z}\|_{1}+\|\mathbf{H}\|_{2,1}\nonumber\\
			&\quad\quad\quad+\langle\mathbf{F},\mathbf{H}-\mathbf{Z}\mathbf{G}\rangle+\frac{\gamma}{2}\|\mathbf{H}-\mathbf{Z}\mathbf{G}\|_{\text{F}}^{2}+\|\mathbf{Z}\|_{\mathbf{Q}}\label{Prob:NCGC_Lap}\\
			&\text{subject to}  \quad\mathrm{diag}(\mathbf{Z})=0,\mathbf{Z}\geq0,\mathbf{Q}\in\mathcal{Q}\nonumber
		\end{align}
		where $\mathbf{F}\in\mathbbm{R}^{N\times N}$ is the Lagrangian multiplier and $\gamma$ is
		an adaptive weight parameter for enforcing the condition $\mathbf{H}=\mathbf{Z}\mathbf{G}$.
		To solve \eqref{Prob:NCGC_Lap}, we adopt a feedback-enabled optimization strategy, where we iteratively solve three sub-problems for $\mathbf{Z}$, $\mathbf{H}$, and $\mathbf{Q}$ while keeping the other fixed, respectively.

		\subsubsection{$\mathbf{Z}$-solution} Fixing $\mathbf{H}$ and $\mathbf{Q}$, solve for $\mathbf{Z}$ by
		\begin{align}
			\underset{\mathbf{Z}}{\text{minimize}} & \quad\|\mathbf{X}-\mathbf{X}\mathbf{Z}\|_{\text{F}}^{2}+\|\mathbf{Z}^{\mathrm{T}}\mathbf{Z}\|_{1}+\langle\mathbf{F},\mathbf{H}-\mathbf{Z}\mathbf{G}\rangle\nonumber\\
			&\quad\quad\quad\quad\quad\quad\quad+\frac{\gamma}{2}\|\mathbf{H}-\mathbf{Z}\mathbf{G}\|_{\text{F}}^{2}+\|\mathbf{Z}\|_{\mathbf{Q}}\label{subProb:Z}\\
			\text{subject to} & \quad\mathrm{diag}(\mathbf{Z})=0,\mathbf{Z}\geq0\nonumber
		\end{align}
		Since $\mathbf{Z}$ consists of non-negative elements, we can rewrite the objective function in \eqref{subProb:Z} as a function:
		\begin{align}
			&\mathcal{J}(\mathbf{Z})=\|\mathbf{X}-\mathbf{X}\mathbf{Z}\|_{\text{F}}^{2}+\mathbf{e}^{\mathrm{T}}\mathbf{Z}^{\mathrm{T}}\mathbf{Z}\mathbf{e}+\langle\mathbf{F},\mathbf{H}-\mathbf{Z}\mathbf{G}\rangle\nonumber\\
			&\quad\quad\quad\quad\quad\quad\quad+\frac{\gamma}{2}\|\mathbf{H}-\mathbf{Z}\mathbf{G}\|_{\text{F}}^{2}+\|\bm{\Theta}\odot \mathbf{Z}\|_{1}	\label{eq:JZ}
		\end{align}
		
		The sub-problem defined in \eqref{subProb:Z} can be formulated as a convex quadratic programming problem with specific constraints for the variable $\mathbf{Z}$, involving the function $\mathcal{J}(\mathbf{Z})$ from \eqref{eq:JZ}. In this problem, we aim to minimize $\mathcal{J}(\mathbf{Z})$ while satisfying the given constraints. To tackle this, we employ the projected gradient method, which is a well-established approach known for its simplicity and effectiveness in solving such problems. This method is chosen as our preferred solution due to its suitability for our problem's requirements.

		\begin{algorithm}[t] 
			\caption{TVSH method.}
			\label{alg:TVSH}
			\textbf{Input}: $\mathbf{X}$.~\\
			\textbf{Output}: $\{\mathcal{C}_{1},\mathcal{C}_{2},...,\mathcal{C}_{K}\}$\\
			\begin{algorithmic}[1] %[1] enables line numbers
				\STATE Initialize $\mathbf{G}$, $\mathbf{F}=\mathbf{1}$, $\rho=1.1, \gamma=0.1$. $\mathbf{H}=\mathbf{Z}\mathbf{G}$ where $\mathbf{Z}$ is the similarity matrix given by cosine measurement. $\mathbf{Q}$ is initialized by K-means \cite{ng2002spectral}.
				\REPEAT
				\STATE Find $\mathbf{Z}$ by solving \eqref{eq:d=0}.
				\STATE Calculate the projection $\mathbf{Z}\leftarrow \prod_{\mathcal{Z}}(\mathbf{Z})$ by solving \eqref{eq:PiZ}.
				\STATE Find $\mathbf{H}$ by solving \eqref{subProb:U};
				\STATE Update $\mathbf{F}\leftarrow\mathbf{F}+\gamma(\mathbf{H}-\mathbf{Z}\mathbf{G})$, $\gamma\leftarrow\rho\gamma$.
				\STATE Update $\mathbf{Q}$ by the following steps:
				\STATE Calculate  $\ensuremath{\mathbf{L}}=\mathbf{I}-\mathbf{D}^{-1/2}[(|\mathbf{Z}|+|\mathbf{Z}^{\mathrm{T}}|)/2]\mathbf{D}^{-1/2}$. \STATE Compute the smallest $K$ eigenvectors of $\mathbf{L}$ denoted by $\mathbf{V}=[\mathbf{v}_{1},\mathbf{v}_{2},...,\mathbf{v}_{K}]$. Denote the row  of $\mathbf{V}$ as $\{\mathbf{u}_{i}\}_{i=1}^{N}$. 
				\REPEAT
				\FOR{$k=1$ {\bfseries to} $K$}
				\STATE Form weight matrix $\mathbf{W}_{k}$ and calculate $\bm{{\mu}}_{k}$.
				\ENDFOR
				\FOR{$k=1$ {\bfseries to} $K$}
				\STATE $\mathcal{C}_{k}\leftarrow\{i\in\{1,2,...,N\}:k=\mathrm{arg\:min}_{k\in\{1,2,...,K\}}\:||\mathbf{u}_{i}-\bm{{\mu}}_{k}||_{2}^{2}w_{k,i}\}$
				\ENDFOR
					\UNTIL{$\mathbf{Q}$
						can not be changed.}
					\UNTIL{The objective function value of \eqref{Prob:NCGC_Lap} can not be decreased.}
				\end{algorithmic}
				
			\end{algorithm}

			Consider the partial derivative of $\|\bm{\Theta}\odot \mathbf{Z}\|_{1}$ with respect to each element \(Z_{ij}\):
			\[
			\frac{\partial}{\partial Z_{ij}} \left( \sum_{k,l} |\Theta_{kl} Z_{kl}| \right) = \frac{\partial}{\partial Z_{ij}} |\Theta_{ij} Z_{ij}|
			\]
			Using the properties of the absolute value function, we get:
			\[
			\frac{\partial}{\partial Z_{ij}} |\Theta_{ij} Z_{ij}| = \Theta_{ij} \cdot \text{sign}(\Theta_{ij} Z_{ij})
			\]
			where \(\text{sign}(x)\) is the sign function, defined as:
			\[
			\text{sign}(x) = \begin{cases}
				1, & \text{if } x > 0 \\
				0, & \text{if } x = 0 \\
				-1, & \text{if } x < 0
			\end{cases}
			\]
			Therefore, the partial derivative for each element \(Z_{ij}\) is:
			\[
			\frac{\partial \|\bm{\Theta} \odot \mathbf{Z}\|_{1}}{\partial Z_{ij}} = \Theta_{ij} \cdot \text{sign}(\Theta_{ij} Z_{ij})
			\]
			Combining all the partial derivatives into matrix form, the gradient of \(\|\bm{\Theta} \odot \mathbf{Z}\|_{1}\) with respect to \(\mathbf{Z}\) is:
			\[
			\frac{\partial \|\bm{\Theta} \odot \mathbf{Z}\|_{1}}{\partial \mathbf{Z}} = \bm{\Theta} \odot \text{sign}(\bm{\Theta} \odot \mathbf{Z})
			\]
			where \(\text{sign}(\bm{\Theta} \odot \mathbf{Z})\) is the matrix obtained by applying the sign function element-wise to \(\bm{\Theta} \odot \mathbf{Z}\).
			
			The derivative of $\mathcal{J}(\mathbf{Z})$ with respect to $\mathbf{Z}$ can be expressed as:
			$
			\partial\mathcal{J}(\mathbf{Z})=-2\mathbf{X}^{\mathrm{T}}(\mathbf{X}-\mathbf{X}\mathbf{Z})+2\mathbf{Z}\mathbf{E}-\mathbf{F}\mathbf{G}^{\mathrm{T}}-\gamma(\mathbf{H}-\mathbf{Z}\mathbf{G})\mathbf{G}^{\mathrm{T}}+\bm{\Theta}\odot\text{sign}(\bm{\Theta}\odot\mathbf{Z})
			$
			where $\mathbf{E}\in\mathbbm{R}^{N\times N}$ is an all-one matrix.

			Setting the derivative to zero gives
			\begin{align}
				&2\mathbf{X}^{\mathrm{T}}\mathbf{X}\mathbf{Z}+\mathbf{Z}(2\mathbf{E}+\gamma\mathbf{G}\mathbf{G}^{\mathrm{T}})\nonumber\\
				=&\mathbf{F}\mathbf{G}^{\mathrm{T}}+\gamma\mathbf{H}\mathbf{G}^{\mathrm{T}}+2\mathbf{X}^{T}\mathbf{X}-\bm{\Theta}\odot\text{sign}(\bm{\Theta}\odot\mathbf{Z}).
				\label{eq:d=0}
			\end{align}
			The equation presented is a well-known Sylvester equation in the form $\mathbf{A}\mathbf{Z}+\mathbf{Z}\mathbf{B}=\mathbf{C}$, where $\mathbf{A}=2\mathbf{X}^{\mathrm{T}}\mathbf{X}$, $\mathbf{B}=2\mathbf{E}+\gamma\mathbf{G}\mathbf{G}^{\mathrm{T}}$, and $\mathbf{C}=\mathbf{F}\mathbf{G}^{\mathrm{T}}+\gamma\mathbf{H}\mathbf{G}^{\mathrm{T}}+2\mathbf{X}^{T}\mathbf{X}-\bm{\Theta}\odot\text{sign}(\bm{\Theta}\odot\mathbf{Z})$. 
			
			We adopt Bartels-Stewart algorithm \cite{bartels1972solution} to solve $\mathbf{Z}$. Specifically, we first perform Schur decomposition on \( \mathbf{A} \) and \( \mathbf{B} \). The Schur decomposition of $\mathbf{A}$ and $\mathbf{B}$ is given by 
			$
			\mathbf{A} = \mathbf{Q}_A \mathbf{T}_A \mathbf{Q}_A^H
			$ and 
			$
			\mathbf{B} = \mathbf{Q}_B \mathbf{T}_B \mathbf{Q}_B^H
			$, where $\mathbf{Q}_A$ and $\mathbf{Q}_B$ are unitary matrices and $\mathbf{T}_A$ and $\mathbf{T}_B$ are upper triangular matrices.
			Then, we use the unitary matrices from the Schur decomposition to transform \( \mathbf{C} \) to $\mathbf{C}' = \mathbf{Q}_A^H \mathbf{C} \mathbf{Q}_B$.
			Next, we solve the simplified equation \( \mathbf{T}_A \mathbf{Z}' + \mathbf{Z}' \mathbf{T}_B = \mathbf{C}' \). This can be done using a back-substitution method since \( \mathbf{T}_A \) and \( \mathbf{T}_B \) are upper triangular matrices.
			Finally, we transform \( \mathbf{Z}' \) back to \( \mathbf{Z} \) by $\mathbf{Z} = \mathbf{Q}_A \mathbf{Z}' \mathbf{Q}_B^H$.
			
			However, it's worth noting that the critical frame $\mathbf{Z}$ of the objective function may not necessarily lie within the feasible set defined in \eqref{subProb:Z}. To address this, we can employ a projection operator to find a feasible frame starting from the critical frame $\mathbf{Z}$.
			\begin{align}
				{\prod}_{\mathcal{Z}}(\mathbf{Z})={\mathrm{arg\:min}_{\tilde{\mathbf{Z}}\in\mathcal{Z}}}\:\|\tilde{\mathbf{Z}}-\mathbf{Z}\|_{\text{F}}^{2}
				\label{eq:PiZ}
			\end{align}
			where $\mathcal{Z}=\{\mathbf{Z}|\mathrm{diag}(\mathbf{Z})=0,\mathbf{Z}\geq0\}$.
			For a simple and quick solution to \eqref{eq:PiZ}, we implement the projection
			operator $\prod_{\mathcal{Z}}(\mathbf{Z})$ as follows:
			\begin{align*}
				z_{ij}^{*}=\begin{cases}
					z_{ij} & \mathrm{if}\:z_{ij}\geq0\:\mathrm{and}\:i\neq j\\
					0 & \mathrm{\mathrm{i}f}\:z_{ij}<0\:\mathrm{or}\:i=j
				\end{cases}
			\end{align*}
			where $z_{ij}$ and $z_{ij}^{*}$ are the elements of $\mathbf{Z}$
			and its projection $\prod_{\mathcal{Z}}(Z)$, respectively.

			\subsubsection{$\mathbf{H}$-solution} Fixing $\mathbf{Z}$ and $\mathbf{Q}$, solve for $\mathbf{H}$ by
			\begin{equation}
				\begin{aligned}
					&\underset{\mathbf{H}}{\text{minimize}} \quad\|\mathbf{H}\|_{2,1}+\langle\mathbf{F},\mathbf{H}-\mathbf{Z}\mathbf{G}\rangle\\
					&\quad\quad\quad\quad\quad\quad\quad+\frac{\mathbf{H}-\mathbf{Z}\mathbf{G}}{2}\|\mathbf{H}-\mathbf{Z}\mathbf{G}\|_{\text{F}}^{2}\label{subProb:U}
				\end{aligned}
			\end{equation}
			which is equivalent to minimizing
			$ \quad\|\mathbf{H}\|_{2,1}+\frac{\gamma}{2}\|\mathbf{H}-(\mathbf{Z}\mathbf{G}-(1/\gamma)\mathbf{F})\|_{\text{F}}^{2}
			$
			with respect to $\mathbf{H}$.
			Denote $\mathbf{P}=\mathbf{Z}\mathbf{G}-(1/\gamma)\mathbf{F}$. Then
			the closed-form solution to \eqref{subProb:U} will be given as follows \cite{haynes2017computationally}:
			\[
			\mathbf{h}_{i}=\begin{cases}
				\frac{\|\mathbf{p}_{i}\|-(/\gamma)}{\|\mathbf{p}_{i}\|}\mathbf{p}_{i} & \mathrm{if}\:\|\mathbf{p}_{i}\|>/\gamma\\
				0 & \mathrm{otherwise}
			\end{cases}
			\]
			where $\mathbf{h}_{i}$, $\mathbf{p}_{i}$ are the ith column of $\mathbf{H}$, $\mathbf{P}$,
			respectively.

			{\color{black}	\begin{proposition}[Optimality]
					\label{prop:H-global}
					For fixed $(\mathbf{Z},\mathbf{Q},\mathbf{F},\gamma)$ the subproblem (\ref{subProb:U}) is the proximal operator of the $\ell_{2,1}$ norm and admits the closed-form group-shrinkage solution. Hence the $\mathbf{H}$-update attains the global minimizer of (\ref{subProb:U}) at every iteration.
			\end{proposition}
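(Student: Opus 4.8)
The plan is to first rewrite \eqref{subProb:U} in the canonical proximal form, then invoke convexity and column-wise separability to reduce the matrix problem to $N$ independent vector problems, and finally solve each vector problem in closed form via the subdifferential optimality condition. First I would complete the square on the last two terms of \eqref{subProb:U}: writing $\mathbf{P} = \mathbf{Z}\mathbf{G} - (1/\gamma)\mathbf{F}$, the inner product and the Frobenius penalty combine as
\[
\langle \mathbf{F}, \mathbf{H} - \mathbf{Z}\mathbf{G}\rangle + \frac{\gamma}{2}\|\mathbf{H} - \mathbf{Z}\mathbf{G}\|_{\text{F}}^{2} = \frac{\gamma}{2}\|\mathbf{H} - \mathbf{P}\|_{\text{F}}^{2} + c,
\]
where $c$ is a constant independent of $\mathbf{H}$. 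Hence minimizing \eqref{subProb:U} is equivalent to computing $\operatorname{prox}_{(1/\gamma)\|\cdot\|_{2,1}}(\mathbf{P}) = \arg\min_{\mathbf{H}} \|\mathbf{H}\|_{2,1} + \frac{\gamma}{2}\|\mathbf{H} - \mathbf{P}\|_{\text{F}}^{2}$, which identifies the update as the proximal operator of the $\ell_{2,1}$ norm.

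Next I would exploit separability. Since $\|\mathbf{H}\|_{2,1} = \sum_{i} \|\mathbf{h}_i\|_2$ and $\|\mathbf{H} - \mathbf{P}\|_{\text{F}}^{2} = \sum_i \|\mathbf{h}_i - \mathbf{p}_i\|_2^2$, the objective decouples across columns, so it suffices to solve, for each $i$, the scalarized problem $\min_{\mathbf{h}_i} \|\mathbf{h}_i\|_2 + \frac{\gamma}{2}\|\mathbf{h}_i - \mathbf{p}_i\|_2^2$. Each such objective is convex (a norm plus a convex quadratic) and coercive, hence attains a minimizer; moreover the quadratic term is strictly convex, so the minimizer is unique. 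To obtain it explicitly I would apply the first-order optimality condition $0 \in \partial \|\mathbf{h}_i\|_2 + \gamma(\mathbf{h}_i - \mathbf{p}_i)$ and split on whether $\mathbf{h}_i$ vanishes. Using the subdifferential $\partial\|\mathbf{h}\|_2 = \{\mathbf{h}/\|\mathbf{h}\|_2\}$ for $\mathbf{h}\neq \mathbf{0}$ and $\partial\|\mathbf{0}\|_2 = \{\mathbf{g} : \|\mathbf{g}\|_2 \leq 1\}$, the nonzero case forces $\mathbf{h}_i$ to be a positive multiple of $\mathbf{p}_i$ and yields the group-shrinkage form $\mathbf{h}_i = \frac{\|\mathbf{p}_i\| - 1/\gamma}{\|\mathbf{p}_i\|}\mathbf{p}_i$, valid precisely when $\|\mathbf{p}_i\| > 1/\gamma$, while the zero case is optimal exactly when $\|\mathbf{p}_i\| \leq 1/\gamma$. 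This recovers the closed form stated after \eqref{subProb:U}.

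The main obstacle is the nondifferentiability of the $\ell_{2,1}$ norm at columns $\mathbf{h}_i = \mathbf{0}$, which rules out a naive gradient argument and requires the subdifferential characterization above; the care lies in verifying that the two branches partition the domain without overlap or gap, i.e. that the threshold $\|\mathbf{p}_i\| = 1/\gamma$ is consistent with both cases. Once this is handled, global optimality follows immediately: each column subproblem is strictly convex with a unique closed-form minimizer, and the concatenation over columns of these minimizers is the unique global minimizer of the separable objective, so the $\mathbf{H}$-update in Algorithm~\ref{alg:TVSH} attains the global minimum of \eqref{subProb:U} at every iteration.
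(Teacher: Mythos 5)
Your proposal is correct and follows essentially the same route as the paper's proof: completing the square to identify the subproblem as $\operatorname{prox}_{(1/\gamma)\|\cdot\|_{2,1}}(\mathbf{P})$ with $\mathbf{P}=\mathbf{Z}\mathbf{G}-(1/\gamma)\mathbf{F}$, decoupling column-wise, and solving each vector problem through the subdifferential optimality condition with the two cases $\mathbf{h}_i\neq\mathbf{0}$ and $\mathbf{h}_i=\mathbf{0}$. The threshold analysis and the strict-convexity argument for uniqueness and global optimality match the paper's reasoning.
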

				\begin{proof}
			See Appendix \ref{sec:proof_H_global}.
		\end{proof}
		}

			\subsubsection{$\mathbf{Q}$-solution} Fixing $\mathbf{Z}$ and $\mathbf{H}$, solve for $\mathbf{Q}$ by
			\begin{align}
				\label{Prob:Q}
				\underset{\mathbf{Q}}{\min}\ \|\mathbf{Z}\|_{\mathbf{Q}},\quad\text{subject to}\quad \mathbf{Q}\in\mathcal{Q}
			\end{align}

			\begin{proposition}
				\label{prop:equalForm}
				We have the following equivalent problem
				$$\underset{\mathbf{Q}}{\min}\ \|\mathbf{Z}\|_{\mathbf{Q}}\iff\underset{\mathbf{Q}}{\min}\ \text{Trace}(\mathbf{Q}^{\top}(\mathbf{D}-(|\mathbf{Z}|+|\mathbf{Z}^{\mathrm{T}}|)/2)\mathbf{Q})$$
				where  the matrix $\mathbf{D}$ is known as the degree matrix. The degree matrix $\mathbf{D}$ is defined as: $D_{ii} = \sum_{j=1}^{N} [(|\mathbf{Z}|+|\mathbf{Z}^{\mathrm{T}}|)/2]_{ij}$. For all off-diagonal elements \( i \neq j \), \( D_{ij} = 0 \).
			\end{proposition}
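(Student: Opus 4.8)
The plan is to show that the two objective functions are in fact \emph{identical} pointwise in $\mathbf{Q}$, so that the equivalence of the two minimization problems follows immediately. Write $\mathbf{W} = (|\mathbf{Z}| + |\mathbf{Z}^{\mathrm{T}}|)/2$ for the symmetrized weight matrix; note $\mathbf{W}$ is symmetric by construction, and $\mathbf{D}$ is precisely its degree matrix, so $\mathbf{L} := \mathbf{D} - \mathbf{W}$ is the associated graph Laplacian. The target reduces to proving $\|\mathbf{Z}\|_{\mathbf{Q}} = \mathrm{Trace}(\mathbf{Q}^{\top}\mathbf{L}\mathbf{Q})$ for every feasible $\mathbf{Q}$.

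First I would recall the definition established earlier, $\|\mathbf{Z}\|_{\mathbf{Q}} = \tfrac{1}{2}\sum_{i,j} W_{ij}\,\|\mathbf{q}_i - \mathbf{q}_j\|_2^2$, where $\mathbf{q}_i$ denotes the $i$-th row of $\mathbf{Q}$ (the cluster indicator of frame $i$). The key intermediate step is the standard scalar Laplacian identity: for any vector $\mathbf{f} \in \mathbb{R}^N$, expanding $(f_i - f_j)^2 = f_i^2 - 2 f_i f_j + f_j^2$ and invoking the degree definition $D_{ii} = \sum_j W_{ij}$ together with the symmetry of $\mathbf{W}$ gives $\tfrac{1}{2}\sum_{i,j} W_{ij}(f_i - f_j)^2 = \mathbf{f}^{\top}(\mathbf{D}-\mathbf{W})\mathbf{f} = \mathbf{f}^{\top}\mathbf{L}\mathbf{f}$.

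Next I would lift this identity to the matrix $\mathbf{Q}$ column by column. Writing $\mathbf{Q}_{:,k}$ for the $k$-th column of $\mathbf{Q}$ and using linearity of the trace, $\mathrm{Trace}(\mathbf{Q}^{\top}\mathbf{L}\mathbf{Q}) = \sum_{k=1}^{K}\mathbf{Q}_{:,k}^{\top}\mathbf{L}\,\mathbf{Q}_{:,k}$. Applying the scalar identity to each column and interchanging the order of summation yields $\sum_k \tfrac{1}{2}\sum_{i,j} W_{ij}(Q_{ik}-Q_{jk})^2 = \tfrac{1}{2}\sum_{i,j} W_{ij}\sum_k (Q_{ik}-Q_{jk})^2 = \tfrac{1}{2}\sum_{i,j} W_{ij}\|\mathbf{q}_i - \mathbf{q}_j\|_2^2$, which is exactly $\|\mathbf{Z}\|_{\mathbf{Q}}$. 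Since the two expressions coincide for every feasible $\mathbf{Q}$, they are minimized by the same argument, establishing the stated equivalence.

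The main obstacle is bookkeeping rather than anything conceptual: one must keep the symmetrization $\mathbf{W} = (|\mathbf{Z}| + |\mathbf{Z}^{\mathrm{T}}|)/2$ consistent throughout, since the scalar Laplacian identity genuinely requires a symmetric weight matrix, which is exactly why the off-diagonal terms $|Z_{ij}|$ and $|Z_{ji}|$ are averaged. One must also track the factor of $\tfrac{1}{2}$ correctly through the column-wise expansion, and remain careful that $\mathbf{q}_i$ indexes the $i$-th \emph{row} of $\mathbf{Q}$ while the trace decomposition runs over its \emph{columns}. Once the scalar identity is secured, the column-wise lift and the swap of summation order are entirely routine.
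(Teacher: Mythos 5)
Your proposal is correct and follows essentially the same route as the paper: both reduce the claim to the standard graph Laplacian identity $\tfrac{1}{2}\sum_{i,j}W_{ij}\|\mathbf{q}_i-\mathbf{q}_j\|_2^2=\mathrm{Trace}(\mathbf{Q}^{\top}(\mathbf{D}-\mathbf{W})\mathbf{Q})$ for the symmetrized weights $\mathbf{W}=(|\mathbf{Z}|+|\mathbf{Z}^{\mathrm{T}}|)/2$. The only cosmetic difference is that you prove the scalar identity first and lift it column by column via the trace, while the paper expands the row inner products directly; the underlying computation is identical.
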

			\begin{proof}
				See Appendix \ref{app:prop:equalForm}.
			\end{proof}
			
			The objective function in \eqref{Prob:Q} is the traditional normalized cut clustering problem \cite{shi2000normalized} with a TVS constraint.
			The Laplacian matrix $\mathbf{L}$ can be computed using the formula $\mathbf{L}=\mathbf{I}-\mathbf{D}^{-1/2}[(|\mathbf{Z}|+|\mathbf{Z}^{\mathrm{T}}|)/2]\mathbf{D}^{-1/2}$, where $\mathbf{I}$ is an identity matrix. Consequently, the eigenvectors $\mathbf{v}_{1},\mathbf{v}_{2},...,\mathbf{v}_{K}$ corresponding to the first $K$ smallest eigenvalues $\tilde{\lambda}_1,\tilde{\lambda}_2,...,\tilde{\lambda}_{K}$ of $\mathbf{L}$ are computed, satisfying $\mathbf{L}\mathbf{v}_{k}=\tilde{\lambda}_{k}\mathbf{v}_{k}$. These eigenvectors are arranged as columns in a matrix $\mathbf{V}\in\mathbbm{R}^{N\times K}$.
			
			Let $\mathbf{u}_{i} \in \mathbb{R}^{K}$ represent the vector of the $i$th row of $\mathbf{V}$, where $i=1,...,N$. The problem \eqref{Prob:Q} can be relaxed to the following form:
			\begin{align}
				\label{prob:k-means}
				\underset{\{\mathcal{C}_{k},\bm{{\mu}}_{k}\}_{k=1}^{K}}{\mathrm{minimize}}\:\sum_{k=1}^{K}\sum_{i\in\mathcal{C}_{k}}||\mathbf{u}_{i}-\bm{{\mu}}_{k}||_{2}^{2}, \quad\text{subject to}\quad \mathbf{Q}\in\mathcal{Q}
			\end{align}
			where $i\in \mathcal{C}_{k}$ if ${q}_{i,k}=1$.
			However, the requirement $\mathbf{q}_{i} = \mathbf{q}_{j} \, \forall j \in \mathcal{N}_{i}$ in the constraint $\mathbf{Q}\in\mathcal{Q}$ makes solving problem \eqref{prob:k-means} highly challenging. Since the constraint mandates that the clustering assignments of the $i$th frame and its neighbors remain consistent, we relax the constraint to that the clustering center corresponding to the $i$th frame should coincide with the center of its neighbors. This leads us to the formulation of the following problem:
			\begin{equation}
				\underset{\{\mathcal{C}_{k},\bm{{\mu}}_{k}\}_{k=1}^{K}}{\mathrm{minimize}}\:\sum_{k=1}^{K}\sum_{i\in\mathcal{C}_{k}}\Big(||\mathbf{u}_i-\bm{{\mu}}_{k}||_{2}^{2}+\eta\sum_{j\in \mathcal{N}_{i}}||\mathbf{u}_j-\bm{{\mu}}_{k}||_{2}^{2}\Big)\label{eq:Prob:TKm}
			\end{equation}
			where the penalty coefficient $\eta$ is set to $1/\mathcal{N}_{i}$ for weight balance.

			The term $\sum_{k=1}^{K}\sum_{i\in\mathcal{C}_{k}}||\mathbf{u}_{i}-\bm{{\mu}}_{k}||_{2}^{2}$ aims to independently fit all the data with the center $\{\bm{{\mu}}_{k}\}$. However, the term $\eta\sum_{k=1}^{K}\sum_{i\in\mathcal{C}_{k}}\sum_{j\in\mathcal{N}_{i}}||\mathbf{u}_{j}-\bm{{\mu}}_{k}||_{2}^{2}$ desires the $\bm{{\mu}}_{k}$ to be identical temporally, i.e., the center of $\mathbf{u}_{i}$ is the same as the center of $\mathbf{u}_{j}$ for any $j\in\mathcal{N}_{i}$. Consequently, minimizing these two terms simultaneously leads to a trade-off between fitting data to $K$ centers and maintaining temporal of the center assignment, where the desired outcome is to divide the data sequence into multiple small segments.

			It  is still challenging to solve problem \eqref{eq:Prob:TKm} directly due to its NP-hard nature. We first propose the following proposition, which will be utilized to adapt problem \eqref{eq:Prob:TKm} into a new form.
			\begin{proposition}
				The term 
				$\sum_{i\in\mathcal{C}_{k}}(||\mathbf{u}_i-\bm{{\mu}}_{k}||_{2}^{2}+\eta\sum_{j\in\mathcal{N}_{i}}||\mathbf{u}_j-\bm{{\mu}}_{k}||_{2}^{2})$  in (\ref{eq:Prob:TKm}) is equivalent to
				$ \sum_{i=1}^{N}||\mathbf{u}_i-\bm{{\mu}}_{k}||_{2}^{2}(\mathbbm{1}(i\in\mathcal{C}_{k})+ \eta n_{k}(i))$
				where $n_{k}(i)$ is the number of times the frame $\mathbf{u}_{i}$
				appears as a sequential neighbor of a frame in the $k$-th cluster,
				i.e., $n_{k}(i)=\sum_{j\in\mathcal{C}_{k}}\mathbbm{1}(i\in\mathcal{N}_{j})$
				and the indicator function $\mathbbm{1}(s)=1$
				if $s$ is true and zero otherwise. 
				\label{thm:Kmconvert}
			\end{proposition}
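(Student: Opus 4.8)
The plan is to establish the identity by elementary manipulation of finite double sums: split the cluster term into its two natural pieces, rewrite the first piece by extending the summation range with a membership indicator, and handle the second piece by interchanging the order of summation. Concretely, I would first write the inner term of \eqref{eq:Prob:TKm} as
\[
\sum_{i\in\mathcal{C}_{k}}\|\mathbf{u}_i-\bm{\mu}_{k}\|_{2}^{2}
+\eta\sum_{i\in\mathcal{C}_{k}}\sum_{j\in\mathcal{N}_{i}}\|\mathbf{u}_j-\bm{\mu}_{k}\|_{2}^{2}.
\]
The first summand is immediately recast as $\sum_{i=1}^{N}\|\mathbf{u}_i-\bm{\mu}_{k}\|_{2}^{2}\,\mathbbm{1}(i\in\mathcal{C}_{k})$, simply by summing over all $N$ indices and inserting the indicator $\mathbbm{1}(i\in\mathcal{C}_{k})$ to restrict the support back to $\mathcal{C}_{k}$; this accounts exactly for the $\mathbbm{1}(i\in\mathcal{C}_{k})$ factor in the target expression.

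The heart of the argument is the second summand. I would view $\eta\sum_{i\in\mathcal{C}_{k}}\sum_{j\in\mathcal{N}_{i}}\|\mathbf{u}_j-\bm{\mu}_{k}\|_{2}^{2}$ as a sum over the set of ordered pairs $(i,j)$ with $i\in\mathcal{C}_{k}$ and $j\in\mathcal{N}_{i}$, and then swap the roles so that the outer index is $j$. Collecting, for each fixed $j$, all contributions of $\|\mathbf{u}_j-\bm{\mu}_{k}\|_{2}^{2}$, its coefficient becomes the number of cluster members $i\in\mathcal{C}_{k}$ that regard $j$ as a temporal neighbor, namely $\sum_{i\in\mathcal{C}_{k}}\mathbbm{1}(j\in\mathcal{N}_{i})$. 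Comparing with the definition $n_{k}(i)=\sum_{j\in\mathcal{C}_{k}}\mathbbm{1}(i\in\mathcal{N}_{j})$ and relabeling the dummy index, this coefficient is precisely $n_{k}(j)$, so the second summand equals $\eta\sum_{j=1}^{N}\|\mathbf{u}_j-\bm{\mu}_{k}\|_{2}^{2}\,n_{k}(j)$.

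Adding the two rewritten pieces and renaming the dummy index from $j$ back to $i$ gives $\sum_{i=1}^{N}\|\mathbf{u}_i-\bm{\mu}_{k}\|_{2}^{2}\bigl(\mathbbm{1}(i\in\mathcal{C}_{k})+\eta\,n_{k}(i)\bigr)$, which is the claimed right-hand side. The only delicate point I would watch carefully is the bookkeeping in the interchange: one must confirm that the swapped coefficient $\sum_{i\in\mathcal{C}_{k}}\mathbbm{1}(j\in\mathcal{N}_{i})$ matches the stated definition of $n_{k}(j)$ with the neighbor relation taken in the correct direction (counting over members $j$ of $\mathcal{C}_{k}$ such that $i\in\mathcal{N}_{j}$), rather than inadvertently transposing $i\in\mathcal{N}_{j}$ and $j\in\mathcal{N}_{i}$. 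Notably, this matching relies purely on the definition of $n_{k}$ and does not require the temporal neighbor relation to be symmetric, and since every sum is finite the interchange is unconditionally valid, so no convergence considerations arise.
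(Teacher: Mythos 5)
Your proposal is correct and follows essentially the same route as the paper's proof: both split the cost into the direct term and the neighbor term, rewrite each as a sum over all $N$ indices with membership indicators, interchange the order of summation in the double sum so the coefficient of $\|\mathbf{u}_j-\bm{\mu}_k\|_2^2$ becomes $\sum_{i\in\mathcal{C}_k}\mathbbm{1}(j\in\mathcal{N}_i)=n_k(j)$, and then recombine. Your explicit check that this coefficient matches the stated definition of $n_k$ under dummy-index relabeling is exactly the right point to verify, and it goes through.
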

			\begin{proof}
				See Appendix \ref{app:Kmconvert}.
			\end{proof}
		According to proposition \ref{thm:Kmconvert}, problem \eqref{eq:Prob:TKm} can be rewritten as the following new weighted problem: \(\underset{\{\mathcal{C}_{k},\bm{{\mu}}_{k}\}_{k=1}^{K}}{\mathrm{minimize}}\:\sum_{k=1}^{K}\sum_{i=1}^{N}||\mathbf{u}_i-\bm{{\mu}}_{k}||_{2}^{2}w_{k,i}\), where \(w_{k,i}=\mathbbm{1}(i\in\mathcal{C}_{k})+ \eta n_{k}(i)\). Observing that the new weighted problem can be solved by addressing two sub-problems for \(\mathcal{C}_{k}\) and \(\bm{{\mu}}_{k}\) in an alternating manner when one is fixed, respectively, we first focus on solving the new problem with the given cluster assignment \(\{\mathcal{C}_{k}\}_{k=1}^{K}\). Denote the objective function of the new weighted problem as \(\mathcal{J}_{1}(\{\bm{{\mu}}_{k}\}_{k=1}^{K})\). If we take the derivative of \(\mathcal{J}_{1}(\{\bm{{\mu}}_{k}\}_{k=1}^{K})\) with respect to \(\bm{{\mu}}_{k}\) and set it to zero, i.e., \(\frac{\partial \mathcal{J}_{1}(\{\bm{{\mu}}_{k}\}_{k=1}^{K})}{\partial \bm{{\mu}}_{k}} = 0\), we obtain \(\bm{{\mu}}_{k} = \frac{1}{\sum_{i=1}^{N}w_{k,i}}\sum_{i=1}^{N}w_{k,i}\mathbf{u}_{i}\). We then solve the cluster assignment with the given cluster center. This is done by evaluating the weighted combination of the residual from the frame to a given center, as well as the residuals of its sequential neighbors, so that the estimated cluster label for the frame \(\mathbf{u}_{i}\) is assigned to the \(l\)-th cluster, where \(l = \underset{k\in\{1,2,...,K\}}{\mathrm{arg\:min}}\quad||\mathbf{u}_i-\bm{{\mu}}_{k}||_{2}^{2}w_{k,i}\).

			The algorithm alternates between center update and cluster assignment steps until convergence. In the center update step, the resulting center represents the global optimum given a cluster assignment. This step learns the center that minimizes the distance to all frames in the cluster, including their sequential neighbors. Therefore, the center update step cannot increase the overall objective function. Similarly, in the cluster assignment step, each frame is assigned to the cluster that minimizes the distance to itself and its sequential neighbors, which also cannot increase the overall objective function. Since there is a finite number of ways the frames can be assigned, and the objective function in the new weighted problem is bounded below by zero, the proposed alternating algorithm must terminate at a locally optimal clustering result. To determine the number of clusters \( K \), we use the silhouette score, which measures the similarity of a sample point to its own cluster in comparison to the nearest cluster. By calculating the silhouette score for different values of \( K \), the optimal number of clusters is chosen as the value of \( K \) that maximizes the silhouette score.

			By iteratively solving \eqref{subProb:Z}, \eqref{subProb:U}, and \eqref{Prob:Q}, we can obtain a solution to \eqref{Prob:NCGC_Lap}. During this process, we group the subspace embeddings by solving \eqref{Prob:Q} and update the embeddings based on feedback from the HMS solution of \eqref{Prob:Q}. The convergence of sub-problem \eqref{subProb:Z}, the closed-form solution of sub-problem \eqref{subProb:U}, and the convergence of solving \eqref{Prob:Q} ensure the overall convergence of the algorithm for \eqref{Prob:NCGC_Lap}. Algorithm \ref{alg:TVSH} presents the pseudocode for our clustering method.
			
			{\color{black}
\begin{theorem}[Convergence]
	\label{thm:convergence}
	Under bounded and lower-semicontinuous augmented Lagrangian, nondecreasing $\gamma_t\!\to\!\gamma_\infty\!\in(0,\infty)$, and bounded $\rho>1$, the proposed ADMM-based alternating scheme ensures monotonic decrease of the objective and convergence of $(\mathbf{Z}^{(t)},\mathbf{H}^{(t)},\mathbf{Q}^{(t)})$ to a first-order stationary point.  
	If each $\mathbf{Q}$-update reaches its relaxed global optimum, every accumulation point satisfies the KKT conditions.
\end{theorem}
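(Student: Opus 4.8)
The plan is to treat the scheme in Algorithm~\ref{alg:TVSH} as block coordinate descent on the augmented Lagrangian of \eqref{Prob:NCGC_Lap}, which I will write as $\mathcal{L}_{\gamma}(\mathbf{Z},\mathbf{H},\mathbf{Q},\mathbf{F}) = \|\mathbf{X}-\mathbf{X}\mathbf{Z}\|_{\mathrm{F}}^{2}+\|\mathbf{Z}^{\mathrm{T}}\mathbf{Z}\|_{1}+\|\mathbf{H}\|_{2,1}+\langle\mathbf{F},\mathbf{H}-\mathbf{Z}\mathbf{G}\rangle+\frac{\gamma}{2}\|\mathbf{H}-\mathbf{Z}\mathbf{G}\|_{\mathrm{F}}^{2}+\|\mathbf{Z}\|_{\mathbf{Q}}$. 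The argument rests on three pillars: (i) a sufficient-decrease inequality for each primal and combinatorial block, (ii) boundedness of the primal and dual iterates so that accumulation points exist, and (iii) passing the first-order optimality condition of each subproblem to the limit. First I would fix the block ordering $\mathbf{Z}\!\to\!\mathbf{H}\!\to\!\mathbf{F}\!\to\!\mathbf{Q}$ exactly as implemented and verify that every primal subproblem is solved to \emph{global} optimality: the $\mathbf{Z}$-update \eqref{subProb:Z} is a convex quadratic program whose unconstrained stationary point solves the Sylvester equation \eqref{eq:d=0} and is then mapped by the Euclidean projection \eqref{eq:PiZ} onto the closed convex set $\mathcal{Z}$, so $\mathcal{L}_{\gamma}$ is nonincreasing across it; the $\mathbf{H}$-update \eqref{subProb:U} attains its global minimizer by Proposition~\ref{prop:H-global}; and the $\mathbf{Q}$-update, being the only term in which $\mathbf{Q}$ appears, is driven monotonically downward by the weighted $k$-means procedure that the surrounding text shows terminates at a local optimum of $\|\mathbf{Z}\|_{\mathbf{Q}}$.

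Next I would control the dual ascent. Writing the first-order condition of the $\mathbf{H}$-subproblem, $\mathbf{0}\in\partial\|\mathbf{H}^{(t+1)}\|_{2,1}+\mathbf{F}^{(t)}+\gamma_{t}(\mathbf{H}^{(t+1)}-\mathbf{Z}^{(t+1)}\mathbf{G})$, and combining it with the dual update $\mathbf{F}^{(t+1)}=\mathbf{F}^{(t)}+\gamma_{t}(\mathbf{H}^{(t+1)}-\mathbf{Z}^{(t+1)}\mathbf{G})$, yields $-\mathbf{F}^{(t+1)}\in\partial\|\mathbf{H}^{(t+1)}\|_{2,1}$. Because the subdifferential of the $\ell_{2,1}$ norm is column-wise bounded (each column has $\ell_2$-norm at most one), the dual sequence $\{\mathbf{F}^{(t)}\}$ is uniformly bounded; this is the standard device that renders the dual increment harmless. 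I would then form the Lyapunov function $\mathcal{L}_{\gamma_t}(\mathbf{Z}^{(t)},\mathbf{H}^{(t)},\mathbf{Q}^{(t)},\mathbf{F}^{(t)})$ and show it is bounded below: the first three terms are nonnegative and, by completing the square, $\langle\mathbf{F},\mathbf{H}-\mathbf{Z}\mathbf{G}\rangle+\frac{\gamma_t}{2}\|\mathbf{H}-\mathbf{Z}\mathbf{G}\|_{\mathrm{F}}^{2}\ge -\frac{1}{2\gamma_t}\|\mathbf{F}^{(t)}\|_{\mathrm{F}}^{2}$ stays bounded once $\mathbf{F}$ is bounded and $\gamma_t$ remains above some $\gamma_0>0$. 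Using the finiteness of $\gamma_\infty$ to absorb the dual increment into the primal descent, a telescoping argument then forces $\|\mathbf{Z}^{(t+1)}-\mathbf{Z}^{(t)}\|_{\mathrm{F}}$, $\|\mathbf{H}^{(t+1)}-\mathbf{H}^{(t)}\|_{\mathrm{F}}$, and the primal residual $\|\mathbf{H}^{(t+1)}-\mathbf{Z}^{(t+1)}\mathbf{G}\|_{\mathrm{F}}$ to vanish; coercivity of $\|\mathbf{X}-\mathbf{X}\mathbf{Z}\|_{\mathrm{F}}^{2}+\|\mathbf{Z}^{\mathrm{T}}\mathbf{Z}\|_{1}$ bounds $\mathbf{Z}$, and $\mathbf{H}$ is bounded through the vanishing residual, so Bolzano--Weierstrass supplies at least one accumulation point.

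Finally I would pass to the limit. Along a convergent subsequence with limit $(\mathbf{Z}^{\star},\mathbf{H}^{\star},\mathbf{Q}^{\star},\mathbf{F}^{\star})$, the vanishing successive differences and primal residual turn the per-subproblem optimality conditions into the joint stationarity system of \eqref{Prob:NCGC_Lap}: the variational inequality for $\mathbf{Z}^{\star}$ over $\mathcal{Z}$, the inclusion $-\mathbf{F}^{\star}\in\partial\|\mathbf{H}^{\star}\|_{2,1}$, primal feasibility $\mathbf{H}^{\star}=\mathbf{Z}^{\star}\mathbf{G}$, and stationarity of $\mathbf{Q}^{\star}$ for $\|\mathbf{Z}^{\star}\|_{\mathbf{Q}}$ over $\mathcal{Q}$. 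Invoking the equivalence in Proposition~\ref{prop:equalForm}, if each $\mathbf{Q}$-update is taken at its relaxed global optimum then $\mathbf{Q}^{\star}$ globally minimizes the normalized-cut surrogate, upgrading the accumulation point to a KKT point. The hard part will be the $\mathbf{Q}$-block: $\mathcal{Q}$ is a discrete, combinatorial feasible set and $\|\mathbf{Z}\|_{\mathbf{Q}}$ is nonconvex in the joint variables, so the spectral-relaxation-plus-weighted-$k$-means step only guarantees a local optimum of the relaxation; reconciling this discrete update with a continuous first-order stationarity statement requires either the explicit relaxed-global-optimum hypothesis of the theorem's second sentence or a finiteness argument showing that $\mathbf{Q}^{(t)}$ eventually stabilizes. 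The second delicate point is the interaction between the increasing penalty $\gamma_t$ and the dual update, which makes the naive Lagrangian non-monotone; this is precisely why the hypotheses $\gamma_t\to\gamma_\infty\in(0,\infty)$ and bounded $\rho>1$ are needed, and I would lean on the bounded-$\ell_{2,1}$-subdifferential identity above to close the descent estimate.
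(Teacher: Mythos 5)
Your proposal follows essentially the same route as the paper's proof: blockwise sufficient decrease of the augmented Lagrangian via exact $\mathbf{Z}$-, $\mathbf{H}$-, and $\mathbf{Q}$-updates, control of the dual step under the nondecreasing bounded penalty, boundedness of the iterates together with a vanishing primal residual, and passage of the per-block first-order conditions to the limit, with the KKT upgrade reserved for globally optimal relaxed $\mathbf{Q}$-updates. One small point in your favor: your dual-boundedness argument via $-\mathbf{F}^{(t+1)}\in\partial\|\mathbf{H}^{(t+1)}\|_{2,1}$ and the column-wise unit bound on that subdifferential is actually tighter than the paper's appeal to vanishing dual increments, which by itself does not imply boundedness of $\{\mathbf{F}^{(t)}\}$.
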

		\begin{proof}
	See Appendix \ref{sec:proof_convergence}.
\end{proof}

This theorem confirms that the alternating optimization is theoretically stable and convergent:  
the objective value decreases monotonically, the iterates approach a stationary solution,  
and, with exact subproblem updates, the algorithm attains KKT-level optimality, guaranteeing reliable convergence behaviour in practice.

		}

			\section{Experimental Results and Analysis}
			\label{sec:Experiments}
			
		In this section, we first introduce the human motion datasets used in our experiments (Section \ref{dataset}). We then present a comparison of our method with state-of-the-art techniques (Section \ref{comparison}). Finally, we show the effective analysis of the LLM-based TVS (Section \ref{sec:LLM}).

			\begin{table*}[!htp]
				\caption{Clustering performance of compared methods in terms of \textbf{Acc} and \textbf{NMI} on four human motion videosets. The best result is highlighted in \textbf{bold}. The improvement relative to the second-best method is depicted by $\uparrow$. (M) denotes the need for labeled MAD dataset assistance, and (K) indicates the requirement for labeled Keck dataset assistance.
				}
				\scriptsize
				\begin{minipage}{0.25\linewidth}
					\centering
					\renewcommand\arraystretch{1.3}
					{
						(a) Results on Keck dataset\vspace{-0.15cm}
						\begin{tabular}[t]{|l|c|c|}
							\specialrule{1pt}{0pt}{0pt}
							\rowcolor{mygray} Method  & Acc $\uparrow$& NMI $\uparrow$\\
							\hline
							SSC \cite{elhamifar2013sparse}          & {0.3137} & {0.3858}     \\
							OSC \cite{tierney2014subspace}          & {0.4393} & {0.5931}    \\
							TSC(M) \cite{li2015temporal} &{0.4653}&{0.6935}\\
							LTS \cite{wang2018low}  &0.4924&0.6213\\
							DSAE \cite{bai2020dual}          & {0.5136}& {0.5100} \\
							VSDA  \cite{bai2022human}      & {0.5804}& {0.7397}  \\
							CDMS(M) \cite{zhou2022consistency}      & {0.6044}& {0.7891} \\
							SIBMSC \cite{wang2023self}         & {0.3886}   & {0.4744}     \\
							FSMSC \cite{chen2023fast}       &0.4702    &0.3970\\
							BTMSC \cite{wang2023bi}            & {0.4297}  & {0.4862}  \\
							ARLRR \cite{tang2023affine} &{0.5010}&{0.5270}\\
							DCTMSC \cite{chen2024double}              & 0.4723& 0.4866   \\
							DCMVC \cite{cui2024dual}    &{0.5395} & {0.8049}     \\
							\hline
							TVSH&  \textbf{0.8048}& \textbf{0.8690} \\
							\specialrule{1pt}{0pt}{0pt}
					\end{tabular}}
				\end{minipage}
				\begin{minipage}{0.24\linewidth}
					\centering
					\renewcommand\arraystretch{1.3}
					
					{
						(b) Results on MAD dataset\vspace{-0.15cm}
						\begin{tabular}[t]{|l|c|c|}
							\specialrule{1pt}{0pt}{0pt}
							\rowcolor{mygray} Method  & Acc $\uparrow$ & NMI $\uparrow$\\
							\hline
							SSC \cite{elhamifar2013sparse}          & {0.3817}   &  {0.4758}  \\
							OSC \cite{tierney2014subspace}       &  {0.4327}  &{0.5589}   \\
							TSC(K) \cite{li2015temporal} &{0.5473}&{0.7691}\\
							LTS \cite{wang2018low}  &0.5466&0.6547\\
							DSAE \cite{bai2020dual}     &  0.5898  & 0.6309   \\ 
							VSDA \cite{bai2022human}          & {0.5606}   &  {0.7770}  \\
							CDMS(K) \cite{zhou2022consistency}    & {0.6536} & {0.8251} \\
							SIBMSC \cite{wang2023self}             &  {0.3639} &{0.4309}    \\
							FSMSC \cite{chen2023fast}  &0.3914      &0.3226\\
							BTMSC \cite{wang2023bi}                &   {0.2397} & {0.2249}  \\
							ARLRR \cite{tang2023affine} &{0.5125}&{0.5099}\\
							DCTMSC \cite{chen2024double}               & 0.4885   & 0.5372   \\
							DCMVC \cite{cui2024dual}  & {0.5792}   &  {0.8286} \\
							\hline
							TVSH &  \textbf{0.8372}& \textbf{0.8438} \\
							\specialrule{1pt}{0pt}{0pt}
					\end{tabular}}
				\end{minipage}
				\begin{minipage}{0.24\linewidth}
					\centering
					\renewcommand\arraystretch{1.3}
					{
						(c) Results on UT dataset\vspace{-0.15cm}
						\begin{tabular}[t]{|l|c|c|}
							
							\specialrule{1pt}{0pt}{0pt}
							\rowcolor{mygray} Method   & Acc $\uparrow$& NMI $\uparrow$\\
							\hline
							SSC \cite{elhamifar2013sparse}          & {0.4389} & {0.4998}  \\
							OSC \cite{tierney2014subspace}            & {0.5846}&{0.6877}  \\
							TSC(K) \cite{li2015temporal} & {0.5213 }& {0.7216} \\
							LTS \cite{wang2018low}  &0.6724&0.7435\\
							DSAE \cite{bai2020dual}      & 0.7323& 0.6717  \\
							VSDA  \cite{bai2022human}    & 0.6203& 0.8226   \\
							CDMS(K) \cite{zhou2022consistency}   & {0.6547} & {0.8267}  \\
							SIBMSC \cite{wang2023self}                  & {0.4477}& {0.4894}  \\
							FSMSC \cite{chen2023fast}  &0.4787      &0.4213\\
							BTMSC \cite{wang2023bi}                 & {0.4162}  & {0.4051}   \\
							ARLRR \cite{tang2023affine} &{0.5148}&{0.5121}\\
							DCTMSC \cite{chen2024double}     & 0.5569 &0.5293\\
							DCMVC \cite{cui2024dual}        &{0.5371}&{0.7746} \\
							
							\hline
							TVSH & \textbf{0.8723}& \textbf{0.8488} \\
							\specialrule{1pt}{0pt}{0pt}
					\end{tabular}}
				\end{minipage}
				\begin{minipage}{0.255\linewidth}
					\centering
					\renewcommand\arraystretch{1.3}
					
					{
						(d) Results on Weiz dataset\vspace{-0.15cm}
						\begin{tabular}[t]{|l|c|c|}
							\specialrule{1pt}{0pt}{0pt}
							\rowcolor{mygray} Method& Acc $\uparrow$ & NMI $\uparrow$  \\
							\hline
							SSC \cite{elhamifar2013sparse}          & {0.4576}   &    {0.6009} \\
							OSC \cite{tierney2014subspace}           & {0.5216}   &   {0.7047} \\
							TSC(K) \cite{li2015temporal} &{0.5931}&{0.7971} \\
							LTS \cite{wang2018low}  &0.5674&0.6959\\
							DSAE \cite{bai2020dual}        &  0.6120  & 0.6627   \\
							VSDA \cite{bai2022human}          &  {0.6287} & {0.7992}   \\
							CDMS(K) \cite{zhou2022consistency}    & {0.6465} & {0.8601} \\
							SIBMSC \cite{wang2023self}           &   {0.4127} & {0.5435}   \\
							FSMSC \cite{chen2023fast}  &0.3914      &0.3226\\
							BTMSC \cite{wang2023bi}                 &   {0.3638} &{0.4382}  \\
							ARLRR \cite{tang2023affine} &{0.5436}&{0.5371}\\
							DCTMSC \cite{chen2024double}                 &0.5592    &   0.5906 \\
							DCMVC \cite{cui2024dual}    &{0.6030}    & {0.8326}  \\
							\hline
							TVSH&  \textbf{0.8745}& \textbf{0.9316} \\
							\specialrule{1pt}{0pt}{0pt}
					\end{tabular}}
				\end{minipage}
				\vspace{0.1in}
				\label{tab:Result_Acc_NMI}
			\end{table*}
					\begin{figure}[t]
				\begin{center}
					\includegraphics[width=1\columnwidth]{./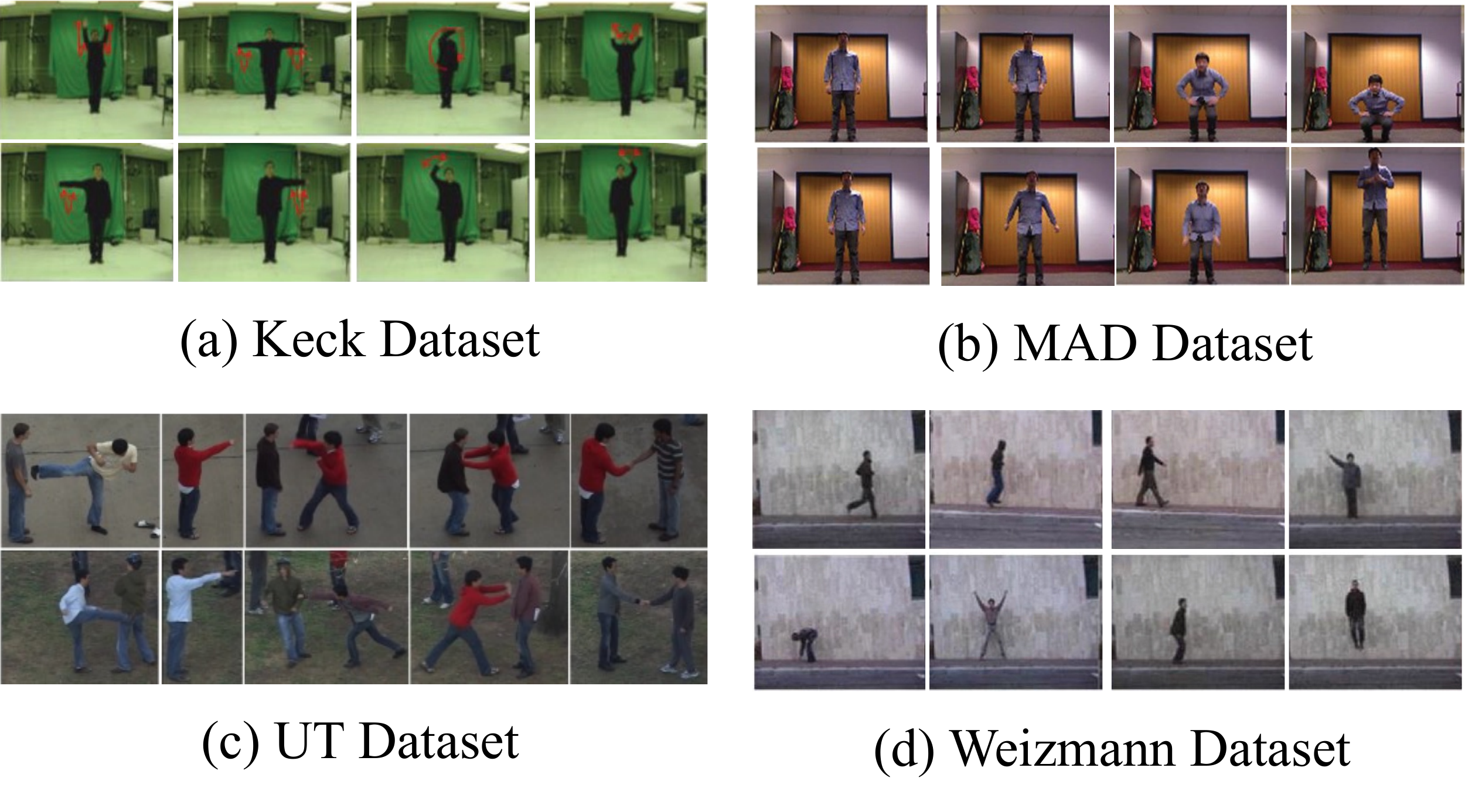}
					\vspace{-0.25cm}
					\caption{Sampling frames from four human motion benchmark datasets, \ie, (a) Keck~\cite{jiang2012recognizing}, (b) MAD~\cite{huang2014sequential}, (c) UT~\cite{ryoo2009spatio}, and (d) Weiz~\cite{gorelick2007actions}.}
					\vspace{-0.25in}
					\label{fig04}
				\end{center}
			\end{figure}
			
			\subsection{Human Motion Datasets and Experimental Setup}
			\label{dataset}
			
			To provide a comprehensive evaluation of the proposed model, we perform experiments on four well-established benchmark human motion datasets. Some example frames from these datasets can be seen in Figure~\ref{fig04}. 
			
			$\bullet$ \textit{Keck Gesture Dataset (Keck)} \cite{jiang2012recognizing} consists of 14 different motions from military signals, in which each subject is carried out 14 motions and gestures. Besides, the videos in this dataset were obtained by a fixed camera when these subjects stand out in a static background. 
			
			$\bullet$ \textit{Multi-Modal Action Detection Dataset (MAD)} \cite{huang2014sequential} consists of motions captured from various modalities using a Microsoft Kinect V2 system, which includes RGB images, depth cues, and skeleton formats. Specifically, the RGB images and 3D depth cues are of a size of $240\times320$. Moreover, each subject performs 35 different motions within two indoor scenes. 
			
			$\bullet$ \textit{UT-Intermotion Dataset (UT)} \cite{ryoo2009spatio} is composed of 20 videos, each of which includes six different motion types of human-human intermotions (such as punching, pushing, pointing, hugging, kicking, and handshaking). 
			
			$\bullet$ \textit{Weizmann Dataset (Weiz)} \cite{gorelick2007actions} is composed of 90 video sequences with 10 motions (running, walking, skipping, bending, etc.) captured by nine subjects in an outdoor environment. All videos have a size of $180\times144$ with 50 fps.

			\begin{table*}[!htp]
				\caption{Clustering performance of compared methods in terms of \textbf{Pr} and \textbf{ARI} on four human motion videosets.
				}
				\scriptsize
				
				\begin{minipage}{0.25\linewidth}
					\centering
					\renewcommand\arraystretch{1.3}
					{
						(a) Results on Keck dataset\vspace{-0.15cm}
						\begin{tabular}[t]{|l|c|c|}
							\specialrule{1pt}{0pt}{0pt}
							\rowcolor{mygray} Method  & Pr $\uparrow$& ARI $\uparrow$\\
							\hline
							SSC \cite{elhamifar2013sparse}         &0.3511  &  0.2446 \\
							OSC \cite{tierney2014subspace}          & 0.3767 & 0.2743  \\
							TSC(M) \cite{li2015temporal} &0.4214&0.3457\\
							LTS \cite{wang2018low}  &0.4457&0.3052\\
							DSAE \cite{bai2020dual}          & 0.4195 & 0.3418  \\
							VSDA  \cite{bai2022human}       & 0.4311 &  0.3529 \\
							CDMS(M) \cite{zhou2022consistency}      & {0.5828}& {0.5174} \\
							SIBMSC \cite{wang2023self}         & 0.3773   & 0.2292    \\
							FSMSC \cite{chen2023fast}       &0.4702    &0.3970\\
							BTMSC \cite{wang2023bi}            &0.3592  & 0.2418  \\
							ARLRR \cite{tang2023affine} &0.5772&0.5046\\
							DCTMSC \cite{chen2024double}               &0.3753  &  0.2741 \\
							DCMVC \cite{cui2024dual}    & 0.3908 &0.3136   \\
							\hline
							TVSH&  \textbf{0.7559}& \textbf{0.7214} \\
							\specialrule{1pt}{0pt}{0pt}
					\end{tabular}}
				\end{minipage}
				\begin{minipage}{0.24\linewidth}
					\centering
					\renewcommand\arraystretch{1.3}
					
					{
						(b) Results on MAD dataset\vspace{-0.15cm}
						\begin{tabular}[t]{|l|c|c|}
							\specialrule{1pt}{0pt}{0pt}
							\rowcolor{mygray} Method  & Pr $\uparrow$& ARI $\uparrow$\\
							\hline
						SSC \cite{elhamifar2013sparse}          &  0.3151  & 0.1994   \\
						OSC \cite{tierney2014subspace}           &   0.4024 & 0.2403   \\
						TSC(K) \cite{li2015temporal} &0.5116&0.3724\\
						LTS \cite{wang2018low}  &0.4673&0.3426\\
						DSAE \cite{bai2020dual}        & 0.5492   &  0.3891  \\
						VSDA \cite{bai2022human}          &  0.5667  &  0.3780  \\
						CDMS(K) \cite{zhou2022consistency}    & {0.5761} & {0.4128} \\
						SIBMSC \cite{wang2023self}           &  0.3227  & 0.1478   \\
						FSMSC \cite{chen2023fast}  &0.3914      &0.3226\\
						BTMSC \cite{wang2023bi}                 &   0.2882 & 0.1809   \\
						ARLRR \cite{tang2023affine} &0.5426&0.4072\\
						DCTMSC \cite{chen2024double}                 & 0.4508   &   0.2917 \\
						DCMVC \cite{cui2024dual}    &0.5487    &0.3623    \\
							\hline
							TVSH &  \textbf{0.7043}& \textbf{0.6973} \\
							\specialrule{1pt}{0pt}{0pt}
					\end{tabular}}
				\end{minipage}
				\begin{minipage}{0.24\linewidth}
					\centering
					\renewcommand\arraystretch{1.3}
					{
						(c) Results on UT dataset\vspace{-0.15cm}
						\begin{tabular}[t]{|l|c|c|}
							\specialrule{1pt}{0pt}{0pt}
							\rowcolor{mygray} Method   & Pr $\uparrow$& ARI $\uparrow$\\
							\hline
						SSC \cite{elhamifar2013sparse}          & 0.5426  & 0.3772   \\
						OSC \cite{tierney2014subspace}            & 0.5426& 0.3966  \\
						TSC(K) \cite{li2015temporal} &0.5864&0.4217\\
						LTS \cite{wang2018low}  &0.5774&0.4457\\
						DSAE \cite{bai2020dual}      & 0.6189& 0.4895  \\
						VSDA \cite{bai2022human}    & 0.6334& 0.5202    \\
						CDMS(K) \cite{zhou2022consistency}   & {0.6466} & {0.5539}  \\
						SIBMSC \cite{wang2023self}                  & 0.4972&0.3193 \\
						FSMSC \cite{chen2023fast}  &0.5242      &0.5047\\
						BTMSC \cite{wang2023bi}                 & 0.5360  & 0.3667   \\
						ARLRR \cite{tang2023affine} &0.6054&0.5213\\
						DCTMSC \cite{chen2024double}     & 0.5635 & 0.4383 \\ 
						DCMVC \cite{cui2024dual}      &0.6131 & 0.4688    \\		
							\hline
							TVSH & \textbf{0.7477}& \textbf{0.7153} \\
							\specialrule{1pt}{0pt}{0pt}
					\end{tabular}}
				\end{minipage}
				\begin{minipage}{0.255\linewidth}
					\centering
					\renewcommand\arraystretch{1.3}
					
					{
						(d) Results on Weiz dataset\vspace{-0.15cm}
						\begin{tabular}[t]{|l|c|c|}
							\specialrule{1pt}{0pt}{0pt}
							\rowcolor{mygray} Method& Pr $\uparrow$& ARI $\uparrow$  \\
							\hline
						SSC \cite{elhamifar2013sparse}          &0.4469    &0.3620    \\
						OSC \cite{tierney2014subspace}           & 0.5126   &0.4422    \\
						TSC(K) \cite{li2015temporal} &0.5667&0.5324\\
						LTS \cite{wang2018low}  &0.5991&0.5724\\
						DSAE \cite{bai2020dual}        &  0.6233  &  0.5406  \\
						VSDA \cite{bai2022human}          &  0.6180  & 0.5378   \\
						CDMS(K) \cite{zhou2022consistency}    & {0.6316} & {0.5561} \\
						SIBMSC \cite{wang2023self}           &   0.3643 & 0.2909   \\
						FSMSC \cite{chen2023fast}  &0.3914      &0.3226\\
						BTMSC \cite{wang2023bi}                 &   0.4397 &  0.3491  \\
						ARLRR \cite{tang2023affine} &0.6211&0.5146\\
						DCTMSC \cite{chen2024double}                 &  0.5502  & 0.4914   \\
						DCMVC \cite{cui2024dual}    &  0.6045  & 0.5280   \\
							\hline
							TVSH  &  \textbf{0.9012}& \textbf{0.8867} \\
							\specialrule{1pt}{0pt}{0pt}
					\end{tabular}}
				\end{minipage}
				\label{tab:Result_Pr_ARI}
				\vspace{0.1in}
			\end{table*}

		We evaluate clustering performance using four metrics: accuracy (Acc), normalized mutual information (NMI), precision (Pr), and adjusted rand index (ARI). These metrics assess the consistency between learned and true labels, with higher values indicating better performance. Let $\mathcal{L} = \{l_1, l_2, ..., l_N\}$ and $\hat{\mathcal{L}} = \{\hat{l}_1, \hat{l}_2, ..., \hat{l}_N\}$ represent the ground-truth and predicted labels, respectively, where $l_i$ and $\hat{l}_i$ denote the true and predicted labels for the $i$th sample. Acc is defined as the proportion of correctly clustered samples: $\mathrm{Acc} = \frac{1}{N} \sum_{i=1}^{N} \delta(l_i, \mathrm{map}(\hat{l}_i))$, where $\delta(a, b)$ is the indicator function ($\delta(a, b) = 1$ if $a = b$, and $0$ otherwise), and $\mathrm{map}(\cdot)$ maps predicted labels to the best matching true labels using the Hungarian algorithm \cite{kuhn1955hungarian}. NMI quantifies the coherence between two sets. Let $H(\mathcal{L})$ and $H(\hat{\mathcal{L}})$ represent the entropies of the sets $\mathcal{L}$ and $\hat{\mathcal{L}}$, respectively. NMI is defined as: $\mathrm{NMI}(\mathcal{L}, \hat{\mathcal{L}}) = {\mathrm{MI}(\mathcal{L}, \hat{\mathcal{L}})}/{\sqrt{H(\mathcal{L}) H(\hat{\mathcal{L}})}}$, where $\mathrm{MI}(\mathcal{L}, \hat{\mathcal{L}})$ measures the mutual information between the sets. Higher mutual information and lower uncertainty result in a higher NMI. If the sets are randomly distributed, NMI equals 0. Pr calculates the percentage of correctly clustered pairs among all pairs with the same clustering label. True positive (TP), false positive (FP), and false negative (FN) represent the numbers of correctly labeled samples in the positive class, misclassified samples in the positive cluster, and misclassified samples in the negative cluster, respectively. Precision is defined as: $\mathrm{Pr} = \frac{\mathrm{TP}}{\mathrm{TP} + \mathrm{FP}}$. ARI \cite{zhan2018multiview} quantifies the similarity between two sets: $\mathrm{ARI}=(\sum_{i,j=1}^{K}C_{n_{ij}}^{2}-\mathbb{E}[\text{RI}])/(C_{0}-\mathbb{E}[\text{RI}])$, where $C_{0}=\frac{1}{2}(\sum_{i=1}^{K}C_{|\hat{\mathcal{L}}^{(i)}|}^{2}+\sum_{i=1}^{K}C_{|\mathcal{L}^{(i)}|}^{2})$ and $\mathbb{E}[\text{RI}] = {\sum_{i=1}^{K} C_{|\mathcal{L}^{(i)}|}^2 \sum_{i=1}^{K} C_{|\hat{\mathcal{L}}^{(i)}|}^2}/{C_N^2}$. Here, $|\mathcal{L}^{(i)}|$ and $|\hat{\mathcal{L}}^{(i)}|$ represent the number of samples in the $i$th cluster of the ground-truth and predicted labels, respectively. The value $n_{ij}$ denotes the number of samples in the $i$th true cluster grouped into the $j$th predicted cluster. The notation $C_n^m$ represents the number of ways to choose $m$ items from $n$.

		We evaluate the performance of our method through a comparative analysis with thirteen approaches, as outlined in Section~\ref{sec:rw}. Each method was independently tested ten times, and the average results were reported. For the proposed scheme, the TVS learning was performed using the following LLMs: {GPT-o1}, {DeepSeek-v3-2-exp}, {Claude-Sonnet-4-5-20250929}, {Gemini-2.0-Flash-exp}, {Grok-4}, and {Qwen3-235B-a22b}.

{\color{black}
\subsection{HMS Performance Comparison}
\label{comparison}
\begin{table}[t]
	\caption{The comparison of run-time (minute) on the Keck dataset.}
	\centering
	\label{fig:RT}
	\resizebox{1\columnwidth}{!}{ 
		\begin{tabular}{l|ccccccccc}
			\toprule[1.5pt]  
			Method  & SSC& OSC & TSC & LTS & DSAE& VSDA & CDMS  & \tabularnewline
			\hline
			Time & 1.1 & 2.5 & 1.9 & 3.3 & 4.5 & 4.4 & 5.1  & \tabularnewline
			\midrule
			Method  & SIBMSC & FSMSC& BTMSC  & ARLRR & DCTMSC & DCMVC & TVSH  & \tabularnewline
			\hline
			Time &  5.8 &4.3 & 4.4 & 5.3 & 4.7 &4.8& 4.2  & \tabularnewline
			\bottomrule[1.5pt]  
	\end{tabular}} 
	
\end{table}

Tables~\ref{tab:Result_Acc_NMI}--\ref{tab:Result_Pr_ARI} summarize results on four benchmarks (Keck, MAD, UT, Weiz) using {Acc}, {NMI}, {Pr}, and \textbf{ARI}. 
TVSH attains the best performance across all datasets and metrics. 

On the Keck dataset, TVSH improves accuracy from 0.6044 (CDMS) to 0.8048 and NMI from 0.8049 (DCMVC) to 0.8690, representing a significant improvement over the best baseline. On MAD, the accuracy increases from 0.6536 to 0.8372, while NMI rises from 0.8286 to 0.8438. On UT, TVSH achieves 0.8723 accuracy and 0.8488 NMI, both higher than those of existing methods. On the more diverse Weiz dataset, TVSH reaches 0.8745 accuracy and 0.9316 NMI, improving by about 0.23 and 0.07, respectively, compared with the best previous method.
Precision and ARI exhibit consistent improvement trends, confirming the robustness of TVSH in maintaining temporal coherence and enhancing motion discriminability. 

Table~\ref{fig:RT} reports wall-clock time on Keck, which shows that TVSH maintains reasonable computational efficiency. Although it requires slightly more time than lightweight baselines such as SSC or OSC, the additional cost is modest and justified by its substantial performance gains. 

\subsubsection{Superiority of the Proposed TVSH Method}

\begin{figure}[t]
	\begin{center}
		\includegraphics[width=0.8\columnwidth]{./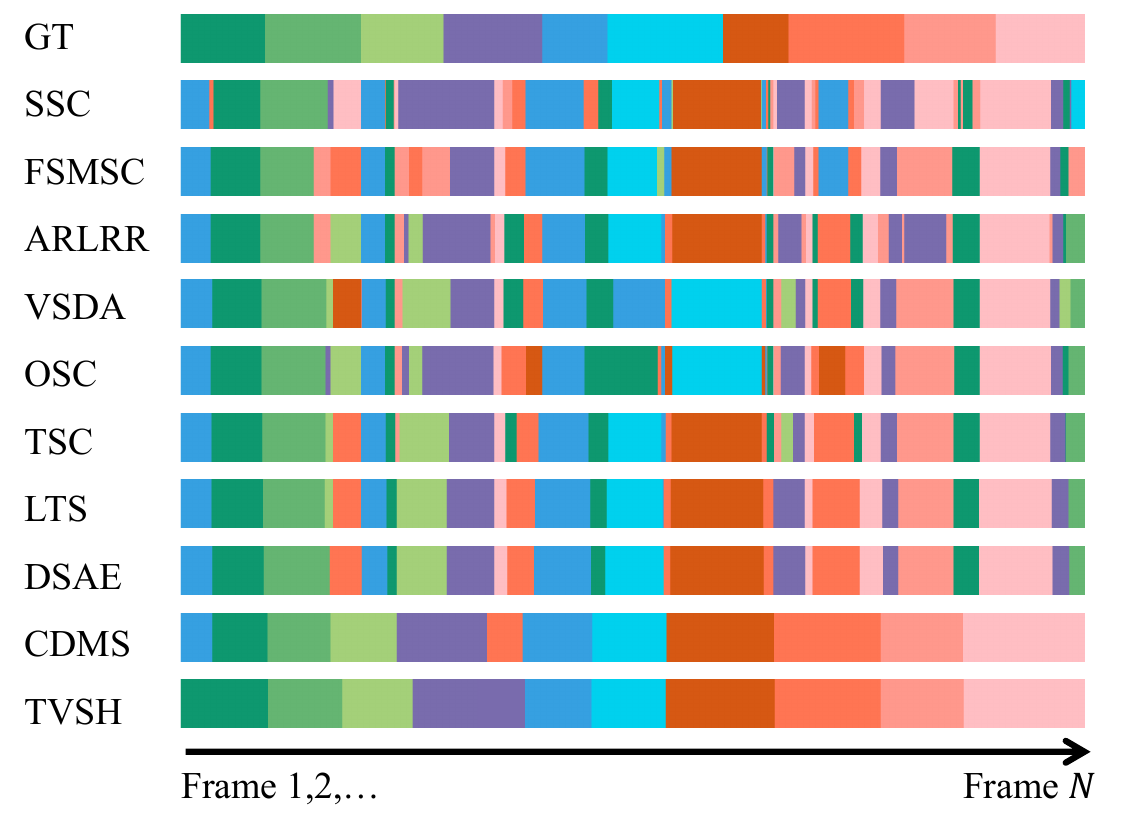}
		\caption{Visualization of motion segmentation results of the proposed method and comparisons on Keck dataset. The different colors denote different motions. GT depicts the ground truth.}
		\label{fig:Performance_bar}
	\end{center}
\end{figure}

\begin{figure}[t]
	\begin{center}
		\includegraphics[width=1\columnwidth]{./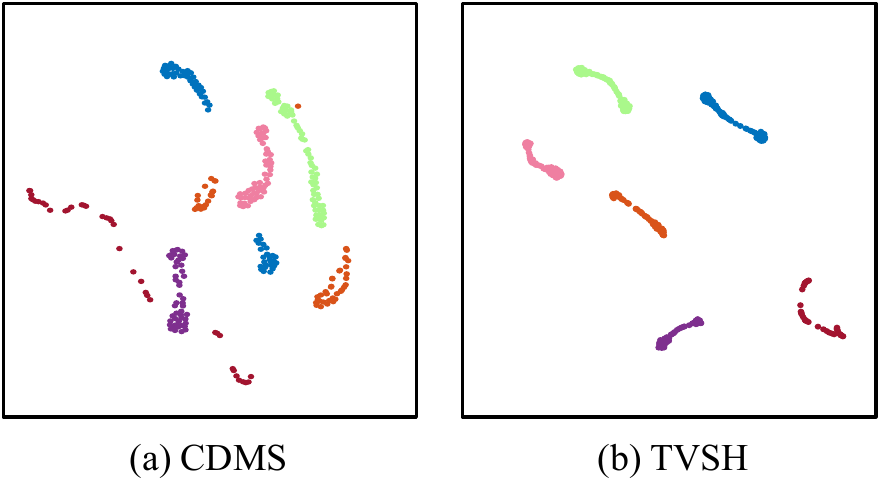}
		\caption{Visualization of the two-dimensional t-SNE of the extracted features from six motions in the Weiz dataset. Points in different colors depict frames of different motions.}
		\label{fig:EmbeddingtSNE}
	\end{center}
\end{figure}

\paragraph{Superiority 1: Temporal Modeling}
Conventional methods often fail to explicitly capture temporal dependencies in human motion sequences, making it difficult to identify gradually transitioning motions. Specifically, when a single motion contains multiple stages with significant amplitude variations, it is often misinterpreted as multiple separate motions. Exploring temporal modeling offers a solution to this challenge. The proposed method first utilizes an LLM to obtain a TVS matrix, which explicitly encodes temporal relationships between frames. Then, the TVS matrix-based regularization is introduced to enforce temporal continuity in both the embedding space and the segmentation result, thereby reducing ambiguity in temporal motion transitions.

Figure~\ref{fig:Performance_bar} presents the motion segmentation results of the proposed method, along with comparisons on the Keck dataset. The proposed TVSH generates temporally coherent motion segmentation with well-aligned motion boundaries. Methods that do not explicitly model temporal dependencies (e.g., SSC and its variants such as FSMSC and ARLRR) achieve only limited temporal coherence. Although approaches incorporating temporal cues (e.g., OSC, TSC, LTS, DSAE, VSDA, and CDMS) perform better in terms of temporal coherence, they still fail to capture the full temporal coherence of motion sequences and cannot guarantee accurate temporal continuity in the motion segmentation results. In contrast, the proposed method leverages LLM to learn more precise temporal coherence and uses it to guide motion segmentation, producing temporally semantically accurate motion segments.

Figure~\ref{fig:EmbeddingtSNE} further visualizes the two-dimensional t-SNE embeddings of the extracted features \( u_i \) from six motions in the Weiz dataset. The proposed method produces more compact clusters than the strong baseline CDMS, highlighting the effectiveness of temporal regularization in the embedding component of the proposed TVSH. Based on the embedded features shown in Figure 4(b), achieving better motion segmentation performance becomes easier. This also explains why the proposed method can achieve more temporally semantically accurate motion segments, as shown in Figure~\ref{fig:Performance_bar}.

\begin{figure}[t] 
	\begin{center}
		\includegraphics[width=1\columnwidth]{./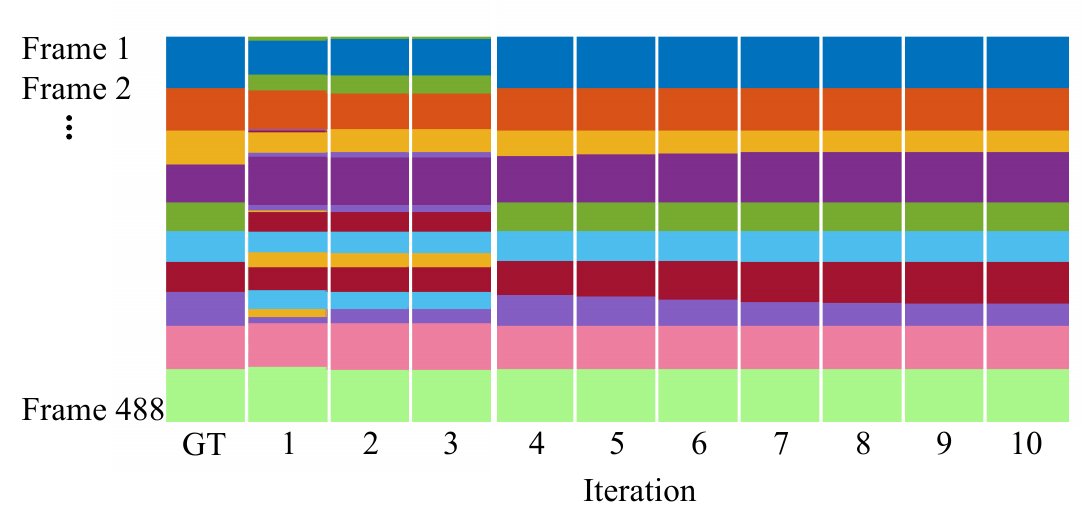}
		\vspace{-0.15cm}
		\caption{
		Visualizations of the motion segmentation results in the different iteration of the proposed TVSH on the Weiz dataset. Different colors represent distinct motion assignments for the frames. 'GT' refers to the 'ground truth' motion segmentation results.
		}
		\label{fig:ConvergeProced}
	\end{center}
\end{figure}

\begin{figure}[t] 
	\begin{center}
		\includegraphics[width=1\columnwidth]{./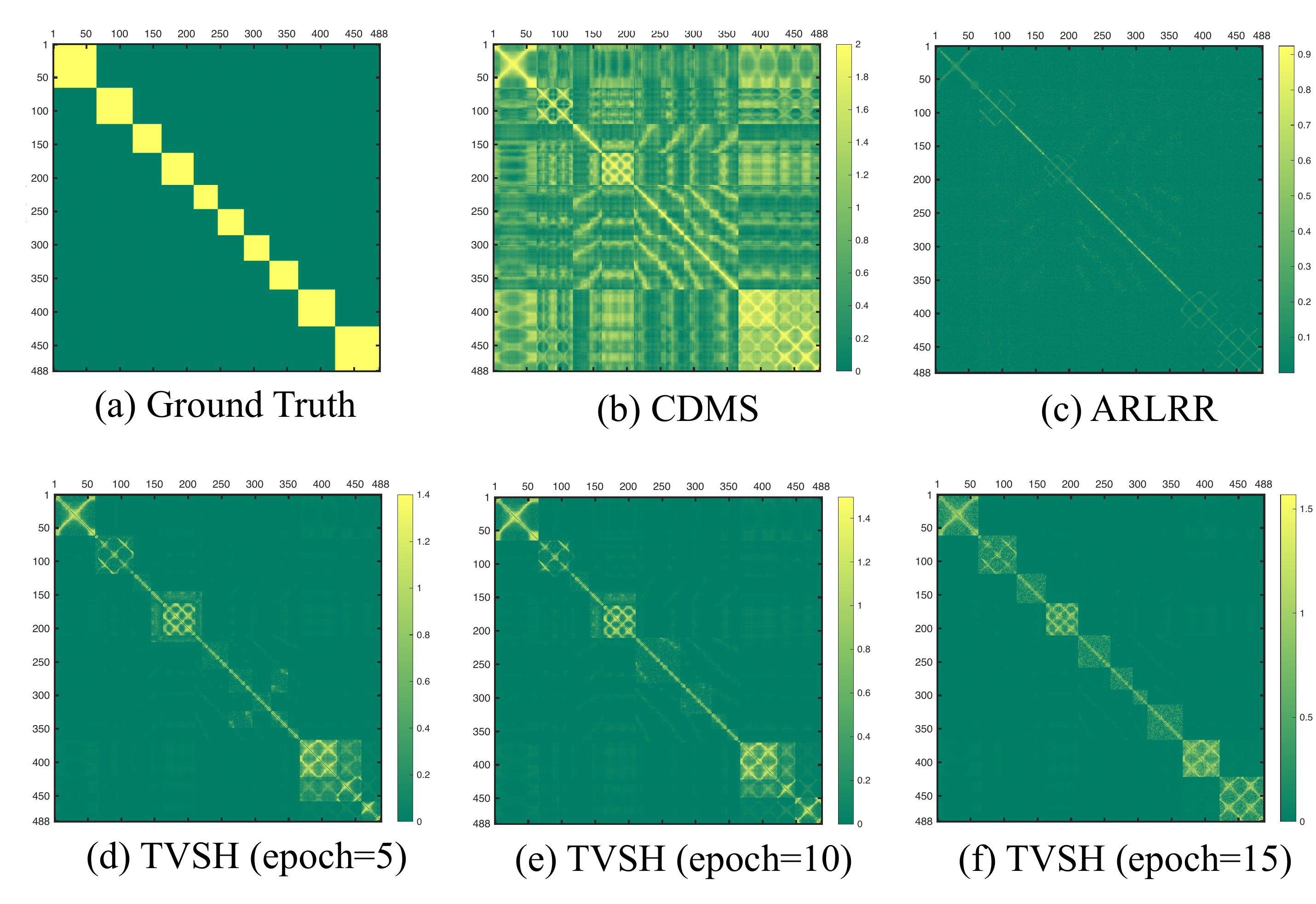}
		\vspace{-0.15cm}
		\caption{Visualizations of the similarity matrix. (a) Ground-truth similarity matrix of the Ido human motion sequence in the Weiz dataset. Yellow regions indicate high similarity between frames of the same motion, while green regions denote zero similarity between frames of the same motion. (b)–(c) show the similarity matrices provided by the baselines CDMS and ARLRR. (d)–(f) show the similarity matrices generated by the proposed TVSH at different iterations.
		}
		\label{fig:selfexpressVariety}
	\end{center}
\end{figure}

\paragraph{Superiority 2: Joint Optimization}  
Traditional clustering-based human motion segmentation methods typically perform feature extraction and clustering of embedded features in separate stages, resulting in an embedding process that does not receive feedback from the clustering outcome. However, a large clustering loss indicates that the embedded features are difficult to cluster, and it is meaningful to adjust the embedding to generate new features that minimize the clustering loss as much as possible. Therefore, our method adopts a feedback-enabled joint optimization framework, where the segmentation results iteratively refine the learned embeddings to achieve the smallest possible clustering loss.

Figure~\ref{fig:ConvergeProced} shows the motion segmentation results of the proposed TVSH on a video from the Weiz dataset. The TVSH converges within ten iterations. During the first three iterations, the proposed TVSH focuses on identifying optimal segmentation boundaries and merging fragmented regions temporally. In the subsequent fourth to tenth iterations, it fine-tunes the boundaries, yielding stable and temporally coherent segmentation results. 

Figures~\ref{fig:selfexpressVariety}(b-c) presents the similarity matrices for different methods, while Figures~\ref{fig:selfexpressVariety}(d)–(f) show the similarity matrices across different iterations from the joint optimization of the proposed TVSH. The similarity matrix of the proposed TVSH framework progressively exhibits a clearer block-diagonal structure, demonstrating strong alignment with the ground truth shown in Figure~\ref{fig:selfexpressVariety}(a). In contrast, the similarity matrices obtained by baseline methods, such as CDMS and ARLRR (Figure~\ref{fig:selfexpressVariety}(b) and (c)), are less structured and more diffuse. Obviously, a similarity matrix that is more consistent with the ground truth in Figure~\ref{fig:selfexpressVariety}(a) facilitates more accurate motion segmentation.

\paragraph{Ablation Study}  

\begin{table}[t]
	\caption{Ablation study of the effects of temporal modeling and joint optimization.}
	\label{fig:ablation}
	\vspace{0.05in}
	\centering
	\resizebox{0.5\textwidth}{!}{ %
		\begin{tabular}{p{2.5cm}|cc|cc|cc}
			\specialrule{1pt}{0pt}{0pt} 
			\rowcolor{mygray}& \multicolumn{2}{c|}{Keck} & \multicolumn{2}{c|}{MAD} & \multicolumn{2}{c|}{UT} \tabularnewline
			\rowcolor{mygray}  & Acc & NMI & Acc & NMI & Acc & NMI \tabularnewline
			\hline 
			TVSH (w/o joint optimization) & {0.7423} & {0.8429} & {0.7848} & {0.8322} & {0.8254} & {0.8371} \tabularnewline
			TVSH (w/o temporal model) & {0.7152} & {0.7428} & {0.7211} & {0.7546} & {0.6714} & {0.6211} \tabularnewline
			TVSH & {0.8048} & {0.8690} & {0.8372} & {0.8438} & {0.8723} & {0.8488} \tabularnewline
			\specialrule{1pt}{0pt}{0pt}
	\end{tabular}}
\end{table}

To assess the contribution of temporal modeling and joint optimization in TVSH, we conduct ablation experiments by selectively removing the temporal model and the joint optimization module. The variant TVSH (w/o temporal model) in Table~\ref{fig:ablation} removes the temporal prior by setting \( G = I \), thereby disabling temporal regularization. 

When the LLM-guided TVS is incorporated as the temporal model, performance consistently improves across all datasets. For instance, accuracy increases from 0.7152 to 0.8048 on the Keck dataset, from 0.7211 to 0.8372 on MAD, and from 0.6714 to 0.8723 on UT. These gains confirm that the TVS-based temporal modeling enhances human motion segmentation performance.

The variant TVSH (w/o joint optimization) in Table~\ref{fig:ablation} runs the proposed TVSH for only one iteration. This version produces weaker segmentation quality, as reflected by a 6–8\% drop in accuracy across the datasets. In contrast, the full TVSH model benefits from iterative feedback between clustering and embedding, progressively refining segment boundaries and aligning the learned representation with semantic motion transitions. These results validate that joint optimization is essential for achieving temporally semantically consistent motion segmentation.

\subsubsection{Performance Bottleneck}

Although the proposed method achieves consistent improvements across all benchmarks, two primary bottlenecks preventing TVSH from reaching perfect segmentation accuracy were identified and analyzed.

\begin{figure}[t]
	\centering
	\includegraphics[width=1\columnwidth]{./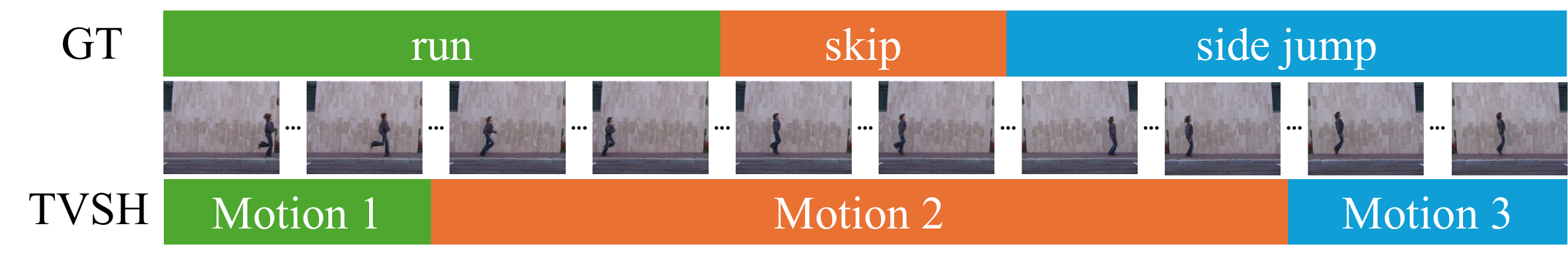}
	\caption{Illustration of a failure case on the Weiz dataset (subject ``ido'') showing gradual transition boundaries between actions.}
	\label{fig:FalseCase1}
\end{figure}

\paragraph{Bottleneck I: Gradual Transition Boundaries}  
The first bottleneck arises in sequences where actions evolve smoothly without clear-cut temporal boundaries. As shown in Figure~\ref{fig:FalseCase1}, when a motion transitions gradually from one motion to another (e.g., run $\to$ skip $\to$ side-jump), both visual and kinematic cues change continuously. These intermediate frames are semantically ambiguous, leading to minor drifts in segmentation boundaries or partial merging of adjacent segments.

\begin{figure}[t]
	\centering
	\includegraphics[width=1\columnwidth]{./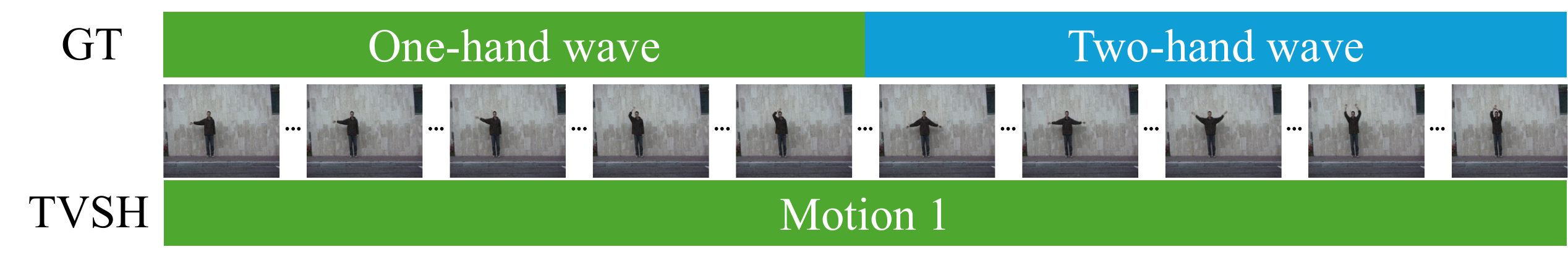}
	\caption{Illustration of a failure case on the Weiz dataset (subject ``ido'') showing visually similar or look-alike actions.}
	\label{fig:FalseCase2}
\end{figure}

\paragraph{Bottleneck II: Look-Alike Motions}  
The second bottleneck arises when motions share highly similar morphological characteristics. As depicted in Figure~\ref{fig:FalseCase2}, for visually related motions such as raising one hand and raising both hands, the motion segmentation algorithm may incorrectly classify these two motions as the same "raising hand" action. This occurs because the visual cues and kinematic features between these motions overlap significantly, making it difficult for the model to distinguish between them.

\subsubsection{Limitations and Future Directions}

Despite the effectiveness of the proposed method, several limitations remain. The first limitation arises in sequences where actions evolve smoothly without clear-cut temporal boundaries, causing minor drifts or partial merging of adjacent segments. The second limitation occurs when motions share highly similar morphological characteristics. This is due to the significant overlap in visual and kinematic features, making it difficult for the model to distinguish between them.

To address the limitations outlined above, future work will focus on enhancing the model's ability to handle gradual motion transitions and look-alike motions. For gradual transition boundaries, we plan to incorporate uncertainty-aware temporal modeling, which can adaptively capture smooth variations and probabilistic transition boundaries. This will help the model distinguish between genuine motion transitions and intra-action fluctuations. Additionally, we aim to integrate multimodal features, such as skeletal joint trajectories, optical flow, and motion energy maps, to provide richer dynamic and geometric context. These modalities will enable the model to better differentiate between visually similar but semantically distinct actions, improving segmentation accuracy and temporal coherence in motion sequences.
}

{\color{black}
\subsection{Effective Analysis of the LLM-Based TVS}
\label{sec:LLM}

This section analyzes the effectiveness, interpretability, and generalization of the proposed LLM-based TVS framework. Section~\ref{subsec:VisualizationLLM} visualizes the generated TVS matrices to assess their ability to capture temporal adjacency and motion coherence. Section~\ref{subsec:differentLLM} compares different LLMs to evaluate how model architectures affect segmentation accuracy. Section~\ref{subsec:differentPromt} examines the impact of prompt design on temporal reasoning. Section~\ref{subsec:differentmotion} investigates performance across diverse motion types. Finally, Section~\ref{subsec:LLMlimitation} summarizes the limitations of LLM-based TVS inference.

\begin{figure}[t]
	\begin{center}
		\includegraphics[width=1\columnwidth]{./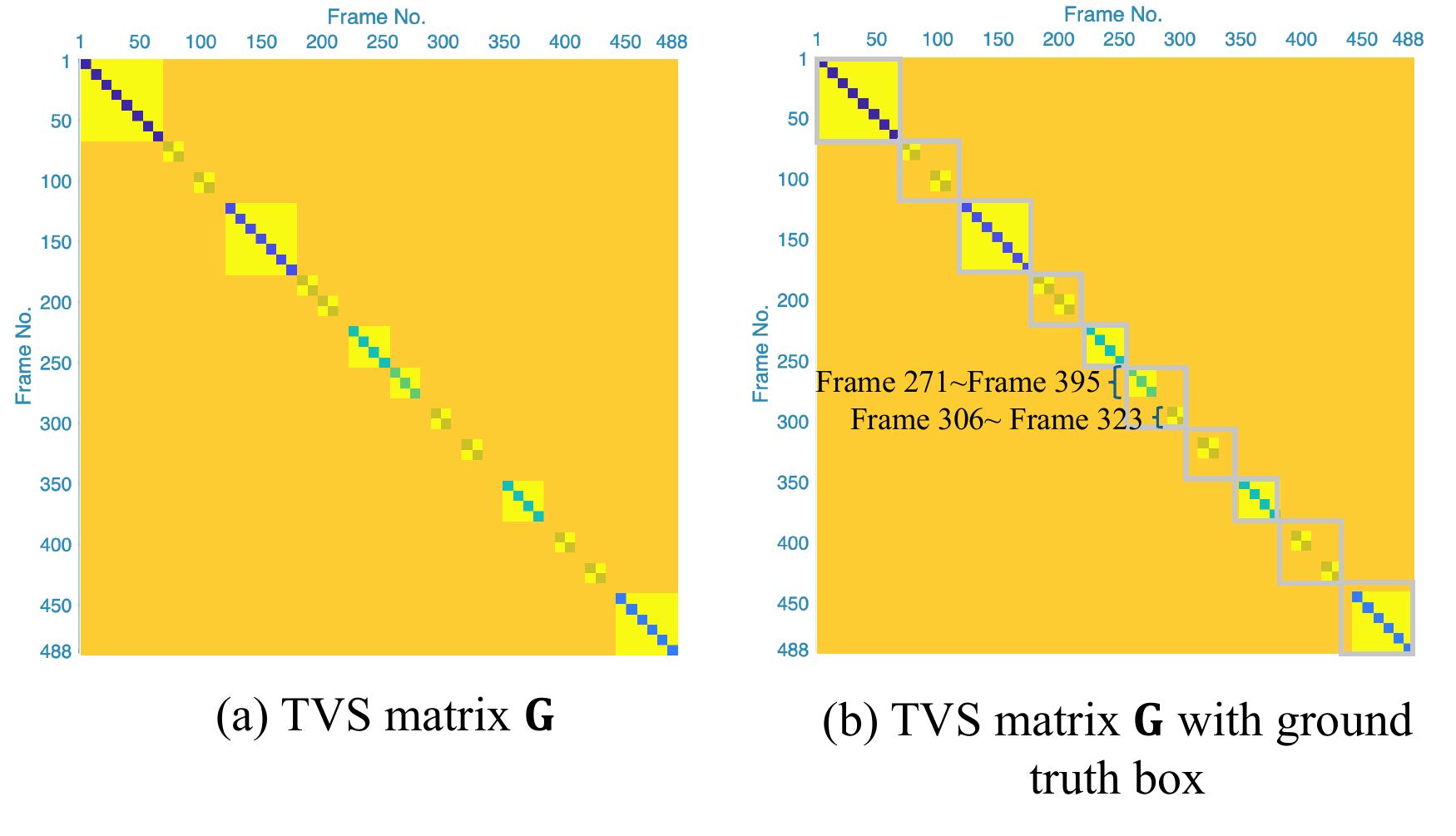}
		\vspace{-0.15cm}
		\caption{{\color{black}(a) The TVS matrix $\mathbf{G}$ on Weiz dataset (person 'ido') generated by GPT-o1. 
				(b) The TVS matrix $\mathbf{G}$ with ground truth label.
				The gray boxes indicate groups of frames that describe the same motion.}}
				\vspace{-0.25cm}
		\label{fig:G}
	\end{center}
%	\vspace{-0.25in}
\end{figure}

\subsubsection{Visualization of TVS from LLM}
\label{subsec:VisualizationLLM}
We employ \textit{GPT-o1} to generate the TVS matrix \( \mathbf{G} \). Figure~\ref{fig:G}(a) presents an example TVS matrix from the \emph{Weiz} dataset for the person ''ido''. The TVS generated by the LLM exhibits minor inconsistencies. For instance, in the sixth motion, frames \emph{271--323} correspond to the same “side jump” motion in the ground truth, yet the LLM identifies frames \emph{271--295} as depicting the same motion and \emph{306--323} as depicting the same motion separately. As shown in Figure~\ref{fig:G}(b), such partial inconsistencies occur in five out of ten motion segments. Nevertheless, the LLM does not introduce false distinctions, as it never explicitly labels segments \emph{271--395} and \emph{306--323} as different motions. Consequently, these minor inconsistencies do not mislead the subsequent TVSH algorithm. For the motions 1, 3, 5, 8, and 10, the LLM produces nearly perfectly consistent segmentations, demonstrating its effectiveness in capturing semantic motion coherence and temporal adjacency.

\subsubsection{Comparison with Different LLM Models}
\label{subsec:differentLLM}

We further evaluate six widely used multimodal LLMs for TVS learning, including
\textit{GPT-o1}, \textit{DeepSeek-v3-2-exp}, \textit{Claude-Sonnet-4-5-20250929}, 
\textit{Gemini-2.0-Flash-exp}, \textit{Grok-4}, and \textit{Qwen3-235B-a22b}. 
Table~\ref{tab:llm_runtime} summarizes the parameter scales, runtime, and segmentation accuracy of the proposed TVSH, evaluated on the Weiz ``ido'' video (488 frames) using these LLM models under identical prompt and API configurations. All experiments were conducted on a MacBook Air equipped with an M4 chip and 32\,GB of memory.
Overall, \textit{Gemini-2.0-Flash-exp} achieves the fastest runtime, followed by \textit{Claude-Sonnet-4-5-20250929} and \textit{DeepSeek-v3-2-exp}, whereas \textit{Grok-4} is the slowest due to its extremely large parameter size. 
\textit{GPT-o1} and \textit{Qwen3-235B-a22b} fall in the mid range, balancing accuracy and computational cost.

In terms of segmentation accuracy, \textit{GPT-o1} attains the best overall performance (88.27\%), closely followed by \textit{Qwen3-235B-a22b} and \textit{Gemini-2.0-Flash-exp}, which achieve comparable results. 
\textit{Claude-Sonnet-4-5-20250929} and \textit{DeepSeek-v3-2-exp} exhibit slightly lower accuracy, while \textit{Grok-4}, despite its largest parameter scale, yields the lowest performance. 
These results indicate that a larger model size does not necessarily guarantee better temporal reasoning or motion understanding; rather, architectural design and multimodal alignment play a more decisive role. 
Furthermore, model runtime generally increases with parameter size, reflecting the trade-off between computational complexity and inference precision. 
Nevertheless, all evaluated LLMs enhance TVS quality over non-LLM baselines (76.51\%), confirming that integrating multimodal reasoning effectively strengthens temporal semantics and motion segmentation performance.

\begin{table}[t]
	\centering
	\caption{\color{black}Parameter scale, runtime, and accuracy of different LLMs for TVS learning on the Weiz dataset (subject ``ido'').}
	\label{tab:llm_runtime}
	{\color{black}
		\resizebox{0.5\textwidth}{!}{
			\begin{tabular}{p{4cm}ccc}
				\toprule
				\textbf{Method} & \textbf{Parameters (B)} & \textbf{Runtime/question (s)} & \textbf{Acc (\%)} \\
				\midrule
				GPT-o1 & 175 & 1.84 & 88.27 \\
				DeepSeek-v3-2-exp & 67 & 0.95 & 85.71 \\
				Claude-Sonnet-4-5-20250929 & 70 & 0.72 & 86.88 \\
				Gemini-2.0-Flash-exp & 120 & 0.36 & 87.47 \\
				Grok-4 & 314 & 3.69 & 84.17 \\
				Qwen3-235B-a22b & 235 & 2.20 & 87.52 \\
				\bottomrule
			\end{tabular}
		}
	}
\end{table}

\subsubsection{Comparison with Different Prompts}
\label{subsec:differentPromt}

To examine the effect of prompt design on temporal semantic inference, we compared several variants of the prompt used to instruct the LLM.  
While the baseline prompt offers simplicity and generalization, additional prompt designs can enhance precision, robustness, and interpretability in identifying temporal consistency between frames.

\textit{(a) Baseline prompt.}  
This concise binary question directly queries the LLM’s perception of motion similarity:  
\begin{quote}
	\textit{``Do these two neighboring frames depict the same human motion? Answer Yes or No.''}
\end{quote}
Although simple and generalizable, this form provides limited guidance on how the model should evaluate visual similarity. 
Therefore, we explored more detailed formulations that explicitly direct attention toward motion dynamics and semantic continuity.

\textit{(b) Attribute-focused prompt.}  
The LLM was asked to compare explicit aspects of human motion, such as global body posture, limb configuration, contact state (e.g., feet or hand support), and motion direction, while ignoring irrelevant visual variations like background and illumination:  
\begin{quote}
	\textit{``Carefully compare the two human figures. Focus on body posture, limb angles, contact with the ground, and movement direction. Ignore lighting, clothing, and background. Decide if they represent the same stage of an action. Answer Yes or No.''}
\end{quote}

\begin{table}[t]
	\centering
	\caption{Accuracy of different prompts for TVS Learning on Weiz dataset.}
	\label{tab:Prompt}
	{\color{black}
		\begin{tabular}{lcccccc}
			\toprule
			\textbf{Prompt} & a) & b) & c) & d) & e) & f) \\
			\midrule
			\textbf{Acc} & 88.27 & 89.14 & 88.89 & 89.58 & 89.77 & 89.94 \\
			\bottomrule
		\end{tabular}
	}
\end{table}

\textit{(c) Confidence-based prompt.}  
To quantify uncertainty in LLM judgment, we introduced a structured response that requests a confidence score:  
\begin{quote}
	\textit{``Do these two frames depict the same human motion? Provide your answer (Yes/No) and a confidence score between 0 and 1.''}
\end{quote}
This allows adaptive thresholding during TVS construction and enables selective re-querying of low-confidence pairs.

\textit{(d) Step-aware prompt.}  
For temporally distant frames, the model was instructed to reason about motion continuity across a temporal gap:  
\begin{quote}
	\textit{``Compare frame $i$ and frame $i+\Delta t$. Decide whether they correspond to the same stage of motion despite intermediate movement. Ignore viewpoint and background differences.''}
\end{quote}

\textit{(e) Phase-aware prompt.}  
Incorporating explicit temporal reasoning, the LLM is asked to determine whether the two frames belong to the same \emph{action phase} (e.g., preparation, execution, or completion):  
\begin{quote}
	\textit{``Identify whether these two frames occur in the same phase of an action (preparation, execution, or completion). Focus on body posture and motion trajectory. Answer Yes or No.''}
\end{quote}
This helps capture fine-grained transitions within a continuous action.

\textit{(f) Causal-motion prompt.}  
To leverage the model’s reasoning ability, we designed a prompt that emphasizes causal understanding of movement progression:  
\begin{quote}
	\textit{``Analyze how the motion evolves between these two frames. Determine if the second frame naturally follows from the first as part of the same continuous action. Answer Yes or No.''}
\end{quote}
This causal formulation improves temporal coherence by aligning LLM reasoning with the physical progression of motion.

Table~\ref{tab:Prompt} reports the accuracy of six prompt variants used to guide the LLM in HMS. The results exhibit a steady improvement from the baseline formulation to the more context-aware and causality-driven designs, indicating that richer semantic cues lead to better temporal reasoning. The \textit{causal-motion prompt (f)} achieves the highest accuracy (89.94\%), confirming that prompting the LLM to reason about physical motion progression enhances its ability to capture temporal continuity and human-intuitive semantics. The \textit{phase-aware prompt (e)} yields a comparable result (89.77\%), showing that explicitly considering action phases (such as preparation, execution, and completion) helps distinguish fine-grained temporal transitions within continuous motions.  

Both the \textit{step-aware (d)} and \textit{attribute-focused (b)} prompts demonstrate stable performance, as emphasizing motion continuity or detailed physical attributes effectively reduces ambiguity and reinforces local consistency in semantic comparison. The \textit{confidence-based prompt (c)} provides moderate improvement by quantifying uncertainty in LLM judgments, which benefits reliability but contributes less to deeper semantic reasoning. In contrast, the simple \textit{baseline prompt (a)} performs the weakest, as it lacks guidance on how the LLM should interpret motion similarity, relying only on its implicit visual understanding.  

Overall, the results reveal a clear semantic progression: as the prompts evolve from generic and perception-based instructions to structured and reasoning-oriented formulations, the inferred temporal semantics become increasingly coherent. This progression (from the baseline to attribute-focused, confidence-based, step-aware, phase-aware, and finally causal-motion prompts) reflects the shift from surface-level perceptual matching toward a deeper, causality-driven understanding of human motion dynamics.

\subsubsection{Performance on Diverse Motion Types}
\label{subsec:differentmotion}
\begin{figure}[t]
	\begin{center}
		\includegraphics[width=1\columnwidth]{./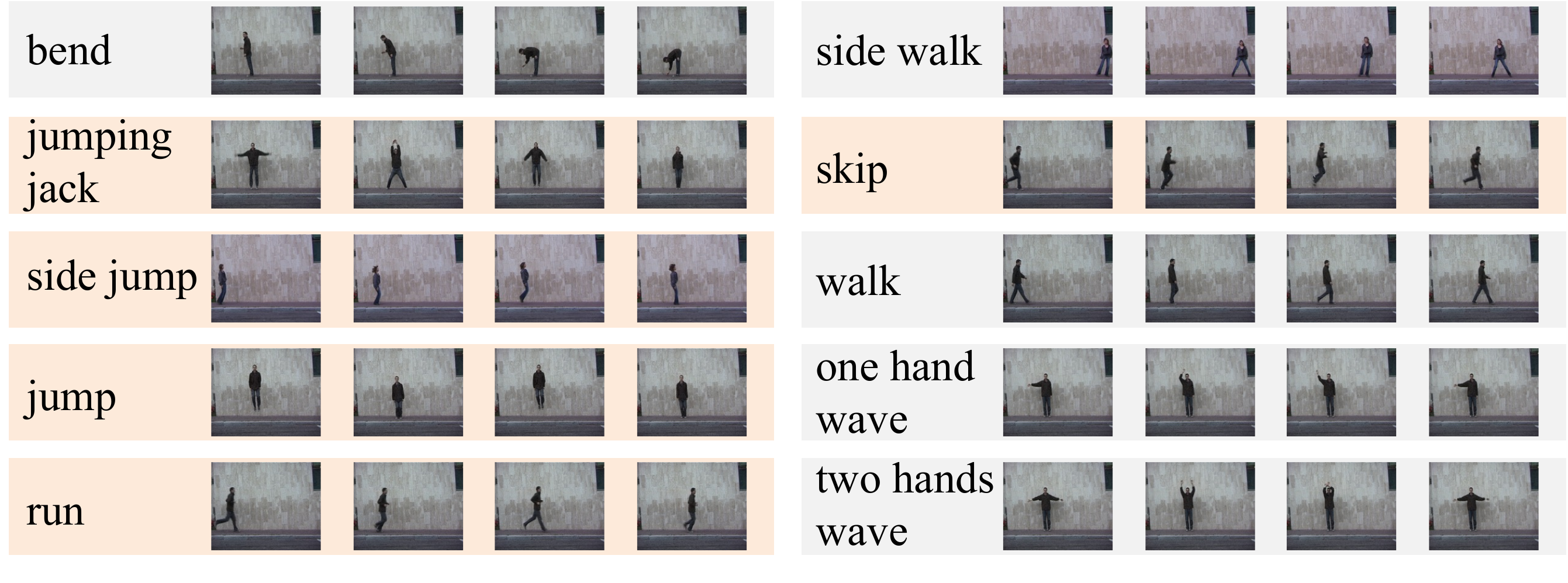}
		\vspace{-0.15cm}
		\caption{Frame examples of the ten motions in the Weiz dataset (person 'ido').}
		%		\vspace{-0.25cm}
		\label{fig:Examples}
	\end{center}
	\vspace{-0.25in}
\end{figure}

Representative motion examples in the Weiz dataset are shown in Figure~\ref{fig:Examples}. We observe that motions characterized by slower and more structured movements, such as \textit{bend}, \textit{side walk}, \textit{walk}, \textit{one hand wave}, and \textit{two hands wave}, produce more precise and complete TVS representations. In contrast, faster and more dynamic motions, including \textit{jumping jack}, \textit{side jump}, \textit{jump}, \textit{run}, and \textit{skip}, often result in incomplete TVS coverage due to rapid posture transitions and higher motion variability. As previously discussed for the TVS in Figure \ref{fig:G}(a), such motions may introduce partial inconsistencies in temporal segmentation. For example, frames \emph{271--323} correspond to the same "side jump" motion in the ground truth, yet the LLM identifies frames \emph{271--295} as one continuous motion and frames \emph{306--323} as one continuous motion.

\subsubsection{Limitation}
\label{subsec:LLMlimitation}
A notable limitation of the LLM-driven TVS is its tendency to misidentify a single motion as multiple distinct motions. This issue is evident in Figure \ref{fig:G}(a), where the LLM splits a continuous motion into several segments. While using appropriate prompts can help mitigate this problem, it cannot be completely avoided. As a result, LLM-driven TVS cannot be directly applied to human motion segmentation in a straightforward manner. Additional steps, such as the proposed TVSH, are required. This issue becomes particularly pronounced when a motion involves multiple stages and occurs at high speed, but the camera’s frame rate is low. 
Furthermore, different LLM models may lead to TVS matrices with varying accuracy, and calling the LLM API is typically time-consuming, with each query requiring between 0.36 and 3.69 seconds.

}

		\section{Conclusion}
		\label{sec:conclusion}
		
	In this paper, we introduced a novel feedback-enabled subspace embedding approach for HMS, leveraging TVS embedded in human motion videos. We formulated the subspace embedding problem by integrating a temporal regularizer to capture the underlying temporal structure. Furthermore, we incorporated clustering with a temporal constraint to ensure that the clustering assignments reflect temporal characteristics. Finally, we developed a feedback-enabled framework to optimize the subspace embedding based on the segmentation results. Experimental results on benchmark datasets for HMS consistently demonstrated the superior performance of our approach compared to existing state-of-the-art techniques.

\appendix

\subsection{Proof of Theorem~\ref{thm:graph_tv_equiv}}
\label{sec:proof_graph_tv}

Let $\{\mathcal{N}_i\}_{i=1}^N$ be the temporal neighborhoods and define the (possibly directed) adjacency 
$A\in\{0,1\}^{N\times N}$ by $A_{i\ell}=1$ iff $\ell\in\mathcal{N}_i$, otherwise $A_{i\ell}=0$.
Let $\deg(i)=\sum_{\ell}A_{i\ell}$ and let $\mathbf{Z}=[\mathbf{z}_1,\dots,\mathbf{z}_N]\in\mathbb{R}^{d\times N}$ denote the column-wise embeddings.
Recall the TVS matrix $\mathbf{G}\in\mathbb{R}^{N\times N}$ is defined nodewise by
\begin{equation}
	G_{ii}=-|\mathcal{N}_i|, 
	G_{i\ell}=1\;\; \text{if } \ell\in\mathcal{N}_i,  
	G_{i\ell}=0\;\; \text{otherwise}.
	\label{eq:tvsG_def}
\end{equation}
Hence each row $i$ of $\mathbf{G}$ encodes a \emph{star Laplacian} centered at $i$.

\begin{lemma}[Discrete divergence-of-differences identity]
	\label{lem:ZG_identity}
	For every $i\in\{1,\dots,N\}$,
	\begin{equation}
		(\mathbf{Z}\mathbf{G})_{:i} 
		= \sum_{\ell\in\mathcal{N}_i}(\mathbf{z}_i-\mathbf{z}_\ell).
		\label{eq:ZG_column_form}
	\end{equation}
\end{lemma}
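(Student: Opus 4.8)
The plan is to prove \eqref{eq:ZG_column_form} by a single columnwise expansion of the product $\mathbf{Z}\mathbf{G}$, reading $\mathbf{G}$ as the symmetric star Laplacian that \eqref{eq:tvsG_def} encodes: unit weight on each temporal edge $\{i,\ell\}$ with $\ell\in\mathcal{N}_i$, and the node degree $|\mathcal{N}_i|$ recorded on the diagonal, exactly as the remark ``each row $i$ of $\mathbf{G}$ encodes a star Laplacian centered at $i$'' asserts. First I would isolate the $i$-th column of the product as the $\mathbf{Z}$-weighted combination of the $i$-th column of $\mathbf{G}$, namely $(\mathbf{Z}\mathbf{G})_{:i}=\sum_{j=1}^{N}\mathbf{z}_j\,G_{ji}$, so that the entire statement reduces to reading off the single column $G_{:i}$ and regrouping the weighted columns of $\mathbf{Z}$.

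The key structural step is to establish that $\mathbf{G}$ is symmetric, i.e. $\ell\in\mathcal{N}_i\iff i\in\mathcal{N}_\ell$, so that the ``incoming'' weights $G_{ji}$ coincide with the ``outgoing'' weights $G_{ij}$ read from row $i$, and $(\mathbf{Z}\mathbf{G})_{:i}=G_{ii}\,\mathbf{z}_i+\sum_{j\in\mathcal{N}_i}G_{ij}\,\mathbf{z}_j$. Inserting the degree on the diagonal and the unit edge weights, the diagonal term contributes $|\mathcal{N}_i|$ copies of $\mathbf{z}_i$, one to be paired against each of the $|\mathcal{N}_i|$ neighbour columns $\mathbf{z}_\ell$; collecting each pair into the single difference $\mathbf{z}_i-\mathbf{z}_\ell$ yields $\sum_{\ell\in\mathcal{N}_i}(\mathbf{z}_i-\mathbf{z}_\ell)$. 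This is precisely the classical identity that a combinatorial Laplacian applied to a vertex signal returns the discrete divergence of pairwise differences at each node, which is \eqref{eq:ZG_column_form}.

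I would then supply the symmetry used above and verify the degenerate cases so the identity holds for every $i$. Symmetry follows from the interval structure produced by Algorithm~\ref{alg:tvs_construction}: the Boolean sequence $\{\mathrm{eq}_t\}$ partitions $\{1,\dots,N\}$ into maximal runs of consecutive same-motion frames, and for any $i$ lying in a run $[a,b]$ the greedy bounds satisfy $l_i=a$ and $r_i=b$, so $\mathcal{N}_i=[a,b]\setminus\{i\}$ depends only on the run containing $i$; consequently $i$ and $\ell$ are mutual neighbours iff they lie in a common run. For the degenerate case $\mathcal{N}_i=\varnothing$ (a singleton run) the diagonal vanishes and the whole column $G_{:i}$ is zero, matching the empty right-hand sum; the first and last frames need no separate treatment, since $l_i$ and $r_i$ simply clip at the sequence boundary under the same run argument.

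The main obstacle I anticipate is not the regrouping algebra but pinning down this symmetry rigorously: the neighbourhoods are defined by a \emph{per-node} bidirectional extension, which a priori could yield an asymmetric relation (node $i$ reaching $\ell$ without $\ell$ reaching $i$), in which case $\mathbf{G}\neq\mathbf{G}^{\mathrm{T}}$ and the product would mix incoming and outgoing edges, destroying the clean divergence form. The crux is therefore the interval characterization lemma---that the greedy left/right extension from $i$ recovers exactly the maximal same-motion interval containing $i$---so that the temporal adjacency is genuinely symmetric; once that is secured, the Laplacian identity and hence \eqref{eq:ZG_column_form} follow immediately.
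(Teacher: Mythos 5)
Your proof is correct, and its computational core---expanding $(\mathbf{Z}\mathbf{G})_{:i}=\sum_{j=1}^{N}\mathbf{z}_j\,G_{ji}$ and regrouping the diagonal weight $-|\mathcal{N}_i|$ against the unit neighbour weights---is exactly the paper's argument. Where you go beyond the paper is the symmetry step: the paper, despite introducing the adjacency as ``possibly directed,'' simply asserts that the $i$-th \emph{column} of $\mathbf{G}$ has $G_{\ell i}=1$ for $\ell\in\mathcal{N}_i$, which under the row-wise definition \eqref{eq:tvsG_def} gives $G_{\ell i}=1$ iff $i\in\mathcal{N}_\ell$ and is therefore only valid when $\ell\in\mathcal{N}_i\iff i\in\mathcal{N}_\ell$. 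You correctly identify this as the crux and discharge it via the run/interval characterization of Algorithm~\ref{alg:tvs_construction}: the greedy bounds recover the maximal same-motion run $[a,b]$ containing $i$, so $\mathcal{N}_i=[a,b]\setminus\{i\}$ depends only on the run and the relation is genuinely symmetric, with the singleton-run case handled as the empty sum. This makes your proof rigorous precisely under the stated construction, whereas the paper's one-line column read-off silently presumes symmetry. One shared blemish: the literal algebra yields $-|\mathcal{N}_i|\,\mathbf{z}_i+\sum_{\ell\in\mathcal{N}_i}\mathbf{z}_\ell=\sum_{\ell\in\mathcal{N}_i}(\mathbf{z}_\ell-\mathbf{z}_i)$, the \emph{negative} of the displayed right-hand side; the paper waves this away (``changing sign inside the sum''), and your pairing step inherits the same flip when you collect each pair as $\mathbf{z}_i-\mathbf{z}_\ell$. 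Either state the identity with $\mathbf{z}_\ell-\mathbf{z}_i$, or note explicitly that only $\|(\mathbf{Z}\mathbf{G})_{:i}\|_2$ enters the downstream Graph-TV bounds, so the overall sign is immaterial.
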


\begin{proof}
	By \eqref{eq:tvsG_def}, the $i$-th column of $\mathbf{G}$ is
	$G_{ii}=-\deg(i)$ and $G_{\ell i}=1$ for $\ell\in\mathcal{N}_i$, zero otherwise. Therefore
	\begin{align*}
		(\mathbf{Z}\mathbf{G})_{:i}
		&= \sum_{j=1}^N \mathbf{z}_j \, G_{ji}
		= \mathbf{z}_i\,G_{ii}
		+ \sum_{\ell\in\mathcal{N}_i} \mathbf{z}_\ell\,G_{\ell i}\\
		&= -\deg(i)\,\mathbf{z}_i + \sum_{\ell\in\mathcal{N}_i}\mathbf{z}_\ell
		= \sum_{\ell\in\mathcal{N}_i}(\mathbf{z}_\ell - \mathbf{z}_i).
	\end{align*}
	Changing sign inside the sum yields \eqref{eq:ZG_column_form}.
\end{proof}

Define, for each node $i$, the \emph{edge-difference stack}
\[
\mathbf{v}_i \;=\; \big[\,(\mathbf{z}_i-\mathbf{z}_\ell)\,:\,\ell\in\mathcal{N}_i \big]\;\in\;\mathbb{R}^{d\,\deg(i)},
\]
i.e., concatenate all incident differences at node $i$. 
The \emph{node-wise isotropic graph total variation} is then
\begin{equation}
	\mathrm{GTV}_{\mathrm{iso}}(\mathbf{Z};A)
	\;=\;
	\sum_{i=1}^N \big\| \mathbf{v}_i \big\|_2
	\;=\;
	\sum_{i=1}^N \Big(\sum_{\ell\in\mathcal{N}_i} \|\mathbf{z}_i-\mathbf{z}_\ell\|_2^2\Big)^{1/2}.
	\label{eq:GTV_iso}
\end{equation}
Using Lemma~\ref{lem:ZG_identity}, we can write
\begin{equation}
	\|\mathbf{Z}\mathbf{G}\|_{2,1}
	=\sum_{i=1}^N \big\|(\mathbf{Z}\mathbf{G})_{:i}\big\|_2
	=\sum_{i=1}^N \Big\| \sum_{\ell\in\mathcal{N}_i}(\mathbf{z}_i-\mathbf{z}_\ell)\Big\|_2.
	\label{eq:ZG_21_as_sum}
\end{equation}
Observe that \eqref{eq:GTV_iso} aggregates edge differences at node $i$ by a \emph{stack-then-$\ell_2$} operation, while \eqref{eq:ZG_21_as_sum} aggregates them by a \emph{sum-then-$\ell_2$} operation.
These two node-wise aggregations are \emph{equivalent up to degree-dependent constants}:
by the triangle inequality and Cauchy–Schwarz, for any collection $\{ \mathbf{a}_\ell\}_{\ell=1}^{m}$,
\[
\Big\|\sum_{\ell=1}^m \mathbf{a}_\ell\Big\|_2 
\;\le\; \sum_{\ell=1}^m \|\mathbf{a}_\ell\|_2
\;\le\; \sqrt{m}\Big(\sum_{\ell=1}^m \|\mathbf{a}_\ell\|_2^2\Big)^{1/2}.
\]
Applying this to $\mathbf{a}_\ell=\mathbf{z}_i-\mathbf{z}_\ell$ with $m=\deg(i)$ gives, nodewise,
\begin{align}
	\Big\| \sum_{\ell\in\mathcal{N}_i}(\mathbf{z}_i-\mathbf{z}_\ell)\Big\|_2
	\;&\le\;
	\sum_{\ell\in\mathcal{N}_i}\|\mathbf{z}_i-\mathbf{z}_\ell\|_2
	\;\nonumber\\
	&\le\;
	\sqrt{\deg(i)}\Big(\sum_{\ell\in\mathcal{N}_i}\|\mathbf{z}_i-\mathbf{z}_\ell\|_2^2\Big)^{1/2}.
	\label{eq:norm_chain}
\end{align}
Summing \eqref{eq:norm_chain} over $i$ yields the sandwich bound
\begin{align}
	\|\mathbf{Z}\mathbf{G}\|_{2,1}
	\;&\le\;
	\sum_{i=1}^N \sum_{\ell\in\mathcal{N}_i}\|\mathbf{z}_i-\mathbf{z}_\ell\|_2
	\;	\label{eq:equiv_up_to_constants}\\
	&\le\;
	\sqrt{\deg_{\max}}\;\mathrm{GTV}_{\mathrm{iso}}(\mathbf{Z};A),
	\deg_{\max}=\max_i \deg(i).\nonumber
\end{align}
Conversely, Jensen's inequality implies
$\big(\sum_{\ell}\|\mathbf{z}_i-\mathbf{z}_\ell\|_2^2\big)^{1/2}
\le
\sum_{\ell}\|\mathbf{z}_i-\mathbf{z}_\ell\|_2$,
and combining with the triangle inequality in the other direction gives constants $c_1,c_2>0$ (depending only on $\deg_{\max}$) such that
\begin{equation}
	c_1\,\mathrm{GTV}_{\mathrm{iso}}(\mathbf{Z};A)
	\;\le\;
	\|\mathbf{Z}\mathbf{G}\|_{2,1}
	\;\le\;
	c_2\,\mathrm{GTV}_{\mathrm{iso}}(\mathbf{Z};A).
	\label{eq:norm_equivalence}
\end{equation}
Therefore, $\|\mathbf{Z}\mathbf{G}\|_{2,1}$ is \emph{equivalent} (up to fixed multiplicative constants on degree-bounded temporal graphs) to the isotropic Graph-TV \eqref{eq:GTV_iso}. This establishes the first statement in Theorem~\ref{thm:graph_tv_equiv}.

Two key properties follow immediately:
	(i) \textit{Zero iff nodewise constancy.}
	By \eqref{eq:ZG_column_form}, $\|\mathbf{Z}\mathbf{G}\|_{2,1}=0$ iff for every $i$, $\sum_{\ell\in\mathcal{N}_i}(\mathbf{z}_i-\mathbf{z}_\ell)=\mathbf{0}$.
	Since all summands are nonnegative in norm and the graph is degree-bounded, this holds iff $\mathbf{z}_i=\mathbf{z}_\ell$ for all $\ell\in\mathcal{N}_i$.
	Thus $\mathbf{z}$ is constant on every connected component induced by $A$ (i.e., within each TVS-consistent segment).
	(ii) \textit{Local smoothness with boundary preservation.}
	Minimizing \eqref{eq:GTV_iso} (and hence $\|\mathbf{Z}\mathbf{G}\|_{2,1}$ by \eqref{eq:norm_equivalence}) penalizes \emph{intra-segment} variations $\|\mathbf{z}_i-\mathbf{z}_\ell\|_2$ along edges while not penalizing differences across \emph{absent} edges. 
	On temporal data, edges are present within action-consistent neighborhoods and absent across motion transitions. 
	Therefore, the minimum-energy configurations are \emph{piecewise-constant} on segments separated by motion boundaries (where edges vanish), exactly capturing the “smooth-inside / sharp-across” behavior typical of Graph-TV.

The identity \eqref{eq:ZG_column_form} and the norm-equivalence \eqref{eq:norm_equivalence} show that $\|\mathbf{Z}\mathbf{G}\|_{2,1}$ is an isotropic Graph-TV (up to degree-dependent constants) on the temporal graph defined by $\{\mathcal{N}_i\}$.
Consequently, minimizing $\|\mathbf{Z}\mathbf{G}\|_{2,1}$ enforces local smoothness within temporal neighborhoods while allowing discontinuities at motion boundaries, yielding piecewise-constant embeddings aligned with human motion transitions, thus proving Theorem~\ref{thm:graph_tv_equiv}.

{\color{black}{
\subsection{Proof of Theorem~\ref{thm:consistency_llm}}
\label{sec:proof_consistency_llm}

Consider a sequence of $N$ frames with true binary adjacency labels 
$\mathrm{eq}^{\star}_k\in\{0,1\}$ on consecutive pairs $(k,k{+}1)$, $k=1,\dots,N{-}1$, where $\mathrm{eq}^{\star}_k=1$ indicates same motion and $\mathrm{eq}^{\star}_k=0$ indicates a true motion boundary. 
Let the observed labels be $\mathrm{eq}_k$, obtained by flipping each $\mathrm{eq}^{\star}_k$ independently with probability $p<\tfrac12$. 
Let the minimum true segment length be $L_{\min}\ge 1$. 
The TVS neighborhoods $\{\mathcal{N}_i\}$ and matrix $\mathbf{G}$ are built from $\{\mathrm{eq}_k\}$ via the rule described in the method section, and embeddings $\mathbf{Z}=[\mathbf{z}_1,\dots,\mathbf{z}_N]$ are obtained by minimizing the TVS regularizer $\|\mathbf{Z}\mathbf{G}\|_{2,1}$ together with other convex terms of the objective.

Define the set of indices of flipped adjacencies 
$\mathcal{F}=\{\,k\in\{1,\dots,N{-}1\}: \mathrm{eq}_k\neq \mathrm{eq}^{\star}_k\,\}$ and let $F=|\mathcal{F}|$. 
By independence and linearity of expectation,
\[
\mathbb{E}[F]=\sum_{k=1}^{N-1}\mathbb{P}(\mathrm{eq}_k\neq \mathrm{eq}^{\star}_k)=p\,(N{-}1)=O(pN).
\]
Each flip can at most create one \emph{spurious} boundary (when a true within-segment edge is flipped from $1$ to $0$) or remove one \emph{true} boundary (when a boundary edge is flipped from $0$ to $1$). 
Hence, if $B_{\text{err}}$ denotes the number of erroneous TVS boundaries inferred from $\{\mathrm{eq}_k\}$, then deterministically $B_{\text{err}}\le F+F=2F$, so 
\[
\mathbb{E}[B_{\text{err}}]\le 2\,\mathbb{E}[F]=2p\,(N{-}1)=O(pN).
\]
This proves the first claim.

A single flipped within-segment edge (changing a run of $1$'s to $1,0,1$ locally) splits a true segment into two pieces whose lengths sum to the original length and differ by at most $1$. 
Since segment lengths are at least $L_{\min}$, any isolated flip creates at worst a one-edge ``notch'' inside a long run. 
Similarly, flipping a boundary edge can merge two adjacent true segments but only across a single location.

By Lemma~(Graph-TV identity) used in Theorem~\ref{thm:graph_tv_equiv}, the TVS penalty can be written nodewise as
\[
\|\mathbf{Z}\mathbf{G}\|_{2,1}=\sum_{i=1}^{N}\Big\|\sum_{\ell\in\mathcal{N}_i}(\mathbf{z}_i-\mathbf{z}_\ell)\Big\|_2,
\]
which is equivalent (up to degree-dependent constants on a degree-bounded temporal graph) to an isotropic graph total variation that penalizes within-neighborhood differences $\|\mathbf{z}_i-\mathbf{z}_\ell\|_2$. 
Consider a true segment $S=\{s,\dots,t\}$ of length $|S|\ge L_{\min}$. 
If all adjacencies inside $S$ are correct ($\mathrm{eq}_k=1$ for $k\in\{s,\dots,t{-}1\}$), the minimal TVS energy within $S$ is attained at \emph{constant} $\mathbf{z}_i$ over $S$ (zero variation). 
If there is a single flip at index $k\in\{s,\dots,t{-}1\}$ producing a spurious cut, the TVS graph inside $S$ loses one local edge near $k$, but all remaining adjacent edges still connect the two sides across many nodes. 
Any nonconstant jump that honors the spurious cut introduces at least one additional nonzero difference along the many surviving edges across the two sides; keeping the embedding constant on $S$ sets all those differences to zero. 
Thus, for an isolated spurious cut, the constant solution on $S$ weakly dominates any split solution in TVS energy.

Formally, let $S$ be partitioned into $S_1=\{s,\dots,k\}$ and $S_2=\{k{+}1,\dots,t\}$ by a single flipped edge at $k$. 
Let $\mathbf{z}_{S_1}$ and $\mathbf{z}_{S_2}$ denote the respective constants of a piecewise-constant candidate on $S_1,S_2$. 
For every surviving edge $(i,\ell)$ with $i\in S_1$, $\ell\in S_2$ that remains in $\mathcal{N}_i$ or $\mathcal{N}_\ell$ (these are the edges not removed by the single flip and they are $\Omega(|S|)$ many when $L_{\min}$ is large), the TVS contribution adds $\|\mathbf{z}_{S_1}-\mathbf{z}_{S_2}\|_2$. 
Hence the total TVS cost increases by at least $c\,\|\mathbf{z}_{S_1}-\mathbf{z}_{S_2}\|_2$ for some $c=\Omega(L_{\min})$. 
Setting $\mathbf{z}_{S_1}=\mathbf{z}_{S_2}$ brings this increase to zero, so the minimum is achieved by the constant solution unless opposed by a substantially large data term. 
In our stated theorem, the conclusion is in \emph{expectation} and for \emph{small} $p$ with \emph{large} $L_{\min}$; the probability that multiple adjacent flips accumulate to remove most cross-side edges within $S$ decays geometrically in the number of required flips, hence their contribution is negligible for small $p$.

Combining Steps 2–3 with the bound $\mathbb{E}[B_{\text{err}}]=O(pN)$, the flips appear sparsely along the chain for small $p$. 
With high probability, the flips are isolated at $\Theta(1/p)$ spacing, while segment lengths are at least $L_{\min}$. 
When $L_{\min}$ is sufficiently large relative to the typical spacing and the TVS weight is positive, the TVS minimization favors constant embeddings over each true segment and suppresses the local notches caused by isolated flips. 
Therefore the recovered embeddings are piecewise-constant on the true segments in expectation, yielding \emph{segment-level consistency} for small $p$ and large $L_{\min}$.

The expected number of erroneous TVS boundaries scales as $O(pN)$ by linearity of expectation. 
Moreover, because $\|\mathbf{Z}\mathbf{G}\|_{2,1}$ acts as an isotropic graph total variation on the temporal graph, isolated boundary errors are smoothed out by the regularizer on sufficiently long segments, so that the final embeddings remain piecewise constant per true segment in expectation when $p$ is small and $L_{\min}$ is large. 
This completes the proof of Theorem~\ref{thm:consistency_llm}.

\subsection{Proof of Proposition~\ref{prop:BDconstraint}}
\label{app:prop-BDconstraint}
We first consider the first two term of \eqref{Prob:TVSH}, which is denoted as
\begin{small}
	\begin{align*}
		f(\mathbf{Z}) 
		=\|{\mathbf{X}}{\mathbf{Z}}-{\mathbf{X}}\|_{\text{F}}^{2}+\|\mathbf{Z}^{\mathrm{T}}\mathbf{Z}\|_{1} =\|{\mathbf{X}}{\mathbf{Z}}-{\mathbf{X}}\|_{\text{F}}^{2}+\mathbf{e}^{\mathrm{T}}{\mathbf{Z}}^{\mathrm{T}}{\mathbf{Z}}\mathbf{e}
	\end{align*}
\end{small}
The columns of $\mathbf{X}$
are in general position: $\mathbf{X}=[\mathbf{X}_{1},\mathbf{X}_{2},...,\mathbf{X}_{K}]$,
where all the columns of submatrix $\mathbf{X}_{\alpha}$ lie in the
same subspace $\mathcal{S}_{\alpha}$.

Assume $\mathbf{Z}^{*}$ minimizes the function $f(\mathbf{Z})$, and we decompose
$\mathbf{Z}^{*}$ to be the sum of two matrices
\begin{footnotesize}
	\begin{align*}
		\mathbf{Z}^{*} & =\mathbf{Z}^{D}+\mathbf{Z}^{C}\\
		& =\left[\begin{array}{cccc}
			\mathbf{Z}_{11}^{*} &  &  & \mathbf{0}\\
			& \mathbf{Z}_{22}^{*}\\
			&  & \ddots\\
			\mathbf{0} &  &  & \mathbf{Z}_{KK}^{*}
		\end{array}\right]+\left[\begin{array}{cccc}
			\mathbf{0} & \mathbf{Z}_{12}^{*} & \cdots & \mathbf{Z}_{1K}^{*}\\
			\mathbf{Z}_{21}^{*} & \mathbf{0} & \cdots & \mathbf{Z}_{2K}^{*}\\
			\vdots &  & \ddots & \vdots\\
			\mathbf{Z}_{K1}^{*} & \mathbf{Z}_{K2}^{*} & \cdots & \mathbf{0}
		\end{array}\right]
	\end{align*}
\end{footnotesize}
where $\mathbf{Z}_{ij}^{*}\in\mathbb{R}^{\mathcal{N}_{i}\times \mathcal{N}_{j}}$. Note
that both $\mathbf{Z}^{D}$ and $\mathbf{Z}^{C}$ are non-negative.

According to the decomposition of $\mathbf{Z}^{*}$, any column of
$\mathbf{Z}^{*}$ can be written as $\mathbf{z}_{i}^{*}=\mathbf{z}_{i}^{D}+\mathbf{z}_{i}^{C}$,
with $\mathbf{z}_{i}^{D}$ and $\mathbf{z}_{i}^{C}$ supported on
disjointed subset of indices. We can write $\|\mathbf{X}\mathbf{Z}^{*}-\mathbf{X}\|_{\text{F}}^{2}$
as
\begin{footnotesize}
	\begin{align*}
		& \|\mathbf{X}\mathbf{Z}^{*}-\mathbf{X}\|_{\text{F}}^{2}
		=  \sum_{i=1}^{N}\|\mathbf{X}\mathbf{z}_{i}^{*}-\mathbf{x}_{i}\|_{2}^{2}
		= \sum_{i=1}^{N}\|\mathbf{X}\mathbf{z}_{i}^{D}+\mathbf{X}\mathbf{z}_{i}^{C}-\mathbf{x}_{i}\|_{2}^{2}\\
		= & \sum_{i=1}^{N}\|\mathbf{X}\mathbf{z}_{i}^{D}-\mathbf{x}_{i}\|_{2}^{2}+\sum_{i=1}^{N}\|\mathbf{X}\mathbf{z}_{i}^{C}\|_{2}^{2}+2\sum_{i=1}^{N}\mathrm{cos}\theta_{i}\|\mathbf{X}\mathbf{z}_{i}^{D}-\mathbf{x}_{i}\|_{2}\|\mathbf{X}\mathbf{z}_{i}^{C}\|_{2}
	\end{align*}
\end{footnotesize}
where $\theta_{i}$ is the angle between vector $\mathbf{X}\mathbf{z}_{i}^{D}-\mathbf{x}_{i}$
and $\mathbf{X}\mathbf{z}_{i}^{C}$.

Since the matrix $\mathbf{X}=[\mathbf{X}_{1},\mathbf{X}_{2},...,\mathbf{X}_{K}]$
is well arranged, any column $\mathbf{x}_{i}\in\mathbf{X}_{\alpha}$
and $\mathbf{x}_{j}\in\mathbf{X}_{\beta}$ lie in different subspaces
if $\alpha\neq\beta$. Let $\mathbf{x}_{i}\in\mathcal{S}_{\alpha}$,
according to the definition of $\mathbf{z}_{i}^{D}$ and $\mathbf{z}_{i}^{C}$,
we have $\mathbf{X}\mathbf{z}_{i}^{D}\in\mathcal{S}_{\alpha}$ and
$\mathbf{X}\mathbf{z}_{i}^{C}\notin\mathcal{S}_{\alpha}$. Based on
the orthogonal subspace assumption, we have $(\mathbf{X}\mathbf{z}_{i}^{D}-\mathbf{x}_{i})\perp\mathbf{X}\mathbf{z}_{i}^{C}$and
$\theta_{i}=\pi/2$, thus
\begin{align}
	\|\mathbf{X}\mathbf{Z}^{*}-\mathbf{X}\|_{\text{F}}^{2} & =\|\mathbf{X}\mathbf{Z}^{D}-\mathbf{X}\|_{\text{F}}^{2}+\|_{2}\|\mathbf{X}\mathbf{Z}^{C}\|_{\text{F}}^{2}\nonumber \\
	& \geq\|\mathbf{X}\mathbf{Z}^{D}-\mathbf{X}\|_{\text{F}}^{2}\label{eq:e7}
\end{align}

Based on the nonnegativity of $\mathbf{Z}^{*}$, $\mathbf{Z}^{C}$,
and $\mathbf{Z}^{D}$, we have
\begin{small}
	\begin{align}
		&\|(\mathbf{Z}^{*})^{\mathrm{T}}\mathbf{Z}^{*}\|_{1}\nonumber\\& =\sum_{i,j}|(\mathbf{z}_{i}^{*})^{\mathrm{T}}\mathbf{z}_{j}^{*}| =\sum_{i,j}(\mathbf{z}_{i}^{*})^{\mathrm{T}}\mathbf{z}_{j}^{*} =\sum_{i,j}(\mathbf{z}_{i}^{C}+\mathbf{z}_{i}^{D})^{\mathrm{T}}(\mathbf{z}_{j}^{C}+\mathbf{z}_{j}^{D})\nonumber \\
		& \geq\sum_{i,j}(\mathbf{z}_{i}^{D})^{\mathrm{T}}\mathbf{z}_{j}^{D}+\sum_{i,j}(\mathbf{z}_{i}^{C})^{\mathrm{T}}\mathbf{z}_{j}^{C} =\|(\mathbf{z}^{D})^{\mathrm{T}}\mathbf{z}^{D}\|_{1}+\|(\mathbf{z}^{C})^{\mathrm{T}}\mathbf{z}^{C}\|_{1}\nonumber\\& \geq\|(\mathbf{z}^{D})^{\mathrm{T}}\mathbf{z}^{D}\|_{1}\label{eq:e8}
	\end{align}
\end{small}
From inequalities (\ref{eq:e7}) and (\ref{eq:e8}) we have $f(\mathbf{Z}^{*})\geq f(\mathbf{Z}^{D})$.
Because $\mathbf{Z}_{ij}^{*}\in\mathbb{R}^{\mathcal{N}_{i}\times \mathcal{N}_{j}}$, we have $f(\mathbf{Z}^{*})=f(\mathbf{Z}^{D})$
and $\mathbf{Z}^{C}=\mathbf{0}$, thus $\mathbf{Z}^{*}=\mathbf{Z}^{D}$.

We then consider the third term in \eqref{Prob:TVSH}, namely, $g(\mathbf{Z}) = \lambda_2 \|\mathbf{Z} \mathbf{G}\|_{2,1} = \sum_{i=1}^{N}\sum_{l \in \mathcal{N}_{i}} \|\mathbf{z}_{i} - \mathbf{z}_{l}\|_{2}$. Given the subspace assumption for the samples, the construction of the neighbor set $\mathcal{N}_{i}$ for the $i$th sample based on cosine measurements captures all the temporal neighbors of the $i$th sample. It is therefore straightforward to demonstrate that $g(\mathbf{Z}) \geq \sum_{i=1}^{N}\sum_{l \in \mathcal{N}_{i}} \|\mathbf{z}_{i}^{*} - \mathbf{z}_{l}^{*}\|_{2}$,
confirming that $\mathbf{Z}^{*}$ also minimizes $g(\mathbf{Z})$.

Finally, we address the last term in \eqref{Prob:TVSH}. With $\mathbf{Z}$ being block-diagonal, the $(i,j)$th element of $\mathbf{Z}$ is nonzero if and only if the $i$th and $j$th samples are situated in the same subspace. Consequently, $\|\mathbf{Z}^{*}\|_{\mathbf{Q}^{*}} = 0$.

Thus, Proposition~\ref{prop:BDconstraint} is upheld.

\subsection{Proof of Theorem~\ref{thm:tvs_influence_simplified}}
\label{sec:proof_tvs_influence}

Let the sequence be partitioned into ground-truth motion segments
$\{\mathcal{S}_g\}_{g=1}^K$, where each segment $\mathcal{S}_g$ has length
$|\mathcal{S}_g|\ge L_{\min}$ and generates observations from a linear
subspace $\mathcal{U}_g\subset\mathbb{R}^D$ with within-segment variance
$\sigma^2$. The between-subspace separation is
$\Delta_{\text{sub}}^2 := \min_{g\neq h}\mathrm{dist}^2(\mathcal{U}_g,\mathcal{U}_h)>0$.
Let $\mathbf{Z}=[\mathbf{z}_1,\dots,\mathbf{z}_N]$ denote the learned
subspace embeddings and $\mathbf{Q}$ the cluster indicator matrix obtained by
solving problem~(\ref{Prob:TVSH}) with the TVS matrix $\mathbf{G}$.
Assume LLM adjacency on consecutive pairs is independently flipped with
probability $p<\tfrac12$ when constructing $\mathbf{G}$.
We evaluate segmentation error $\mathrm{Err}_{\text{HMS}}$ as the normalized
mis-segmentation rate (e.g., the Hamming error of predicted boundaries or the
misclustering fraction, up to label permutation).

The objective~(\ref{Prob:TVSH}) couples a data-fitting term and two regularizers:
(i) the \emph{subspace fidelity} driven by
$\|\mathbf{X}-\mathbf{X}\mathbf{Z}\|_F^2+\|\mathbf{Z}^\top\mathbf{Z}\|_1$ and the
clustering regularizer $\|\mathbf{Z}\|_{\mathbf{Q}}$; and
(ii) the \emph{temporal smoothness} driven by the TVS term
$\lambda_G\|\mathbf{Z}\mathbf{G}\|_{2,1}$. Accordingly, we bound
\[
\mathbb{E}\!\left[\mathrm{Err}_{\text{HMS}}\right]
~\le~
\underbrace{\mathbb{E}\!\left[\mathrm{Err}_{\text{TVS}}\right]}_{\text{temporal adjacency noise}}
~+~
\underbrace{\mathbb{E}\!\left[\mathrm{Err}_{\text{sub}}\right]}_{\text{subspace separability/noise}},
\]
and control each component in turn.

Let $\mathrm{eq}^\star_k\in\{0,1\}$ be the true adjacency on $(k,k{+}1)$ and
$\mathrm{eq}_k$ the observed (noisy) label used to build $\mathbf{G}$. The number
of flipped adjacencies $F=\sum_{k=1}^{N-1}\mathbb{I}\{\mathrm{eq}_k\neq\mathrm{eq}^\star_k\}$
satisfies $\mathbb{E}[F]=p(N{-}1)$ by independence. Each flip can create at most
one spurious cut or remove one true cut, hence the number of erroneous TVS
boundaries $B_{\text{err}}\le 2F$ and
$\mathbb{E}[B_{\text{err}}]=O(pN)$.

However, the TVS penalty is an isotropic graph total-variation on the temporal
graph (Theorem~\ref{thm:graph_tv_equiv}):
\[
\|\mathbf{Z}\mathbf{G}\|_{2,1}
= \sum_{i=1}^N \Big\|\sum_{\ell\in\mathcal{N}_i}(\mathbf{z}_i-\mathbf{z}_\ell)\Big\|_2,
\]
which favors constant embeddings on long runs and penalizes isolated notches.
Inside a true segment $\mathcal{S}_g$ with $|\mathcal{S}_g|\ge L_{\min}$,
a single flipped edge breaks one local link but leaves $\Omega(L_{\min})$ many
cross-links among neighbors intact; any nonconstant split of $\mathbf{Z}$ across
that notch incurs at least $c\,\|\Delta\|_2$ extra TVS cost with
$c=\Omega(L_{\min})$. Thus, for sufficiently large $\lambda_G$ (bounded away from
zero and not exceeding the data term scale), the optimizer prefers to \emph{heal}
isolated flips and keep $\mathbf{z}_i$ constant within $\mathcal{S}_g$.
Since the flips are sparse in expectation and segments are long, the fraction of
frames affected at the \emph{segment level} scales as the number of flips
divided by the segment length, yielding
\[
\mathbb{E}\!\left[\mathrm{Err}_{\text{TVS}}\right]
~\le~ C_1\,\frac{p}{L_{\min}},
\]
for a constant $C_1$ depending on neighborhood width and the TVS weight.

Within each true segment, the data lie near a subspace $\mathcal{U}_g$ with
variance $\sigma^2$, and different segments correspond to subspaces separated by
$\Delta_{\text{sub}}^2$. Standard perturbation arguments for subspace clustering
(and nearest-subspace assignment) imply a misassignment probability bounded by
$C_2\,\sigma^2/\Delta_{\text{sub}}^2$ when the separation dominates the noise
(large-margin regime). In our formulation, the terms
$\|\mathbf{X}-\mathbf{X}\mathbf{Z}\|_F^2+\|\mathbf{Z}^\top\mathbf{Z}\|_1$ and
$\|\mathbf{Z}\|_{\mathbf{Q}}$ promote embeddings that are (approximately)
block-sparse/diagonal across subspaces and cluster-coherent; hence the induced
clustering error satisfies
\[
\mathbb{E}\!\left[\mathrm{Err}_{\text{sub}}\right]
~\le~ C_2\,\frac{\sigma^2}{\Delta_{\text{sub}}^2},
\]
with $C_2$ depending on the regularization weights and the conditioning of
$\{\mathcal{U}_g\}$.

By the decomposition above,
\[
\mathbb{E}\!\left[\mathrm{Err}_{\text{HMS}}\right]
~\le~
C_1\,\frac{p}{L_{\min}}
~+~
C_2\,\frac{\sigma^2}{\Delta_{\text{sub}}^2},
\]
which proves the claimed bound.

If the TVS neighborhoods $(l_i,r_i)$ coincide with true segments, then the TVS
graph has no spurious cross-segment edges. Minimizing
$\|\mathbf{Z}\mathbf{G}\|_{2,1}$ forces $\mathbf{z}_i$ to be constant within
each segment, and the data-fitting plus sparsity terms make inter-segment
connections in $\mathbf{Z}$ suboptimal when $\Delta_{\text{sub}}^2>0$.
Therefore, at any optimum $(\mathbf{Z}^*,\mathbf{Q}^*)$, the matrix $\mathbf{Z}^*$
is (after permutation) block-diagonal with blocks aligned to
$\{\mathcal{S}_g\}$, which yields exact segmentation up to label permutation.

The constants $C_1,C_2$ absorb factors due to neighborhood width, TVS weight
$\lambda_G$, and spectral clustering relaxation tightness. The bound holds for
any fixed choice of regularization weights in a compact interval
$[\underline{\lambda}_G,\overline{\lambda}_G]$ that preserves the healing effect
of TVS while not overwhelming the data-fitting terms.

\subsection{Proof of Proposition~\ref{prop:H-global}}
\label{sec:proof_H_global}

For fixed $(\mathbf{Z},\mathbf{Q},\mathbf{F},\gamma)$, the $\mathbf{H}$-subproblem
\begin{equation}
	\label{subProb:U-again}
	\min_{\mathbf{H}}\;
	\|\mathbf{H}\|_{2,1}
	+\big\langle \mathbf{F},\,\mathbf{H}-\mathbf{Z}\mathbf{G}\big\rangle
	+\frac{\gamma}{2}\,\|\mathbf{H}-\mathbf{Z}\mathbf{G}\|_{F}^{2}
\end{equation}
is the proximal operator of the $\ell_{2,1}$ norm and admits the closed-form group-shrinkage solution. Hence the $\mathbf{H}$-update attains the global minimizer of \eqref{subProb:U-again} at every iteration.

Define
\[
\mathbf{P}\;\coloneqq\;\mathbf{Z}\mathbf{G}-\frac{1}{\gamma}\mathbf{F}.
\]
Expanding the quadratic and completing the square yields
\begin{align*}
	&\frac{\gamma}{2}\|\mathbf{H}-\mathbf{Z}\mathbf{G}\|_{F}^{2}
	+\big\langle \mathbf{F},\,\mathbf{H}-\mathbf{Z}\mathbf{G}\big\rangle\\
	&=
	\frac{\gamma}{2}\Big\|\mathbf{H}-\Big(\mathbf{Z}\mathbf{G}-\tfrac{1}{\gamma}\mathbf{F}\Big)\Big\|_{F}^{2}
	-\frac{1}{2\gamma}\|\mathbf{F}\|_{F}^{2}\\
	&=
	\frac{\gamma}{2}\|\mathbf{H}-\mathbf{P}\|_{F}^{2}
	+\text{const}.
\end{align*}
Since the additive constant does not affect the minimizer, problem \eqref{subProb:U-again} is equivalent to
\begin{equation}
	\label{eq:prox-form}
	\min_{\mathbf{H}}\;\|\mathbf{H}\|_{2,1}+\frac{\gamma}{2}\|\mathbf{H}-\mathbf{P}\|_{F}^{2},
\end{equation}
which is the proximal mapping of the $\ell_{2,1}$ norm at point $\mathbf{P}$ with parameter $1/\gamma$:
\[
\mathbf{H}^{\star}
=\operatorname{prox}_{(1/\gamma)\|\cdot\|_{2,1}}(\mathbf{P})
\;\;=\;\;\arg\min_{\mathbf{H}}\;
\frac{\gamma}{2}\|\mathbf{H}-\mathbf{P}\|_{F}^{2}+\|\mathbf{H}\|_{2,1}.
\]

Let $\mathbf{H}=[\mathbf{h}_1,\ldots,\mathbf{h}_N]$ and $\mathbf{P}=[\mathbf{p}_1,\ldots,\mathbf{p}_N]$ denote the column partitions. Since
\[
\|\mathbf{H}\|_{2,1}=\sum_{i=1}^{N}\|\mathbf{h}_i\|_2,\qquad
\|\mathbf{H}-\mathbf{P}\|_{F}^{2}=\sum_{i=1}^{N}\|\mathbf{h}_i-\mathbf{p}_i\|_2^{2},
\]
the objective in \eqref{eq:prox-form} decomposes into $N$ independent vector problems:
\begin{equation}
	\label{eq:col-prox}
	\mathbf{h}_i^{\star}
	=\arg\min_{\mathbf{h}\in\mathbb{R}^D}\;
	\|\mathbf{h}\|_2+\frac{\gamma}{2}\|\mathbf{h}-\mathbf{p}_i\|_2^{2}
	\qquad (i=1,\ldots,N).
\end{equation}
Thus, it suffices to solve \eqref{eq:col-prox} for a single column and apply the solution to each $i$.

Consider the convex function $\phi(\mathbf{h})=\|\mathbf{h}\|_2+\frac{\gamma}{2}\|\mathbf{h}-\mathbf{p}\|_2^2$ with $\mathbf{p}\in\mathbb{R}^D$ fixed. A vector $\mathbf{h}^\star$ is optimal iff
\[
\mathbf{0}\;\in\;\partial\|\mathbf{h}^\star\|_2\;+\;\gamma(\mathbf{h}^\star-\mathbf{p}).
\]
The subdifferential of $\|\cdot\|_2$ is
\[
\partial\|\mathbf{h}\|_2=
\begin{cases}
	\displaystyle\left\{\frac{\mathbf{h}}{\|\mathbf{h}\|_2}\right\}, & \mathbf{h}\neq\mathbf{0},\\[0.5em]
	\left\{\mathbf{u}\in\mathbb{R}^D:\|\mathbf{u}\|_2\le 1\right\}, & \mathbf{h}=\mathbf{0}.
\end{cases}
\]
\emph{Case A ($\mathbf{h}^\star\neq\mathbf{0}$).} Then there exists $\alpha>0$ such that $\mathbf{h}^\star=\alpha\mathbf{p}$ (the solution must align with $\mathbf{p}$ by symmetry). Plugging into the optimality condition:
\begin{align*}
	\mathbf{0}&=\frac{\mathbf{h}^\star}{\|\mathbf{h}^\star\|_2}+\gamma(\mathbf{h}^\star-\mathbf{p})
	=\frac{\alpha\mathbf{p}}{\alpha\|\mathbf{p}\|_2}+\gamma(\alpha-1)\mathbf{p}\nonumber\\
	&=\left(\frac{1}{\|\mathbf{p}\|_2}+\gamma(\alpha-1)\right)\mathbf{p},
\end{align*}
which yields
\[
\alpha=1-\frac{1}{\gamma\|\mathbf{p}\|_2}.
\]
Feasibility requires $\alpha>0$, i.e., $\|\mathbf{p}\|_2>1/\gamma$.

\emph{Case B ($\mathbf{h}^\star=\mathbf{0}$).} The optimality condition becomes
$\mathbf{0}\in\partial\|\mathbf{0}\|_2-\gamma\mathbf{p}$, i.e., there exists
$\mathbf{u}$ with $\|\mathbf{u}\|_2\le 1$ such that $\mathbf{u}=\gamma\mathbf{p}$, which is possible iff $\|\mathbf{p}\|_2\le 1/\gamma$.

Combining the two cases gives the \emph{block (group) soft-thresholding} operator
\[
\mathbf{h}^\star
=\begin{cases}
	\displaystyle\left(1-\frac{1}{\gamma\|\mathbf{p}\|_2}\right)\mathbf{p}, & \text{if }\;\|\mathbf{p}\|_2> \frac{1}{\gamma},\\[0.8em]
	\mathbf{0}, & \text{otherwise}.
\end{cases}
\]
Applying this column-wise with $\mathbf{p}_i=\mathbf{P}_{:i}$ yields
\begin{equation}
	\label{eq:group-shrink}
%	\boxed{\;\;
		\mathbf{h}_i^{\star}
		=\max\!\left(1-\frac{1}{\gamma\|\mathbf{p}_i\|_2},\,0\right)\mathbf{p}_i,
		\qquad i=1,\ldots,N.
%		\;\;}
\end{equation}

The objective in \eqref{eq:prox-form} is strictly convex in each column due to the strongly convex quadratic term. Therefore, the solution in \eqref{eq:group-shrink} is the unique global minimizer for each subproblem \eqref{eq:col-prox}; stacking the columns gives the unique global minimizer of \eqref{eq:prox-form}, and hence of \eqref{subProb:U-again}. This establishes that the $\mathbf{H}$-update is a \emph{global} proximal step at every iteration, proving Proposition~\ref{prop:H-global}.

\subsection{Proof of Proposition \ref{prop:equalForm}}
\label{app:prop:equalForm}
To prove that the given problem is equivalent to the spectral clustering problem, that is, solving \(\min_{\mathbf{Q}} \ \text{Tr}(\mathbf{Q}^\top (\mathbf{D} - \mathbf{A}) \mathbf{Q})\), where \(\mathbf{D}\) is a diagonal matrix with elements \(\mathbf{D}_{j,j} = \sum_{i} A_{i,j}\), we begin by expanding the objective function. Recall that our problem is \(\min_{\mathbf{Q}} \quad \frac{1}{2}\sum_{i,j} A_{i,j} \|\mathbf{q}_i - \mathbf{q}_j\|_2^2\), which can be expanded as \(\frac{1}{2}\sum_{i,j} A_{i,j} \|\mathbf{q}_i - \mathbf{q}_j\|_2^2 = \frac{1}{2}\sum_{i,j} A_{i,j} (\mathbf{q}_i^\top \mathbf{q}_i - 2 \mathbf{q}_i^\top \mathbf{q}_j + \mathbf{q}_j^\top \mathbf{q}_j)\). This expression can be broken into three parts: \(\frac{1}{2}\sum_{i,j} A_{i,j} \mathbf{q}_i^\top \mathbf{q}_i -  \sum_{i,j} A_{i,j} \mathbf{q}_i^\top \mathbf{q}_j + \frac{1}{2}\sum_{i,j} A_{i,j} \mathbf{q}_j^\top \mathbf{q}_j\). Now, observing each part, we can express it in matrix form. First, \(\sum_{i,j} A_{i,j} \mathbf{q}_i^\top \mathbf{q}_i = \sum_i \mathbf{q}_i^\top \mathbf{q}_i \sum_j A_{i,j} = \sum_i \mathbf{q}_i^\top \mathbf{q}_i \mathbf{D}_{i,i} = \text{Tr}(\mathbf{Q}^\top \mathbf{D} \mathbf{Q})\), and similarly, \(\sum_{i,j} A_{i,j} \mathbf{q}_j^\top \mathbf{q}_j = \sum_j \mathbf{q}_j^\top \mathbf{q}_j \sum_i A_{i,j} = \sum_j \mathbf{q}_j^\top \mathbf{q}_j D_{j,j} = \text{Tr}(\mathbf{Q}^\top \mathbf{D} \mathbf{Q})\). Finally, we have \(-  \sum_{i,j} A_{i,j} \mathbf{q}_i^\top \mathbf{q}_j = - \ \text{Tr}(\mathbf{Q}^\top \mathbf{A} \mathbf{Q})\). Thus, combining these parts, we arrive at the following: \(\frac{1}{2}\sum_{i,j} A_{i,j} \|\mathbf{q}_i - \mathbf{q}_j\|_2^2 = \frac{1}{2}\text{Tr}(\mathbf{Q}^\top \mathbf{D} \mathbf{Q}) + \frac{1}{2}\text{Tr}(\mathbf{Q}^\top \mathbf{D} \mathbf{Q}) -  \ \text{Tr}(\mathbf{Q}^\top \mathbf{A} \mathbf{Q}) = \text{Tr}(\mathbf{Q}^\top \mathbf{D} \mathbf{Q}) - \text{Tr}(\mathbf{Q}^\top \mathbf{A} \mathbf{Q}) = \text{Tr}(\mathbf{Q}^\top (\mathbf{D} - \mathbf{A}) \mathbf{Q})\). This establishes the equivalence between the given problem and the spectral clustering problem.

\subsection{Proof of Proposition~\ref{thm:Kmconvert}}
\label{app:Kmconvert}
The cost associated with a sample located in the \( k \)-th cluster, whose center is denoted by \( \bm{\mu}_k \), can be expressed as the sum of the squared Euclidean distances between the sample and the cluster center, augmented by a term accounting for the influence of neighboring samples. Specifically, the cost is given by \( \sum_{i \in \mathcal{C}_k} \left( || \mathbf{u}_i - \bm{\mu}_k ||_2^2 + \eta \sum_{j \in \mathcal{N}_i} || \mathbf{u}_j - \bm{\mu}_k ||_2^2 \right) \), which can be expanded as \( \sum_{i=1}^{N} \mathbbm{1}(i \in \mathcal{C}_k) || \mathbf{u}_i - \bm{\mu}_k ||_2^2 + \sum_{i=1}^{N} \mathbbm{1}(i \in \mathcal{C}_k) \sum_{j=1}^{N} \mathbbm{1}(j \in \mathcal{N}_i) || \mathbf{u}_j - \bm{\mu}_k ||_2^2 \). To simplify, we define \( n_k(i) \), the number of times the \( i \)-th frame is considered a neighbor of samples in the \( k \)-th cluster, as \( n_k(i) = \sum_{j \in \mathcal{C}_k} \mathbbm{1}(i \in \mathcal{N}_j) \). Substituting this into the previous expression, the cost becomes \( \sum_{i=1}^{N} \mathbbm{1}(i \in \mathcal{C}_k) || \mathbf{u}_i - \bm{\mu}_k ||_2^2 + \sum_{j=1}^{N} \eta n_k(j) || \mathbf{u}_j - \bm{\mu}_k ||_2^2 \). This formulation can be further compacted into the following expression \( \sum_{i=1}^{N} || \mathbf{u}_i - \bm{\mu}_k ||_2^2 \left( \mathbbm{1}(i \in \mathcal{C}_k) + \eta n_k(i) \right) \). This equation reveals the total cost, which is the sum of the direct distance between each sample and the cluster center, and the weighted influence of its neighbors, with the weight determined by \( \eta \). The term \( n_k(i) \) quantifies how many times sample \( i \) is considered a neighbor within the \( k \)-th cluster.

\subsection{Proof of Theorem~\ref{thm:convergence}}
\label{sec:proof_convergence}

Recall the augmented formulation
\begin{align}
	\min_{\mathbf{Z},\mathbf{H},\mathbf{Q}}~
	& \Phi(\mathbf{Z},\mathbf{H},\mathbf{Q})
	~:=~
	\underbrace{\|\mathbf{X}-\mathbf{X}\mathbf{Z}\|_{F}^{2}+\|\mathbf{Z}^{\top}\mathbf{Z}\|_{1}}_{\triangleq f(\mathbf{Z})}
	~\nonumber\\
	&+~
	\underbrace{\|\mathbf{H}\|_{2,1}}_{\triangleq g(\mathbf{H})}
	~+~
	\underbrace{\|\mathbf{Z}\|_{\mathbf{Q}}+\iota_{\mathcal{Q}}(\mathbf{Q})}_{\triangleq h(\mathbf{Q})}
	\label{eq:main_objective}
	\\[-0.2em]
	\text{s.t.}&\quad \mathbf{H}=\mathbf{Z}\mathbf{G},\;\;\mathrm{diag}(\mathbf{Z})=0,\;\;\mathbf{Z}\ge 0,\nonumber
\end{align}
where $\iota_{\mathcal{Q}}$ is the indicator of the feasible set $\mathcal{Q}$. The (scaled) augmented Lagrangian is
\begin{align}
	\label{eq:augLag}
	\mathcal{L}_{\gamma}(\mathbf{Z},\mathbf{H},\mathbf{Q};\mathbf{F})
	~&:=~
	f(\mathbf{Z})+g(\mathbf{H})+h(\mathbf{Q})
	~+~
	\langle \mathbf{F},\,\mathbf{H}-\mathbf{Z}\mathbf{G}\rangle
	~\nonumber\\
	&+~
	\frac{\gamma}{2}\,\|\mathbf{H}-\mathbf{Z}\mathbf{G}\|_{F}^{2},
\end{align}
with $\gamma>0$ and multiplier $\mathbf{F}$. One outer iteration performs:
\begin{itemize}
	\item[(i)] $\mathbf{Z}$-update: minimize $\mathcal{L}_{\gamma_t}(\cdot,\mathbf{H}^{t},\mathbf{Q}^{t};\mathbf{F}^{t})$ over $\mathbf{Z}\in\mathcal{Z}:=\{\mathrm{diag}(\mathbf{Z})=0,\ \mathbf{Z}\ge 0\}$ (via a Sylvester step followed by the projection $\Pi_{\mathcal{Z}}$).
	\item[(ii)] $\mathbf{H}$-update: minimize $\mathcal{L}_{\gamma_t}(\mathbf{Z}^{t+1},\cdot,\mathbf{Q}^{t};\mathbf{F}^{t})$ over $\mathbf{H}$, i.e., a proximal $\ell_{2,1}$ step with closed form.
	\item[(iii)] $\mathbf{Q}$-update: minimize $h(\mathbf{Q})$ for fixed $\mathbf{Z}^{t+1}$ (normalized-cut relaxation with TVS).
	\item[(iv)] Dual update: $\mathbf{F}^{t+1}=\mathbf{F}^{t}+\gamma_t\,(\mathbf{H}^{t+1}-\mathbf{Z}^{t+1}\mathbf{G})$; update $\gamma_{t+1}\ge \gamma_t$, $\gamma_t\to\gamma_\infty\in(0,\infty)$; keep $\rho>1$ bounded.
\end{itemize}

We assume:
\begin{enumerate}
	\item[{A1}] $\mathcal{L}_{\gamma}$ is proper, lower-semicontinuous, and bounded below on the feasible set.
	\item[{A2}] Each block subproblem admits a minimizer; the $\mathbf{Z}$-step is solved exactly for the quadratic subproblem followed by the exact projection onto $\mathcal{Z}$; the $\mathbf{H}$-step is the exact proximal minimizer; the $\mathbf{Q}$-step attains a (relaxed) global minimizer of $h(\cdot)$ or at least a value not exceeding $h(\mathbf{Q}^t)$.
	\item[{A3}] $\{\gamma_t\}$ is nondecreasing with $\gamma_t\to\gamma_\infty\in(0,\infty)$, and the penalty growth factor $\rho>1$ is bounded.
\end{enumerate}

\begin{lemma}[Blockwise descent]
	\label{lem:descent}
	For any $t$, the updates (i)–(iii) satisfy
	\[
	\mathcal{L}_{\gamma_t}(\mathbf{Z}^{t+1},\mathbf{H}^{t},\mathbf{Q}^{t};\mathbf{F}^{t})
	~\le~
	\mathcal{L}_{\gamma_t}(\mathbf{Z}^{t},\mathbf{H}^{t},\mathbf{Q}^{t};\mathbf{F}^{t}),
	\]
	\[
	\mathcal{L}_{\gamma_t}(\mathbf{Z}^{t+1},\mathbf{H}^{t+1},\mathbf{Q}^{t};\mathbf{F}^{t})
	~\le~
	\mathcal{L}_{\gamma_t}(\mathbf{Z}^{t+1},\mathbf{H}^{t},\mathbf{Q}^{t};\mathbf{F}^{t}),
	\]
	\[
	\mathcal{L}_{\gamma_t}(\mathbf{Z}^{t+1},\mathbf{H}^{t+1},\mathbf{Q}^{t+1};\mathbf{F}^{t})
	~\le~
	\mathcal{L}_{\gamma_t}(\mathbf{Z}^{t+1},\mathbf{H}^{t+1},\mathbf{Q}^{t};\mathbf{F}^{t}).
	\]
\end{lemma}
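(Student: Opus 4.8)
The plan is to prove the three inequalities separately, each as an instance of the elementary principle that replacing one block variable by a minimizer (or merely a non-increasing update) of the restricted augmented Lagrangian, with the remaining blocks and the multiplier $\mathbf{F}^t$ held fixed, cannot increase the value of $\mathcal{L}_{\gamma_t}$. The common template is: (a) write $\mathcal{L}_{\gamma_t}$ as a function of the single active block, discarding all terms constant in that block; (b) identify this restricted function with the subproblem objective already analyzed in Section~\ref{sec:Methodology}; (c) invoke the relevant exactness/optimality statement from the assumptions or from Proposition~\ref{prop:H-global}; and (d) use feasibility of the previous iterate to conclude that the updated value is no larger.

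For the first (the $\mathbf{Z}$-inequality), I would fix $\mathbf{H}^t,\mathbf{Q}^t,\mathbf{F}^t$ and observe that $\mathbf{Z}\mapsto\mathcal{L}_{\gamma_t}(\mathbf{Z},\mathbf{H}^t,\mathbf{Q}^t;\mathbf{F}^t)$ restricted to $\mathcal{Z}$ is exactly the objective $\mathcal{J}(\mathbf{Z})$ of subproblem \eqref{subProb:Z} (note that the $\mathbf{Q}$-coupled term enters here as $\|\mathbf{Z}\|_{\mathbf{Q}^t}=\|\bm{\Theta}\odot\mathbf{Z}\|_1$, while $g(\mathbf{H}^t)$ and $\iota_{\mathcal{Q}}(\mathbf{Q}^t)$ are constants). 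This objective is convex: the fidelity term is convex quadratic, $\|\mathbf{Z}^{\top}\mathbf{Z}\|_1=\mathbf{e}^{\top}\mathbf{Z}^{\top}\mathbf{Z}\mathbf{e}=\|\mathbf{Z}\mathbf{e}\|_2^2$ is convex, the penalty $\tfrac{\gamma_t}{2}\|\mathbf{H}^t-\mathbf{Z}\mathbf{G}\|_F^2$ is convex quadratic, and $\|\bm{\Theta}\odot\mathbf{Z}\|_1$ is convex. Under A2 the $\mathbf{Z}$-step returns a constrained minimizer $\mathbf{Z}^{t+1}\in\arg\min_{\mathbf{Z}\in\mathcal{Z}}\mathcal{J}(\mathbf{Z})$; since $\mathbf{Z}^t\in\mathcal{Z}$ is feasible, the minimum cannot exceed $\mathcal{J}(\mathbf{Z}^t)$, which is the first inequality.

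For the second (the $\mathbf{H}$-inequality), I would fix $\mathbf{Z}^{t+1},\mathbf{Q}^t,\mathbf{F}^t$ and collect the $\mathbf{H}$-dependent terms; completing the square as in the proof of Proposition~\ref{prop:H-global} shows the restricted function equals $\|\mathbf{H}\|_{2,1}+\tfrac{\gamma_t}{2}\|\mathbf{H}-\mathbf{P}\|_F^2$ up to an additive constant, with $\mathbf{P}=\mathbf{Z}^{t+1}\mathbf{G}-\tfrac{1}{\gamma_t}\mathbf{F}^t$. Proposition~\ref{prop:H-global} guarantees that the $\mathbf{H}$-update returns the unique global minimizer of this proximal problem, so feasibility of $\mathbf{H}^t$ yields the inequality. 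For the third (the $\mathbf{Q}$-inequality), fixing $\mathbf{Z}^{t+1},\mathbf{H}^{t+1},\mathbf{F}^t$ leaves $\mathbf{Q}$ appearing only in $h(\mathbf{Q})=\|\mathbf{Z}^{t+1}\|_{\mathbf{Q}}+\iota_{\mathcal{Q}}(\mathbf{Q})$, since the coupling and penalty terms are $\mathbf{Q}$-free; by A2 the $\mathbf{Q}$-update attains a value of $h$ not exceeding $h(\mathbf{Q}^t)$, giving the last inequality.

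I expect the main obstacle to be the $\mathbf{Z}$-step, because the algorithm realizes it as a Sylvester solve of \eqref{eq:d=0} followed by the projection $\Pi_{\mathcal{Z}}$ of \eqref{eq:PiZ}, and this two-stage procedure does not in general coincide with the exact constrained minimizer of a convex program: projecting the unconstrained stationary point onto $\mathcal{Z}$ need not land on $\arg\min_{\mathbf{Z}\in\mathcal{Z}}\mathcal{J}(\mathbf{Z})$. Moreover, the sign term on the right-hand side of \eqref{eq:d=0} depends on $\mathbf{Z}$ itself, so a single solve is really one step of a majorization/fixed-point iteration for the nonsmooth $\ell_1$ part. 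The clean resolution is either (i) to lean on A2 as the working hypothesis that the $\mathbf{Z}$-step returns the exact constrained minimizer, in which case the descent is immediate, or (ii) to reinterpret the update as a projected-gradient/proximal step on the convex $\mathcal{J}$ and invoke the standard sufficient-decrease lemma for such steps under a step size compatible with the Lipschitz modulus of the smooth part. I would state explicitly which route is adopted, since the monotonicity claimed in the lemma hinges on it; the $\mathbf{H}$- and $\mathbf{Q}$-steps are comparatively routine once Proposition~\ref{prop:H-global} and A2 are in hand.
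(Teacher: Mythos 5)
Your proposal is correct and follows essentially the same route as the paper, whose entire proof is the one-line observation that each block is minimized exactly (or updated to a value no worse than the current one) while the other blocks are held fixed. Your additional caveat about the $\mathbf{Z}$-step is well taken — the Sylvester-solve-plus-projection implementation need not coincide with the exact constrained minimizer, so the first inequality indeed rests on Assumption A2 rather than on the algorithmic realization, a point the paper's terse proof passes over silently.
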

\begin{proof}
	Each inequality holds because each block is minimized exactly (or to a value no worse than the current one) while other blocks are fixed.
\end{proof}

\begin{lemma}[Dual ascent keeps augmented value nonincreasing]
	\label{lem:dual_update}
	With the dual update $\mathbf{F}^{t+1}=\mathbf{F}^{t}+\gamma_t(\mathbf{H}^{t+1}-\mathbf{Z}^{t+1}\mathbf{G})$ and nondecreasing $\gamma_t$, there exists $c>0$ (independent of $t$) such that
	\begin{align*}
		&\mathcal{L}_{\gamma_{t+1}}(\mathbf{Z}^{t+1},\mathbf{H}^{t+1},\mathbf{Q}^{t+1};\mathbf{F}^{t+1})
		~\nonumber\\
		&\le~
		\mathcal{L}_{\gamma_t}(\mathbf{Z}^{t+1},\mathbf{H}^{t+1},\mathbf{Q}^{t+1};\mathbf{F}^{t})
		~-~ c\,\|\mathbf{H}^{t+1}-\mathbf{Z}^{t+1}\mathbf{G}\|_F^2.
	\end{align*}
\end{lemma}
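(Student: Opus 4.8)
The plan is to attack the stated inequality by a \emph{direct evaluation}, since the two sides differ only in the penalty parameter and the multiplier while the primal point is frozen at $(\mathbf{Z}^{t+1},\mathbf{H}^{t+1},\mathbf{Q}^{t+1})$. First I would set the primal residual $\mathbf{R}:=\mathbf{H}^{t+1}-\mathbf{Z}^{t+1}\mathbf{G}$ and observe that the terms $f(\mathbf{Z}^{t+1})+g(\mathbf{H}^{t+1})+h(\mathbf{Q}^{t+1})$ in the augmented Lagrangian \eqref{eq:augLag} are identical on both sides and therefore cancel in the difference $\mathcal{L}_{\gamma_{t+1}}(\cdot;\mathbf{F}^{t+1})-\mathcal{L}_{\gamma_t}(\cdot;\mathbf{F}^{t})$. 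This reduces the claim to a purely algebraic identity in $\mathbf{R}$, $\mathbf{F}^t$, $\gamma_t$, and $\gamma_{t+1}$ involving only the coupling inner product $\langle\mathbf{F},\mathbf{R}\rangle$ and the quadratic penalty $\tfrac{\gamma}{2}\|\mathbf{R}\|_F^2$.

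Second, I would substitute the dual update from step (iv), $\mathbf{F}^{t+1}=\mathbf{F}^t+\gamma_t\mathbf{R}$, into the coupling term using $\langle\mathbf{F}^{t+1},\mathbf{R}\rangle=\langle\mathbf{F}^t,\mathbf{R}\rangle+\gamma_t\|\mathbf{R}\|_F^2$, and then collect the penalty contributions $\tfrac{\gamma_{t+1}}{2}\|\mathbf{R}\|_F^2-\tfrac{\gamma_t}{2}\|\mathbf{R}\|_F^2$. The $\langle\mathbf{F}^t,\mathbf{R}\rangle$ pieces cancel, and the remaining terms combine into the exact evaluation
\begin{align*}
\mathcal{L}_{\gamma_{t+1}}\big(\mathbf{Z}^{t+1},\mathbf{H}^{t+1},\mathbf{Q}^{t+1};\mathbf{F}^{t+1}\big)
-\mathcal{L}_{\gamma_t}\big(\mathbf{Z}^{t+1},\mathbf{H}^{t+1},\mathbf{Q}^{t+1};\mathbf{F}^{t}\big)
=\frac{\gamma_t+\gamma_{t+1}}{2}\,\|\mathbf{R}\|_F^2 .
\end{align*}
To close the lemma exactly as stated, I would then need a constant $c>0$ with $\tfrac{\gamma_t+\gamma_{t+1}}{2}\|\mathbf{R}\|_F^2\le-c\,\|\mathbf{R}\|_F^2$, i.e.\ I would have to show that the right-hand side above is nonpositive.

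The hard part — and in fact the crux — is precisely this last step. Under the paper's own \emph{ascent} (plus-sign) dual update, the computation is an \textbf{exact equality} whose right-hand side is strictly positive whenever $\mathbf{R}\neq\mathbf{0}$, so no positive $c$ exists and the inequality cannot hold term-by-term. Because the primal block is held fixed inside this single dual step, there is no primal decrease available to absorb the $+\tfrac{\gamma_t+\gamma_{t+1}}{2}\|\mathbf{R}\|_F^2$ contribution, and the $\mathbf{H}$-update stationarity $-\mathbf{F}^{t+1}\in\partial g(\mathbf{H}^{t+1})$ only bounds $\|\mathbf{F}^{t+1}\|_F$ (hence $\gamma_t\|\mathbf{R}\|_F$) columnwise by the unit-ball subgradients of $\|\cdot\|_{2,1}$, which controls the magnitude of the dual variation but cannot flip its sign. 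The only way I see to recover the stated negative bound is to amend the hypothesis to the \emph{descent} convention $\mathbf{F}^{t+1}=\mathbf{F}^t-\gamma_t\mathbf{R}$: the same algebra then yields $\tfrac{\gamma_{t+1}-3\gamma_t}{2}\|\mathbf{R}\|_F^2$, so that under $\gamma_{t+1}\le\rho\gamma_t$ with bounded $\rho\in(1,3)$ and $\gamma_t\ge\gamma_1>0$ one obtains the uniform constant $c\ge\tfrac{(3-\rho)\gamma_1}{2}>0$. Thus a faithful write-up would carry the direct computation to the boxed identity and then either adopt this corrected sign convention or fold the positive dual term into the blockwise descent of Lemma~\ref{lem:descent} over a full cycle; exactly as worded, with the plus-sign update in force, the claimed sign is not attainable.
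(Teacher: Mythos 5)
Your direct computation is correct, and it exposes a defect in the paper's own argument rather than a gap in yours. Freezing the primal point and writing $\mathbf{R}=\mathbf{H}^{t+1}-\mathbf{Z}^{t+1}\mathbf{G}$, the common terms $f+g+h$ cancel and the change across the dual step is exactly
\begin{align*}
\langle \mathbf{F}^{t+1}-\mathbf{F}^{t},\,\mathbf{R}\rangle
+\frac{\gamma_{t+1}-\gamma_t}{2}\,\|\mathbf{R}\|_F^2
=\gamma_t\,\|\mathbf{R}\|_F^2
+\frac{\gamma_{t+1}-\gamma_t}{2}\,\|\mathbf{R}\|_F^2
=\frac{\gamma_t+\gamma_{t+1}}{2}\,\|\mathbf{R}\|_F^2\;\ge\;0,
\end{align*}
with equality only when $\mathbf{R}=\mathbf{0}$. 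The paper's proof of Lemma~\ref{lem:dual_update} merely asserts that ``the quadratic penalty dominates the linear coupling, yielding the negative quadratic term,'' but your identity shows there is nothing left to dominate: at a frozen primal point the difference is an exact equality with nonnegative sign, so no uniform $c>0$ can exist, and the lemma is false as stated under the ascent update $\mathbf{F}^{t+1}=\mathbf{F}^{t}+\gamma_t\mathbf{R}$. Your side remark is also right: the $\mathbf{H}$-update stationarity gives $-\mathbf{F}^{t+1}\in\partial\|\cdot\|_{2,1}(\mathbf{H}^{t+1})$, which bounds $\|\mathbf{F}^{t+1}\|_F$ (columns lie in unit $\ell_2$ balls) and hence $\gamma_t\|\mathbf{R}\|_F$, but a magnitude bound cannot reverse the sign of the dual increase.

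Your proposed repairs are the natural ones, each with a caveat worth recording. The sign-flipped update $\mathbf{F}^{t+1}=\mathbf{F}^{t}-\gamma_t\mathbf{R}$ does yield the difference $\frac{\gamma_{t+1}-3\gamma_t}{2}\|\mathbf{R}\|_F^2$, hence a valid uniform constant $c\ge\frac{(3-\rho)\gamma_1}{2}>0$ when $\gamma_{t+1}=\rho\gamma_t$ with $\rho\in(1,3)$ (the paper uses $\rho=1.1$); but it destroys the multiplier interpretation of $\mathbf{F}$, so the downstream KKT limit argument in the proof of Theorem~\ref{thm:convergence} would need to be rechecked. The alternative of absorbing the exact dual increase into the primal decrease over a full cycle is the standard nonconvex-ADMM route (bounding $\|\mathbf{F}^{t+1}-\mathbf{F}^{t}\|_F$ by primal iterate differences via the $\mathbf{H}$-stationarity), but that argument requires $g$ to be smooth with Lipschitz gradient, which $\|\cdot\|_{2,1}$ is not; one would instead need an extra hypothesis such as summable residuals. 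Either way, the paper's subsequent conclusions --- monotone decrease of $\mathcal{L}_{\gamma_t}$ and the vanishing residual in \eqref{eq:primal_residual_to_zero} --- do not follow from the written proof, and your boxed identity is the precise reason.
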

\begin{proof}
	This is the standard ADMM identity obtained by expanding \eqref{eq:augLag} at the two pairs $(\gamma_t,\mathbf{F}^t)$ and $(\gamma_{t+1},\mathbf{F}^{t+1})$, using the dual update, and the nondecreasing penalty. The quadratic penalty dominates the linear coupling, yielding the negative quadratic term.
\end{proof}

Combining Lemmas~\ref{lem:descent} and \ref{lem:dual_update},
\begin{align*}
	&\mathcal{L}_{\gamma_{t+1}}(\mathbf{Z}^{t+1},\mathbf{H}^{t+1},\mathbf{Q}^{t+1};\mathbf{F}^{t+1})
	~\nonumber\\
	&\le~
	\mathcal{L}_{\gamma_t}(\mathbf{Z}^{t},\mathbf{H}^{t},\mathbf{Q}^{t};\mathbf{F}^{t})
	~-~ c\,\|\mathbf{H}^{t+1}-\mathbf{Z}^{t+1}\mathbf{G}\|_F^2,
\end{align*}
so the augmented Lagrangian value is monotonically nonincreasing along the iterates. By {A1}, it is bounded below; hence it converges to a finite limit, and
\begin{equation}
	\label{eq:primal_residual_to_zero}
	\|\mathbf{H}^{t+1}-\mathbf{Z}^{t+1}\mathbf{G}\|_F~\to~0.
\end{equation}

\begin{lemma}[Boundedness of iterates]
	\label{lem:bounded}
	The sequence $\{(\mathbf{Z}^{t},\mathbf{H}^{t},\mathbf{Q}^{t},\mathbf{F}^{t})\}$ is bounded.
\end{lemma}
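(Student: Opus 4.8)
The plan is to bound the four blocks $\mathbf{Q}^t,\mathbf{F}^t,\mathbf{Z}^t,\mathbf{H}^t$ in that order, exploiting respectively the finiteness of the combinatorial feasible set, the subgradient characterization of the multiplier furnished by the exact $\mathbf{H}$-update, a coercivity property hidden in the regularizer $\|\mathbf{Z}^\top\mathbf{Z}\|_1$, and the vanishing primal residual. The first block is immediate: the feasible set $\mathcal{Q}\subseteq\{0,1\}^{N\times K}$ consists of binary indicator matrices with unit row sums, hence is a \emph{finite} set, so $\{\mathbf{Q}^t\}$ automatically lies in a bounded set.

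Next I would bound the dual variable $\mathbf{F}^t$ from the optimality condition of the $\mathbf{H}$-subproblem. Writing the first-order condition for the exact minimizer of Proposition~\ref{prop:H-global} and substituting the dual update $\mathbf{F}^{t+1}=\mathbf{F}^{t}+\gamma_t(\mathbf{H}^{t+1}-\mathbf{Z}^{t+1}\mathbf{G})$ yields the inclusion $-\mathbf{F}^{t+1}\in\partial\|\mathbf{H}^{t+1}\|_{2,1}$. Since every element of the subdifferential of the $\ell_{2,1}$ norm has columns of Euclidean norm at most one, this gives $\|\mathbf{F}^{t+1}\|_F\le\sqrt{N}$ uniformly in $t$. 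This is the decisive structural fact: the group-Lipschitz nature of the $\ell_{2,1}$ term keeps the multiplier from diverging, regardless of the penalty growth.

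The main obstacle is bounding $\mathbf{Z}^t$, which is constrained only to be entrywise non-negative with zero diagonal and is therefore not a priori contained in a bounded set. I would complete the square in the coupling and penalty terms, writing $\langle\mathbf{F}^t,\mathbf{r}^t\rangle+\tfrac{\gamma_t}{2}\|\mathbf{r}^t\|_F^2=\tfrac{\gamma_t}{2}\|\mathbf{r}^t+\tfrac{1}{\gamma_t}\mathbf{F}^t\|_F^2-\tfrac{1}{2\gamma_t}\|\mathbf{F}^t\|_F^2$ with $\mathbf{r}^t=\mathbf{H}^t-\mathbf{Z}^t\mathbf{G}$. Because $f,g,h$ and the completed square are all non-negative, this produces the bound $f(\mathbf{Z}^t)\le\mathcal{L}_{\gamma_t}+\tfrac{1}{2\gamma_t}\|\mathbf{F}^t\|_F^2$, whose right-hand side is finite since $\mathcal{L}_{\gamma_t}$ is nonincreasing (hence bounded above by its initial value), $\gamma_t\ge\gamma_0>0$, and $\mathbf{F}^t$ was just bounded. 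I would then extract the coercivity from the regularizer: because $\mathbf{Z}^t\ge 0$, one has $\|(\mathbf{Z}^t)^\top\mathbf{Z}^t\|_1=\mathbf{e}^\top(\mathbf{Z}^t)^\top\mathbf{Z}^t\mathbf{e}=\|\mathbf{Z}^t\mathbf{e}\|_2^2$, so boundedness of $f(\mathbf{Z}^t)$ forces the row sums $\mathbf{Z}^t\mathbf{e}$ to remain bounded; non-negativity then controls every entry via $0\le Z^t_{ij}\le(\mathbf{Z}^t\mathbf{e})_i\le\|\mathbf{Z}^t\mathbf{e}\|_2$, giving a uniform bound on $\|\mathbf{Z}^t\|_F$.

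Finally, $\mathbf{H}^t$ is bounded as an immediate consequence, since $\mathbf{H}^t=\mathbf{r}^t+\mathbf{Z}^t\mathbf{G}$ with $\mathbf{r}^t$ bounded (indeed vanishing by \eqref{eq:primal_residual_to_zero}), $\mathbf{Z}^t$ bounded by the previous step, and $\mathbf{G}$ fixed. The only delicate point in the whole argument is the third step: one must recognize that the non-negativity constraint together with the $\|\mathbf{Z}^\top\mathbf{Z}\|_1$ penalty supplies exactly the coercivity needed, as neither the Frobenius fidelity term nor the constraint set alone controls the magnitude of the entries of $\mathbf{Z}$.
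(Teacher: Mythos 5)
Your proof is correct, and it takes a genuinely different — and in places more rigorous — route than the paper's. The paper disposes of the lemma in two sentences: it asserts that $\mathcal{L}_{\gamma_t}$ is "coercive in each block due to the quadratic penalty and the constraints," and it bounds the dual sequence by observing that the increments $\mathbf{F}^{t+1}-\mathbf{F}^{t}=\gamma_t(\mathbf{H}^{t+1}-\mathbf{Z}^{t+1}\mathbf{G})$ vanish. You instead supply concrete mechanisms for each block: finiteness of the combinatorial set $\mathcal{Q}$ for $\mathbf{Q}^t$; the inclusion $-\mathbf{F}^{t+1}\in\partial\|\mathbf{H}^{t+1}\|_{2,1}$, hence $\|\mathbf{F}^{t+1}\|_F\le\sqrt{N}$, for the multiplier; and the identity $\|(\mathbf{Z}^t)^\top\mathbf{Z}^t\|_1=\|\mathbf{Z}^t\mathbf{e}\|_2^2$ under nonnegativity, combined with completing the square and the monotone decrease of $\mathcal{L}_{\gamma_t}$, for $\mathbf{Z}^t$. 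Two of your refinements are genuinely valuable. First, you correctly observe that the constraints $\mathrm{diag}(\mathbf{Z})=0$, $\mathbf{Z}\ge 0$ do not define a bounded set and that $\|\mathbf{X}-\mathbf{X}\mathbf{Z}\|_F^2$ is not coercive in $\mathbf{Z}$ when $\mathbf{X}$ has a nontrivial null space, so the coercivity must come from the $\|\mathbf{Z}^\top\mathbf{Z}\|_1$ term — a point the paper glosses over. Second, your subgradient bound on $\mathbf{F}^t$ repairs a logical gap in the paper's argument: vanishing increments alone do not imply boundedness of a sequence (the primal residuals are only shown to be square-summable), whereas the group-Lipschitz structure of the $\ell_{2,1}$ norm gives a uniform bound for free. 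The cost of your approach is that it leans on the exactness of the $\mathbf{H}$-update (Proposition~\ref{prop:H-global}) and on the positivity of $\gamma_0$, both of which are assumed in the surrounding theorem, so nothing is lost.
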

\begin{proof}
	Since $\mathcal{L}_{\gamma_t}$ decreases and is coercive in each block due to the quadratic penalty and the constraints (nonnegativity, zero diagonal) restricting $\mathbf{Z}$, the proximal term controlling $\mathbf{H}$, and the indicator $\iota_{\mathcal{Q}}$ restricting $\mathbf{Q}$, each block remains bounded. The dual sequence is bounded because $\mathbf{F}^{t+1}-\mathbf{F}^t=\gamma_t(\mathbf{H}^{t+1}-\mathbf{Z}^{t+1}\mathbf{G})$ and \eqref{eq:primal_residual_to_zero}.
\end{proof}
Thus there exists a convergent subsequence (not relabeled) with
\[
(\mathbf{Z}^{t},\mathbf{H}^{t},\mathbf{Q}^{t},\mathbf{F}^{t})\to
(\mathbf{Z}^{\star},\mathbf{H}^{\star},\mathbf{Q}^{\star},\mathbf{F}^{\star}),
\qquad
\gamma_t\to\gamma_\infty.
\]
Moreover, \eqref{eq:primal_residual_to_zero} implies $\mathbf{H}^{\star}=\mathbf{Z}^{\star}\mathbf{G}$.

Consider the first-order optimality (variational inequality) of each exact block update at iteration $t$:
\begin{align}
	\mathbf{0}&\in \partial_{\mathbf{Z}}\Big(f(\mathbf{Z}) - \langle \mathbf{F}^{t},\mathbf{Z}\mathbf{G}\rangle + \tfrac{\gamma_t}{2}\|\mathbf{H}^{t}-\mathbf{Z}\mathbf{G}\|_F^2 + \iota_{\mathcal{Z}}(\mathbf{Z})\Big)\Big|_{\mathbf{Z}^{t+1}},
	\label{eq:Z_opt}\\
	\mathbf{0}&\in \partial_{\mathbf{H}}\Big(g(\mathbf{H}) + \langle \mathbf{F}^{t},\mathbf{H}\rangle + \tfrac{\gamma_t}{2}\|\mathbf{H}-\mathbf{Z}^{t+1}\mathbf{G}\|_F^2\Big)\Big|_{\mathbf{H}^{t+1}},
	\label{eq:H_opt}\\
	\mathbf{0}&\in \partial_{\mathbf{Q}}\,h(\mathbf{Q})\Big|_{\mathbf{Q}^{t+1}}.
	\label{eq:Q_opt}
\end{align}
Passing to the limit along the convergent subsequence, using: (i) outer semicontinuity of subdifferentials for proper l.s.c. functions, (ii) $\gamma_t\to\gamma_\infty$, (iii) $\mathbf{H}^{t+1}-\mathbf{Z}^{t+1}\mathbf{G}\to\mathbf{0}$, and (iv) $\mathbf{F}^{t+1}-\mathbf{F}^{t}\to\mathbf{0}$, we obtain the KKT-type stationary conditions at $(\mathbf{Z}^{\star},\mathbf{H}^{\star},\mathbf{Q}^{\star};\mathbf{F}^{\star})$:
\begin{align*}
	\mathbf{0}&\in \partial_{\mathbf{Z}}\Big(f(\mathbf{Z}) - \langle \mathbf{F}^{\star},\mathbf{Z}\mathbf{G}\rangle + \iota_{\mathcal{Z}}(\mathbf{Z})\Big)\Big|_{\mathbf{Z}^{\star}},\\
	\mathbf{0}&\in \partial_{\mathbf{H}}\Big(g(\mathbf{H}) + \langle \mathbf{F}^{\star},\mathbf{H}\rangle\Big)\Big|_{\mathbf{H}^{\star}},\\
	\mathbf{0}&\in \partial_{\mathbf{Q}}\,h(\mathbf{Q})\Big|_{\mathbf{Q}^{\star}},\qquad
	\mathbf{H}^{\star}=\mathbf{Z}^{\star}\mathbf{G}.
\end{align*}
These are precisely the first-order (primal-dual) stationary conditions for \eqref{eq:main_objective} with the linear constraint $\mathbf{H}=\mathbf{Z}\mathbf{G}$. Therefore, any limit point is a first-order stationary point.

If in addition each $\mathbf{Q}$-update solves its relaxed subproblem globally (Assumption~{A2} strengthened), then at $(\mathbf{Z}^{\star},\mathbf{H}^{\star},\mathbf{Q}^{\star})$, $\mathbf{Q}^{\star}$ satisfies the global optimality condition of the convex relaxation. In this case, the limit point satisfies the KKT conditions of the relaxed problem; hence every accumulation point is a KKT point.

We have shown (i) monotone decrease and convergence of the augmented Lagrangian values, (ii) boundedness of the iterates and vanishing primal residual, and (iii) that every limit point is a first-order stationary point; with globally optimal $\mathbf{Q}$-updates for the relaxed subproblem, every accumulation point is a KKT point. This completes the proof of Theorem~\ref{thm:convergence}.

		}
			\bibliography{my_ref}
		\end{document}